\documentclass[11pt]{article}                 % 11pt 字号，article 文档类
\usepackage{geometry}                         % 控制页面边距
\usepackage{graphicx}                         % 插入图片
\usepackage{subcaption}                       % 子图环境 (subfigure, subtable)
\usepackage{xcolor}                           % \color{} \textcolor{颜色}{文字>{text}
\usepackage{mathtools}                        % 增强版 amsmath（自动加载 amsmath）
\usepackage{amssymb}                          % Extra math symbols (\mathbb, \mathcal)
\usepackage{amsthm}                           % Theorem, lemma, proof environments
\usepackage{bbm}                              % \mathbbm{1} 示信函数
\usepackage{dsfont}                           % \mathds{1} 左边加粗的 1
\usepackage{enumitem}                         % 自定义 enumerate/itemize 序号
\usepackage{natbib}                           % 作者-年份 \citet, \citep, \citealt
\usepackage[colorlinks,                       % 开启彩色链接
linkcolor=blue,                               % 目录、公式、图表交叉引用
citecolor=blue,                               % 文献引用
urlcolor=blue                                 % URL
]{hyperref}
\usepackage{cleveref}                         % 自动加Theorem等前缀，需在hyperref之后！
\crefname{equation}{}{}                       % \cref{} 公式加括号
\usepackage{algorithm}                        % 算法浮动体 (带标题和编号)
\usepackage{algorithmic}                      % 算法伪代码环境 (较老的语法)
\newtheorem{theorem}{Theorem}%[section]
\newtheorem{lemma}{Lemma}[section]
\newtheorem{proposition}{Proposition}%[section]
\newtheorem{corollary}{Corollary}[section]
\newtheorem{definition}{Definition}[section]
\newtheorem{remark}{Remark}[section]

\newtheorem{assumption}{Assumption}
\newcommand{\Var}{\mathrm{Var}}
\renewcommand{\tilde}{\widetilde}
\renewcommand{\hat}{\widehat}
\newcommand{\Xs}{\mathcal{X}}

\newcommand{\As}{\mathcal{A}}
\newcommand{\Ds}{\mathcal{D}}
\newcommand{\Ss}{\mathcal{S}}

\newcommand{\Is}{\mathcal{I}}
\newcommand{\Vs}{\mathcal{V}}
\newcommand{\Eb}{\mathbb{E}}
\newcommand{\Pb}{\mathbb{P}}
\newcommand{\Rb}{\mathbb{R}}
\newcommand{\drm}{\mathrm{d}}
\newcommand{\argmax}{\mathrm{argmax}}

\newcommand{\Bern}{\mathrm{Bern}}
\newcommand{\KL}{\mathrm{KL}}

\newcommand{\1}{\mathds{1}}
\newcommand{\Fis}{\widehat{F}_\mathrm{IS}}
\newcommand{\Fisc}{\widehat{F}_\mathrm{ISc}}
\newcommand{\Fwis}{\widehat{F}_\mathrm{WIS}}
\newcommand{\Fdr}{\widehat{F}_\mathrm{DR}}
\newcommand{\Fdrc}{\widehat{F}_\mathrm{DRc}}
\title{Pessimistic Risk-Aware Policy Learning in Contextual Bandits} %universal
\author{Yilong Wan, Yuqiang Li, Xianyi Wu}
\date{\today}

\begin{document}

\maketitle

\begin{abstract}

We study risk-aware offline policy learning, aiming to learn a decision rule from logged data that is optimal under general risk criteria. This problem is crucial in high-stakes domains where online interaction is infeasible and adverse outcomes must be carefully controlled. However, existing literature on offline contextual bandits either centers on expected-reward criteria or restricts risk considerations to policy evaluation instead of optimization. In this work, we propose a unified distributional framework for optimizing Lipschitz-continuous risk functionals, a broad class of risk measures encompassing mean-variance, entropic risk, and conditional value-at-risk, among others. 
By developing novel empirical concentration inequalities for importance sampling-based distributional estimators, our analysis derives data-dependent suboptimality bounds with an $\tilde{\mathcal{O}}(1/\sqrt{n})$ rate, without relying on restrictive uniform overlap assumptions. This rate is minimax optimal and matches that of risk-neutral offline policy optimization, indicating that optimizing general Lipschitz risk criteria incurs no additional statistical cost relative to the expected-reward.

\end{abstract}

\section{Introduction}\label{sec-1}
%%%%%%%%%%%%%%%%%%%%%%%%%%%%%%%%%%%%%%%%%%%%%%%%%%%%%%%%%%%%%%%%%%%%%%%%%%%%%%%%%%%%%%%%
%                                  Sec-1
%%%%%%%%%%%%%%%%%%%%%%%%%%%%%%%%%%%%%%%%%%%%%%%%%%%%%%%%%%%%%%%%%%%%%%%%%%%%%%%%%%%%%%%%

Data-driven individualized decision-making is critical across a wide range of domains. When online experimentation is costly, unethical, or unsafe, decision policies must be learned solely from logged data---a setting termed offline policy learning, whose objective is to drive a policy that achieves near-optimal performance on a target population using data collected by a behavior policy. Such problems are ubiquitous in high-stakes applications, including healthcare, finance, and online platforms \cite{murphy2003optimal, bottou2013acounterfactual, bertsimas2020predictive}. 
While expected reward is the most commonly adopted performance criterion in policy learning, it is frequently insufficient and even misleading in these risk-aware settings. For instance, in medical decision-making, a clinician may prefer a treatment with slightly lower average efficacy but substantially lower outcome variability to reduce the risk of severe adverse events; in regulated financial environments, decision rules must control downside risk and guard against catastrophic losses, often at the expense of average returns. These considerations motivate learning with respect to various risk criteria, such as mean-variance trade-offs \citep{mannor2011mean}, conditional value-at-risk (CVaR) \citep{rockafellar2000optimization}, and entropic risk measures \citep{howard1972risk-sensitive}.

Despite the prevalence of these risk criteria, conventional approaches commonly handle each risk measure in isolation. From a distributional perspective, however, several works in off-policy evaluation have shown that various risk measures can be simultaneously assessed within a unified framework \cite{chandak2021universal,huang2021off,huang2022offpolicy}. By first estimating the cumulative distribution function (CDF) of outcomes under the target policy, risk functionals of interest can be approximated through a plug-in approach. This two-stage paradigm offers a key advantage: statistical guarantees on the CDF estimator automatically extend to all downstream risk functionals, without requiring separate analysis or correction.

In this work, we move beyond evaluation and aim to develop a unified distributional framework for offline policy learning that accommodates a broad class of risk measures. We focus on the contextual bandit setting and consider an offline dataset $\Ds=\{(X_i,A_i,Y_i)\}_{i=1}^n$, where each context--action--reward triple is independently generated by a behavior policy $\beta$, with $\beta(x,a)\coloneq\Pb(A_i=a\mid X_i=x)$ denoting the propensity score. Given a policy class $\Pi$, where each $\pi\in\Pi$ is a deterministic mapping from contexts to actions, our goal is to learn a policy $\pi\in\Pi$ whose performance---measured by a user-specified risk functional $\rho$ applied to the reward distribution $F^\pi$ induced by $\pi$---is as close as possible to that of the optimal policy $\pi^*=\argmax_{\pi\in\Pi}\;\rho(F^\pi)$.

A central challenge in achieving this goal is that existing off-policy CDF estimators typically rely on the importance sampling (IS) method, whose statistical validity requires the target policy $\pi$ to satisfy a overlap condition: $\inf_x\beta(x, \pi(x))>0$. While this assumption is mild when evaluating a fixed policy, it becomes restrictive in policy learning. Because the learner must compare all candidate policies in $\Pi$, a uniform overlap condition, $\inf_{\pi \in \Pi} \inf_{x} \beta(x, \pi(x)) > 0$, is enforced, representing a foundational limitation in offline policy learning. Recent work \cite{jin2025policy} shows that for expected reward maximization, this limitation can be circumvented via pessimistic learning that optimizes lower confidence bounds (LCBs) on policy value, requiring overlap only for the optimal policy, i.e., $\inf_x \beta(x, \pi^*(x)) > 0$. Extending this pessimistic principle to risk-aware objectives, however, introduces non-trivial challenges. First, it demands the construction of valid CDF estimators for policies that may violate the overlap condition, along with rigorous error guarantees that enable high-confidence bounds for general risk functionals. Second, such guarantees must be extended to hold uniformly over the entire policy class $\Pi$, as required for computing LCBs in pessimistic learning paradigms.

We summarize our contributions as follows. 
First, we develop a unified pessimistic framework for offline policy learning under a broad class of risk functionals that are Lipschitz continuous with respect to the sup norm over CDFs. This class encompasses many widely adopted risk criteria, such as mean-variance, CVaR, and cumulative prospect weighting, under the bounded rewards condition. Second, we propose a suite of off-policy CDF estimators that accommodate the absence of overlap, including IS, weighted IS and doubly robust variants, and establish the first empirical DKW-type concentration inequalities that tolerate unbounded importance weights. Under the overlap condition, our data-dependent concentration bounds achieves an $\mathcal{O}(1/\sqrt{n})$ rate, recovering existing results as a special case. Third, we establish uniform high-confidence bounds over the policy class $\Pi$, ensuring the calculation of LCBs. Critically, these LCBs remain finite and informative even for policies lacking overlap, enabling our algorithm to function effectively even when no policy has overlap, where prior pessimistic approach fails. Finally, assuming only the optimal-policy overlap condition, we establish an $\tilde{\mathcal{O}}(1/\sqrt{n})$ suboptimality bound for the learned policy, which is proven to be minimax optimal. This rate matches that of pessimistic learning for expected reward maximization in its dependence on sample size, policy class complexity, and the overlap constant---showing that, for Lipschitz risk functionals, incorporating general risk criteria entails no additional statistical cost compared to mean optimization.

\subsection{Related Works}

\paragraph{Offline policy learning.}

Existing approaches to offline policy learning predominantly adopt a two-stage paradigm: first estimating the performance of candidate policies from logged data, then selecting the policy that maximizes the estimated value over a predefined class \citep{swaminathan2015batch, kitagawa2018who, kallus2018balanced, athey2021policy, zhou2023offline, zhan2024policy}. The performance of such greedy methods is determined by the worst-case overlap, as they require accurate evaluation of all policies. These works therefore all assume a uniform overlap condition, i.e., the behavior policy assigns non-negligible probability to all actions taken by any policy in the class, an assumption commonly violated for rich policy classes in practice. To mitigate this, \citet{swaminathan2015batch} clips tiny propensities by a fixed constant, inevitably introducing undesired bias. \citet{zhao2024positivity} restricts attention to incremental propensity score policies \citep{kennedy2019nonparametric} that are close to the behavior policy, and establishes asymptotic guarantees for this constrained policy optimization.

\citet{jin2025policy} show that the uniform overlap is not necessary for efficient policy learning when the behavior policy is known. Building on the principle of pessimism \citep{buckman2021the,jin2021is}, they propose a pessimistic policy learning algorithm that optimizes LCBs, instead of point estimates, of policy values. The resulting guarantees depend only on the overlap for the optimal policy, thereby improving upon the state-of-the-art statistical rates, and are supported by a generalized empirical Bernstein inequality that accommodates unbounded and dependent data. Our work builds on this pessimistic framework and extends it to general risk measures  beyond expectation. 

\paragraph{Risk-aware reinforcement learning.}

A substantial body of literature in bandit and reinforcement learning (RL) has explored risk-sensitive objectives beyond expected return. Common risk functionals include variance \citep{sani2012risk, tamar2016learning}, mean-variance \citep{mannor2011mean}, CVaR \citep{bastani2022regret, wang2023near}, entropic risk measures \cite{fei2020risk, fei2022cascaded}, and risk families such as cumulative prospect theory risk functionals \citep{prashanth2016cumulative}. Closely related is distributional RL \citep{bellemare2017a, dabney2018distributional}, which represents the entire return of returns and thus is inherently suited for risk-aware RL \citep{dabney2018implicit, keramati2020being, liang2024bridging}.

Despite the extensive online literature above, risk-aware offline RL remains far less studied, largely due to the inherent distribution shift. \citet{nuria2021riskaverse} and \citet{ma2021conservative} adapt distributional RL to the offline setting and mitigate distribution shift by constraining policies or penalizing out-of-distribution actions. \citet{rigter2023one} propose a model-based approach, with theoretical guarantees requiring normally distributed transitions. More recently, \citet{zhang2024pessimism} develop the first provably efficient algorithm for risk-aware offline RL; however, their framework is designed solely for the entropic risk measure, whereas our method accommodates a broader class of risk functionals.

%=======================================================================================
%%%%%%%%%%%%%%%%%%%%%%%%%%%%%%%%%%%%%%%%%%%%%%%%%%%%%%%%%%%%%%%%%%%%%%%%%%%%%%%%%%%%%%%%

\section{Preliminaries}\label{sec-2}
%%%%%%%%%%%%%%%%%%%%%%%%%%%%%%%%%%%%%%%%%%%%%%%%%%%%%%%%%%%%%%%%%%%%%%%%%%%%%%%%%%%%%%%%
%                                  Sec-2
%%%%%%%%%%%%%%%%%%%%%%%%%%%%%%%%%%%%%%%%%%%%%%%%%%%%%%%%%%%%%%%%%%%%%%%%%%%%%%%%%%%%%%%%

This section introduces the problem setup, along with regularity conditions on the risk functionals and the policy class.
\paragraph{Contextual bandit.}
Let $X, A$, and $Y$ denote the context, action and reward in a stochastic contextual bandit, taking values in spaces $\Xs, \As$, and $\Rb$, respectively. We assume a finite action space with $|\As|=K$. A contextual bandit environment $v$ is specified by a context distribution $\Pb_X$ together with a collection of conditional reward distributions $\{\Pb_{Y\mid X=x,A=a}: (x,a)\in\Xs\times\As\}$. In the batched setting, we are given a data set $\Ds = \{(X_i, A_i, Y_i) \}_{i=1}^{n}$ consisting of $n$ i.i.d. samples generated from $v$ using a behavior policy $\beta$. Specifically, at each round, after observing a context $X_i\sim\Pb_X$, the experimenter takes an action $A_i\in\As$ with probability $\beta(X_i, A_i)$, and subsequently receives a reward $Y_i\sim\Pb_{Y\mid X=X_i,A=A_i}$ from the environment. Throughout the paper, we assume that the behavior policy $\beta$ is known, which holds when the data are collected directly by the learner, commonly the case in online learning systems \citep{li2010a}.
\begin{assumption}
The propensity scores $\beta(x,a)$ are known for all $(x,a)\in\Xs\times\As$.
\end{assumption}

\paragraph{Policy learning under risk functionals.}
Let $\Pi$ be a class of deterministic policies, where each $\pi \in \Pi$ maps contexts to actions, $\pi: \Xs \to \As$. The goal of policy learning is to identify an optimal policy within $\Pi$ that maximizes its performance under a given criterion.
We begin by introduce some notation for defining optimality under general risk functionals. For $t\in\Rb$, let
$$
G(t\mid x,a) \coloneq \Pb( Y\leq t \mid X=x,A=a )
$$
denote the conditional CDF of the reward given $(X,A)=(x,a)$, which is independent of policies. For any $\pi\in\Pi$, the marginal CDF of the reward induced by $\pi$ can be expressed as
$$
F^\pi(t)\coloneq \Eb_{X\sim\Pb_X}\,G(t\mid X,\pi),
$$ 
where $G(t\mid x,\pi)$ is shorthand for $G(t\mid x,\pi(x))$. Given a risk functional $\rho$ that maps CDFs to real numbers, we define the performance of $\pi$ under $\rho$ as 
$$
\rho_\pi \coloneq \rho( F^\pi ).
$$
The optimal policy under the risk $\rho$ is then defined as
$$
\pi^* \in \underset{\pi\in\Pi}{\argmax}\; \rho_\pi,
$$
with the convention that larger values of $\rho$ are preferred. For brevity, its dependence on $v$, $\rho$ and $\Pi$ is omitted.

\paragraph{Lipschitz risk functionals.}
We consider a broad class of risk functionals that are Lipschitz continuous with respect to the sup norm over the space of reward distributions \citep{cassel2018a, huang2021off, huang2022offpolicy}. This regularity condition is central to our analysis, as it ensures that uniform errors in estimating reward distributions translate directly into controlled errors in the evaluated risk. 
\begin{definition}\label{def-lipschitz-risk}
A risk functional $\rho$ is said to be $L$-Lipschitz continuous if there exists a constant $L>0$ such that, for any two CDFs $F_1$ and $F_2$, it holds
$$
|\rho(F_1) - \rho(F_2)| \leq L \| F_1 - F_2 \|_\infty,
$$
where $\| f \|_\infty \coloneq \sup_{t\in\Rb} |f(t)|$ is the sup norm.
\end{definition}
This definition encompasses many commonly used risk criteria---including the mean, variance, CVaR, and entropic risk measures---under mild boundedness assumptions on the reward support. Detailed examples and their corresponding Lipschitz constants are provided in \Cref{app-lipschitz}. A representative example is the distorted risk functional. For a CDF $F$ supported on $[0,D]$, it is defined as
$$
\rho_g(F) \coloneq \int_{0}^{D} g\bigl(1-F(t)\bigr)\,\drm t,
$$
where $g:[0,1]\mapsto[0,1]$ is a non-decreasing distortion function that reweights tail probabilities. This class unifies several classical risk measures. For instance, taking $g(x)=x$ recovers the expectation, which is $D$-Lipschitz; setting $g(x)=\min\{x/(1-\alpha),1\}$ yields the CVaR at level $\alpha$, which is $D/(1-\alpha)$-Lipschitz. More generally, if $g$ is $L$-Lipschitz, then $\rho_g$ is $DL$-Lipschitz.

\paragraph{Policy class complexity.}
As each policy in $\Pi$ maps contexts to one of $K$ actions, $\Pi$ can be viewed as a $K$-class hypothesis class. We measure its complexity by the Natarajan dimension \citep{natarajan1989learning}, a standard capacity measure for multi-class classification, and assume throughout this work that $\Pi$ has a finite Natarajan dimension $d_\Pi$.
\begin{definition}\label{def-Natarajan-dim}
Given a policy class $\Pi=\{\pi:\Xs\mapsto\As\}$, let $\Ss=\{x_1,\ldots,x_m\}\subset\Xs$ be an arbitrary set of $m$ points. Say that $\Pi$ shatters $\Ss$ if there exists $f_1, f_2: \Ss \mapsto [K]$ such that $f_1(x) \neq f_2(x)$ for all $x\in \Ss$, and for any subset $\mathcal{T}\subset \Ss$, there exists some $\pi\in\Pi$ such that
$$
\pi(x) = 
\begin{cases}
	f_1(x) & \quad \forall x\in \mathcal{T}, \\
	f_2(x) & \quad \forall x\in \Ss \setminus \mathcal{T}.  
\end{cases}
$$
The Natarajan dimension of $\Pi$ is the largest $m$ such that some set of size $m$ is shattered by $\Pi$.%, denoted as $\mathrm{Ndim}(\Pi)$,
\end{definition}

The metric is a natural generalization of the classical Vapnik–Chervonenkis (VC) dimension from binary to multi-class settings, and coincides with the VC dimension when $K = 2$. Finite upper bounds on the Natarajan dimension are available for many commonly used policy classes, including linear function classes, decision trees, random forests, and neural networks \citep{daniely2011multiclass, jin2023natarajan}.

%=======================================================================================
%%%%%%%%%%%%%%%%%%%%%%%%%%%%%%%%%%%%%%%%%%%%%%%%%%%%%%%%%%%%%%%%%%%%%%%%%%%%%%%%%%%%%%%%

\section{Pessimistic Risk-Aware Policy Learning}\label{sec-3}
%%%%%%%%%%%%%%%%%%%%%%%%%%%%%%%%%%%%%%%%%%%%%%%%%%%%%%%%%%%%%%%%%%%%%%%%%%%%%%%%%%%%%%%
%                                  Sec-3
%%%%%%%%%%%%%%%%%%%%%%%%%%%%%%%%%%%%%%%%%%%%%%%%%%%%%%%%%%%%%%%%%%%%%%%%%%%%%%%%%%%%%%%
This section presents the \emph{Pessimistic Risk-Aware Policy Learning} algorithm, which proceeds in two stages: first, it constructs CDF estimations for candidate policies, enabling plug-in evaluation of general risk functionals; second, it adopts a pessimistic policy learning principle and selects a policy based on a conservative assessment of risk performance.

\subsection{Offline Risk-Aware Evaluation}

We begin by introducing an offline risk-aware evaluation framework for assessing the performance of a candidate policy $\pi\in\Pi$. Let $\rho$ be an $L$-Lipschitz risk functional, as defined in \Cref{def-lipschitz-risk}. Given an estimator $\hat F^\pi$ of the marginal reward CDF $F^\pi$, we evaluate the true risk performance $\rho_\pi$ via the corresponding plug-in estimator $$\hat{\rho}_\pi\coloneq\rho(\hat F^\pi).$$By the Lipschitz continuity, the estimation error is directly controlled by the CDF estimation error:
\begin{equation}\label{eq-lip}
\bigl| \hat{\rho}_\pi - \rho_\pi \bigr| \leq L \, \bigr\| \hat F^\pi - F^\pi \bigr\|_\infty.
\end{equation}
Consequently, the problem reduces to constructing a valid estimator $\hat{F}^\pi$ with a guaranteed uniform error bound.

Estimating $F^\pi$ in an off-policy setting is inherently challenging, since rewards are observed only for actions taken by the behavior policy. A classical approach is importance sampling (IS) \citep{precup2000eligibility}, which reweights observed samples to correct this distributional mismatch. Although IS-based CDF estimators have been explored in prior work \citep{huang2021off,chandak2021universal}, their theoretical guarantees rely on the overlap assumption, an constraint we aim to avoid in policy learning setting. In the sequel, we develop an IS estimator that remains valid even in the absence of overlap, with data-dependent concentration bounds provided in \Cref{thm-ope}, and further develop variance-reduced variants, including weighted importance sampling (WIS) and doubly robust (DR) estimators.

\paragraph{IS estimator.}
For any $\pi \in \Pi$ and any $x\in\Xs$, let $\delta_{\pi(x)}$ denote the Dirac measure supported on the action $\pi(x)\in\As$. Suppose that $\delta_{\pi(x)}(\cdot)$ is absolutely continuous with respect to the behavior policy $\beta(x,\cdot)$. A standard change of measure argument then yields
\begin{align*}
G(t\mid x,\pi) & = \Eb_{A\sim \delta_{\pi(x)}} \, G(t\mid x,A) \\
& = \Eb_{A\sim \beta(x,\cdot)} [w_\pi(x,A)  G(t\mid x,A)], \;\forall t\in\Rb,
\end{align*}
where $w_\pi$ is the importance weight of $\pi$, defined as
$$ 
w_\pi (x,a) \coloneq \frac{\1\{ a=\pi(x) \} }{ \beta(x,\pi)},
$$
with $\beta(x,\pi)$ being a shorthand for $\beta(x,\pi(x))$. Since $\delta_{\pi(x)}$ is a point mass, the absolute continuity is equivalent to the pointwise overlap $ \beta(x,\pi)>0$. If this condition additionally holds for $\Pb_X$-almost all $x\in\Xs$, we obtain
$$
F^\pi(t) = \Eb\, G(t\mid X,\pi) = \Eb[w_\pi(X,A)\1\{Y \leq t\}],
$$
where the last expectation is taken with respect to the joint distribution $\Pb_X\times\beta(X,\cdot)\times \Pb_{Y\mid X,A}$, which coincides with the data-generating distribution. Consequently, for each $t\in\Rb$, an unbiased estimator of $F^\pi(t)$ is given by 
$
\frac{1}{n}\sum_{i=1}^{n}w_\pi(X_i,A_i)\1{\{Y_i\leq t\}}.
$

However, when the pointwise overlap condition fails on a context set with positive $\Pb_X$-probability, unbiased estimation of $F^\pi$ from the logged data is no longer possible. In the presence of observations with $\beta(X_i,\pi)=0$, which provide no information about the target policy, classic IS-based estimation of expected values typically resorts to a conservative $-\infty$ assignment \citep{jin2025policy}, thereby discarding other potentially informative samples. We point out in \Cref{rmk-1} that this treatment induces undesirable effects. By contrast, IS-based CDF estimation can still make use of such observations. Let 

\begin{equation}\label{def-Ipi}
\Is_\pi\coloneq\{i\in[n]: \beta(X_i, \pi)\neq 0\}
\end{equation}
be the index set of informative samples for policy $\pi$. For $t\in\Rb$, we define the IS CDF estimator as
\begin{equation}\label{def-is}
\Fis^\pi(t) \coloneq \frac{1}{n}\sum_{i=1}^{n} \hat{G}(t \mid X_i,\pi),
\end{equation}
where
$$
\hat{G}(t \mid X_i,\pi) = 
\begin{cases}
w_\pi(X_i,A_i) \1 \{Y_i \leq t\},& \text{if } i\in\Is_\pi,  \\   % \frac{\1\{ A_i = \pi(X_i)\}}{ \beta(X_i, \pi)}
1,&	\text{if } i\notin\Is_\pi.
\end{cases}
$$

The assignment of the value $1$ for $i \notin \Is_\pi$ is arbitrary and may be replaced by $0$ or any other constant in $[0,1]$. While this choice does not largely affect the subsequent theoretical guarantees, it does influence the inductive bias of the resulting algorithm. In particular, when the risk functional $\rho$ is monotone\footnote{Monotonicity is satisfied by many common risks criteria, including mean, CVaR and more general coherent risk functionals.}, that is for all $t \in \Rb$, 
\begin{equation}\label{def-monotone}
\rho(F_1) \geq \rho(F_2) \quad\text{whenever}\quad F_1(t) \leq F_2(t),
\end{equation}

assigning the value $1$ corresponds to a pessimistic completion of the CDF on unsupported contexts. As a result, maximizing the empirical risk favors policies with larger $|\Is_\pi|$, i.e., policies that remain closer to the behavior policy $\beta$. By contrast, assigning the value $0$ corresponds to a risk-seeking preference, biasing the algorithm toward exploratory policies.

A natural limitation of estimating CDFs via IS is that importance weights may exceed one, in which case the resulting estimator may fail to be a valid CDF. A simple remedy is clipping, leading to the clipped IS estimator
\begin{equation}\label{def-isc}
\Fisc^\pi(t) \coloneq \min \{ \Fis^\pi(t), 1 \}.
\end{equation}
An alternative approach is to normalize the importance weights, which gives rise to the weighted importance sampling (WIS) estimator.

\paragraph{WIS estimator.}
For $t\in\Rb$, we define the WIS CDF estimator as
\begin{equation}\label{def-wis}	
\Fwis^\pi(t) \coloneq \frac{1}{n}\sum_{i=1}^{n} \hat{H}(t \mid X_i,\pi), 
\end{equation}
where
$$
\hat{H}(t \mid X_i,\pi) =  
\begin{cases}
\frac{ 1 }{ W_\pi } w_\pi(X_i,A_i) \1\{Y_i\leq t\}, & \text{if } i\in\Is_\pi,  \\  
1,&	\text{if } i\notin\Is_\pi,
\end{cases}
$$
and $W_\pi$ is the empirical average of importance weights over $\Is_\pi$,
$$
W_\pi \coloneq \frac{1}{|\Is_\pi|}\sum_{i\in\Is_\pi}w_\pi(X_i,A_i).
$$
By construction, $\Fwis^\pi$ is a valid CDF without additional clipping. The normalization, however, introduces bias: even under the overlap assumption, $\Fwis^\pi$ is no longer unbiased, but it remains uniformly consistent \citep{chandak2021universal}, and the bias is typically offset by a reduction in variance, constituting the primary motivation for using WIS.

\paragraph{DR estimator.}
A more commonly used approach for variance reduction is the doubly robust (DR) estimation \citep{dudik2011doubly}. In practice, one often has access to a model $\bar{G}(\cdot \mid x,a)$ of the conditional distribution $G(\cdot\mid x,a)$, for instance obtained by fitting a regression model to the logged data. Such a model can provide meaningful predictions even for context--action pairs poorly covered by the behavior policy. The DR estimator combines this model with importance sampling, and is defined as
\begin{equation}\label{def-dr}
\Fdr^\pi(t) \coloneq \frac{1}{n}\sum_{i=1}^{n} \hat{\Gamma}(t \mid X_i,\pi),
\end{equation}
where $\hat{\Gamma}(t \mid X_i,\pi) =$
\begin{align*}	
\begin{cases}
	\bar{G}(t \mid X_i,\pi) + w_\pi(X_i,A_i) \left[ \1 \{Y_i \leq t\} - \bar{G}(t\mid X_i,A_i) \right],\\
	&\hspace{-5em} \text{if } i\in\Is_\pi, \\ 
	\bar{G}(t \mid X_i,\pi),     		            	&\hspace{-5em} \text{if } i\notin\Is_\pi. 
\end{cases}
\end{align*}
For $i\notin\Is_\pi$, the DR estimator relies entirely on the model-based prediction. For $i\in\Is_\pi$, it augments this prediction with an importance-weighted correction. Crucially, under the overlap condition, $\Fdr^\pi$ remains unbiased even if the model $\bar{G}$ is misspecified, and typically achieves substantially lower variance when $\bar{G}$ provides a reasonable approximation.

Like the IS estimator, $\Fdr^\pi$ is not guaranteed to lie in $[0,1]$, and even may fail to be monotone. We therefore enforce monotonicity and range constraints by defining the clipped and monotonized DR estimator
\begin{equation}\label{def-drc}
\Fdrc^\pi(t) \coloneq \max\Bigl\{ \min\Bigl\{ \max_{t' \leq t}\Fdr^\pi(t'),\, 1 \Bigr\},\, 0 \Bigr\}.
\end{equation}

\subsection{Pessimistic Policy Learning}

The principle of pessimism was proposed to alleviate the uniform overlap assumption in offline learning settings. By regularizing point estimation with an uncertainty penalty, it has been shown to be minimax optimal for expected-reward optimization \citep{jin2025policy}. We incorporate this pessimism principle into risk-aware policy learning. 

Formally, let $\hat{F}^\pi\in\{\Fisc,\Fwis,\Fdrc\}$ be a CDF estimator constructed previously. Suppose we have a policy-dependent confidence bound $R(\pi)$ such that with high probability
$$\| \hat{F}^\pi - F^\pi \|_\infty \leq R(\pi).$$ 
By \Cref{eq-lip}, $\hat{\rho}_\pi - LR(\pi)$ forms a lower confidence bound for the risk $\rho_\pi$. Our algorithm then selects a policy $\tilde{\pi} \in \Pi$ that maximizes this LCB:
\begin{equation}\label{def-pessimism}
\tilde{\pi}  \coloneq \underset{\pi\in\Pi}{\argmax} \quad \hat{\rho}_\pi - LR(\pi).
\end{equation}
The following proposition captures the main benefit of the pessimism principle.
\begin{proposition}\label{pro-pessi}
On the event
\begin{equation}\label{eq-R}
	\{ \| \hat{F}^\pi - F^\pi \|_\infty  \leq R(\pi), \forall \pi \in \Pi \},
\end{equation}
the suboptimality of $\tilde{\pi}$ is bounded by
$$
\rho_{\pi^*} - \rho_{\tilde{\pi}} \leq 2 LR(\pi^*).
$$
\end{proposition}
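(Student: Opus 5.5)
The plan is the standard three-term decomposition used for pessimistic algorithms, combining the Lipschitz bound \eqref{eq-lip} with the defining optimality of $\tilde{\pi}$ in \eqref{def-pessimism}. Throughout, we work on the event \eqref{eq-R}. On that event, \eqref{eq-lip} gives, for every $\pi\in\Pi$ simultaneously,
\[
\bigl|\hat{\rho}_\pi - \rho_\pi\bigr| \le L\,\bigl\|\hat F^\pi - F^\pi\bigr\|_\infty \le L R(\pi).
\]
Hence $\hat{\rho}_\pi - LR(\pi) \le \rho_\pi \le \hat{\rho}_\pi + LR(\pi)$ for all $\pi$. Uniformity over $\Pi$ matters here, because $\tilde{\pi}$ is data-dependent and we need the bound at $\tilde\pi$ as well as at $\pi^*$.

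We then write
\[
\rho_{\pi^*} - \rho_{\tilde{\pi}} = \bigl(\rho_{\pi^*} - [\hat\rho_{\pi^*} - LR(\pi^*)]\bigr) + \bigl([\hat\rho_{\pi^*} - LR(\pi^*)] - [\hat\rho_{\tilde\pi} - LR(\tilde\pi)]\bigr) + \bigl([\hat\rho_{\tilde\pi} - LR(\tilde\pi)] - \rho_{\tilde\pi}\bigr).
\]
We bound the three terms in turn.

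\begin{itemize}
\item The first term is at most $2LR(\pi^*)$, since $\rho_{\pi^*} \le \hat\rho_{\pi^*} + LR(\pi^*)$.
\item The second term is $\le 0$, because $\tilde\pi$ maximizes $\hat\rho_\pi - LR(\pi)$ over $\Pi$ and $\pi^*\in\Pi$.
\item The third term is $\le 0$, because the LCB at $\tilde\pi$ lies below $\rho_{\tilde\pi}$.
\end{itemize}

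Summing the three bounds yields $\rho_{\pi^*} - \rho_{\tilde\pi} \le 2LR(\pi^*)$.

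There is no real obstacle; the argument is purely deterministic given the event. The only points needing care are the following.
\begin{itemize}
\item The argmax in \eqref{def-pessimism} must be attained. If it is not, we take an $\epsilon$-maximizer and let $\epsilon\to 0$.
\item The bound must be used for the random policy $\tilde\pi$, which is legitimate precisely because \eqref{eq-R} holds for all $\pi\in\Pi$.
\item If $R(\pi^*)=\infty$ the statement is vacuous.
\item Policies with $R(\pi)=\infty$ have LCB $-\infty$ and are never selected unless all are, in which case the bound is again vacuous.
\end{itemize}
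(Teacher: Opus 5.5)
Your proof is correct and follows the paper's argument. You use the same three-term decomposition through the lower confidence bounds at $\pi^*$ and $\tilde{\pi}$: the Lipschitz bound is applied on the uniform event at both policies (including the data-dependent $\tilde{\pi}$), and the pessimistic optimality of $\tilde{\pi}$ makes the middle term nonpositive. Your caveat about attaining the argmax is unnecessary here, because $\hat{\rho}_\pi - LR(\pi)$ depends on $\pi$ only through $(\pi(X_1),\ldots,\pi(X_n))$ and so takes finitely many values; the $\epsilon$-maximizer route would in any case only give $2LR(\pi^*)+\epsilon$ for each fixed $\epsilon$.
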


\begin{proof}
On the stated event, the $L$-Lipschitz continuity implies $| \rho_{\pi^*} - \hat{\rho}_{\pi^*} | \leq LR(\pi^*) $ and $ | \rho_{\tilde{\pi}} - \hat{\rho}_{\tilde{\pi}} | \leq LR(\tilde{\pi}) $. Together with the pessimistic optimality $ \hat{\rho}_{\tilde{\pi}} - LR(\tilde{\pi}) \geq \hat{\rho}_{\pi^*} - LR(\pi^*) $, it holds
\begin{align*}
	\rho_{\pi^*} - \rho_{\tilde{\pi}} & = \rho_{\pi^*} - \hat{\rho}_{\pi^*} + \hat{\rho}_{\pi^*}- \hat{\rho}_{\tilde{\pi}} + \hat{\rho}_{\tilde{\pi}} - \rho_{\tilde{\pi}} \\ 
	& \leq LR(\pi^*) +  \hat{\rho}_{\pi^*} - LR(\pi^*) -  \hat{\rho}_{\tilde{\pi}} + LR(\tilde{\pi}) \\
	& \qquad  + \hat{\rho}_{\tilde{\pi}} - LR(\tilde{\pi})  - \rho_{\tilde{\pi}} +  LR(\pi^*) \\
	& \leq 2LR(\pi^*). 
\end{align*}
The claim follows.
\end{proof}

\Cref{pro-pessi} establishes that for any $L$-Lipschitz risk functional, the suboptimality bound of $\tilde{\pi}$ depends only on $R(\pi^*)$, the CDF estimation error bound of the optimal policy, regardless of the estimation errors of other policies. It thus remains to construct a suitable $R(\pi)$ with uniform concentration guarantees over $\Pi$, as in \Cref{eq-R}.

%=======================================================================================
%%%%%%%%%%%%%%%%%%%%%%%%%%%%%%%%%%%%%%%%%%%%%%%%%%%%%%%%%%%%%%%%%%%%%%%%%%%%%%%%%%%%%%%%

\section{Theoretical Results}\label{sec-4}
%%%%%%%%%%%%%%%%%%%%%%%%%%%%%%%%%%%%%%%%%%%%%%%%%%%%%%%%%%%%%%%%%%%%%%%%%%%%%%%%%%%%%%%
%                                  Sec-4
%%%%%%%%%%%%%%%%%%%%%%%%%%%%%%%%%%%%%%%%%%%%%%%%%%%%%%%%%%%%%%%%%%%%%%%%%%%%%%%%%%%%%%%%
This section presents the theoretical contributions of this work. We first establish data-dependent concentration bounds for a fixed policy without overlap. Building on this, we derive uniform concentration results over the policy class $\Pi$ in the form of \Cref{eq-R} for different CDF estimators introduced earlier, and finally show that the resulting bounds are minimax optimal.

\subsection{Concentration Inequality for a Single Policy}

In general, confidence bounds for IS-based estimators depend on either the worst-case importance weight $\sup w(x,a)$ or its second moment $\mathbb{E}[w(X,A)^2]$, corresponding respectively to Hoeffding-type and Bernstein-type concentration inequalities. To handle policies without overlap, for which these quantities are no longer valid, we seek for surrogates that are data-dependent. Leveraging the index set $\Is_\pi$ defined in \Cref{def-Ipi}, we define
\begin{equation}\label{def-sigma}
\sigma_\pi \coloneq \sqrt{\frac{1}{n}\sum_{i\in\Is_\pi}\frac{1}{\beta(X_i, \pi)^2}}
\end{equation}
and
\begin{equation}\label{def-sigma'}
\sigma_\pi' \coloneq \sqrt{\frac{1}{n}\sum_{i\in\Is_\pi}\frac{1}{\beta(X_i, \pi)}},
\end{equation}
which serve as proxies for $\sup w(x,a)$ and $\sqrt{\mathbb{E}[w(X,A)^2]}$, respectively. In addition, let
\begin{equation}\label{def-r}
r_\pi \coloneq \frac{n-|\Is_\pi|}{n}
\end{equation}
denote the fraction of uninformative samples for evaluating $\pi$.
Based on these quantities, the following theorem establishes data-dependent high-confidence upper bounds on the sup-norm error of the CDF estimator for a fixed policy.

\begin{theorem}\label{thm-ope}
Fix any policy $\pi \in \Pi$. For the clipped IS estimator $\Fisc^\pi$ defined in \Cref{def-isc} and $\sigma_\pi,\sigma_\pi', r_\pi$ defined in \Cref{def-sigma,def-sigma',def-r}, it holds for any $\delta\in(0,1)$ that
$$
\Pb \left( \left\| \Fisc^\pi - F^\pi \right\|_\infty \leq  \left( \sigma_\pi + 1  \right)  \sqrt{\frac{8}{n}\log\frac{8}{\delta}}  + r_\pi \right) \geq 1-\delta.
$$
In addition, a Bernstein-type bound also holds: with probability at least $1-\delta$,
$$
\left\| \Fisc^\pi - F^\pi \right\|_\infty \leq  \left( \sigma_\pi' + 1  \right)  \sqrt{\frac{8}{n}\log\frac{8}{\delta}} + \frac{2\log(8/\delta)}{3n \beta_{\min}}  + r_\pi,
$$
where $\beta_{\min}\coloneq\min_{i\in\Is_\pi}\beta(X_i, \pi)$.
\end{theorem}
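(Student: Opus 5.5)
First I would remove the clipping: since $F^\pi(t)\in[0,1]$, the map $u\mapsto\min\{u,1\}$ is $1$-Lipschitz and fixes $F^\pi(t)$, so $\|\Fisc^\pi-F^\pi\|_\infty\le\|\Fis^\pi-F^\pi\|_\infty$. It therefore suffices to bound the IS estimator. Next I would handle the missing overlap. Define the overlap set $\Xs_\pi=\{x:\beta(x,\pi)>0\}$ and write, for each $i$, $Z_i(t)=\1\{X_i\in\Xs_\pi\}w_\pi(X_i,A_i)\1\{Y_i\le t\}$ and $U_i=\1\{X_i\notin\Xs_\pi\}$. Then
\[
\Fis^\pi(t)=\frac1n\sum_{i}Z_i(t)+\frac1n\sum_i U_i ,
\]
and $\frac1n\sum_i U_i=r_\pi$. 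On the population side, $F^\pi(t)=\Eb[Z_1(t)]+\Eb[\1\{X\notin\Xs_\pi\}G(t\mid X,\pi)]$, where the second term lies in $[0,\Pb(X\notin\Xs_\pi)]$. Hence
\[
|\Fis^\pi(t)-F^\pi(t)|\le \Bigl|\tfrac1n\textstyle\sum_i Z_i(t)-\Eb Z_1(t)\Bigr|+\max\{r_\pi,\ \Pb(X\notin\Xs_\pi)\}.
\]
Then $\Pb(X\notin\Xs_\pi)\le r_\pi+\sqrt{\log(c/\delta)/(2n)}$ by Hoeffding. This extra term is what the "$+1$" in $(\sigma_\pi+1)$ absorbs.

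The core step is a uniform-in-$t$ concentration for the weighted empirical process $t\mapsto\frac1n\sum_i Z_i(t)$, with a data-dependent scale. I would condition on the contexts $X_{1:n}$. Given them, the $(A_i,Y_i)$ are independent, and $Z_i(t)\in[0,1/\beta(X_i,\pi)]$ for $i\in\Is_\pi$, while $Z_i\equiv0$ otherwise. So the conditional range proxy is exactly $\sum_{i\in\Is_\pi}\beta(X_i,\pi)^{-2}=n\sigma_\pi^2$, and the conditional variance satisfies $\Var(Z_i\mid X_i)\le \Eb[w^2\mid X_i]=1/\beta(X_i,\pi)$, whose sum is $n\sigma_\pi'^2$. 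I would split the error into two parts:

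\emph{(a)} The first part is $\frac1n\sum_i Z_i(t)-\frac1n\sum_i\1\{X_i\in\Xs_\pi\}G(t\mid X_i,\pi)$, a sum of conditionally centered terms.

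\emph{(b)} The second part is $\frac1n\sum_i\1\{X_i\in\Xs_\pi\}G(t\mid X_i,\pi)-\Eb[\cdot]$, a bounded $[0,1]$ empirical process in $t$. Since it is monotone in $t$, the DKW/Massart inequality handles it uniformly in $t$ with deviation $\sqrt{\log(c/\delta)/(2n)}$, contributing to the "$+1$" term.

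For (a), uniformity over $t$ is the main obstacle, because the summands are unbounded and not identically distributed. My plan is to exploit monotonicity. Both $\frac1n\sum_i Z_i(t)$ and its conditional mean $M(t)$ are nondecreasing in $t$, and $M$ takes values in $[0,1]$. Choose a grid $t_1<\dots<t_m$ with $M$-increments at most $\epsilon$. A standard bracketing argument then gives
\[
\sup_t|\cdot|\le\max_j|\cdot(t_j)|+\epsilon
\]
(handling left limits as in DKW proofs). At each grid point, conditional Hoeffding gives deviation $\sigma_\pi\sqrt{2\log(2m/\delta')/n}$. Taking $\epsilon\asymp1/\sqrt n$, $m\asymp\sqrt n$ would cost a $\log n$ factor, which the stated bound does not have. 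So I would instead prefer a symmetrization/chaining argument: a DKW-type inequality for weighted sums with fixed weights, $\Pb(\sup_t|\sum_i c_i(\1\{Y_i\le t\}-G_i(t))|>s)\le 4\exp(-s^2/(2\sum c_i^2))$, proved via a Doob-maximal/reflection argument on the monotone martingale structure. If that fails, I would accept the grid at a cost in constants; the constant $8$ inside $\log(8/\delta)$ suggests a union over roughly four events at level $\delta/4$ with a factor-$2$ maximal inequality.

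For the Bernstein version, I would replace conditional Hoeffding by conditional Bernstein (Freedman) at the same step. The variance proxy is $\sigma_\pi'^2$ and the range is $1/\beta_{\min}$, which gives the additional $\frac{2\log(8/\delta)}{3n\beta_{\min}}$ term; the rest of the argument is unchanged. Finally, I would union-bound the events (a), (b) and the Hoeffding bound on $\Pb(X\notin\Xs_\pi)$, and collect the constants to get $(\sigma_\pi+1)\sqrt{8\log(8/\delta)/n}+r_\pi$.
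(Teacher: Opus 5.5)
Your preliminary reductions are sound. Dropping the clipping is exactly what the paper does. Your population-level treatment of non-overlap, $\max\{r_\pi,\Pb(X\notin\Xs_\pi)\}$ plus one Hoeffding event, is a valid alternative: the paper instead compares with $\frac1n\sum_i G(\cdot\mid X_i,\pi)$ conditionally on the contexts, so the uninformative block is bounded deterministically by $r_\pi$ and no extra event is needed, and your extra $\sqrt{\log(c/\delta)/(2n)}$ fits inside the ``$+1$'' slack.

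The genuine gap is step (a), which is the technical heart of the theorem. You state a weighted DKW inequality $\Pb(\sup_t|\sum_i c_i(\1\{Y_i\le t\}-G_i(t))|>s)\le 4e^{-s^2/(2\sum_i c_i^2)}$ without proving it, and the route you indicate does not work as described. The classical DKW martingale $(\hat F(t)-G(t))/(1-G(t))$ needs a common $G$. With heterogeneous $G(\cdot\mid X_i,\pi)$ and weights $1/\beta(X_i,\pi)$, the process $t\mapsto\sum_i c_i(\1\{Y_i\le t\}-G_i(t))$ is not a martingale in $t$, so Doob's inequality cannot be applied to it directly. Your fallback does not rescue the statement either. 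As you observe yourself, the grid plus union bound forces $m\asymp\sqrt n$ and puts an extra $\log n$ term inside the logarithm; that is a strictly weaker theorem, not a change of constants. The Bernstein part inherits the same problem, since ``Freedman at the same step'' is $\log n$-free only if that step is already a maximal inequality.

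The missing idea (the content of \Cref{lem-ope-1}) is to symmetrize first and exploit monotonicity only afterwards. Conditionally on $\{X_i\}_{i=1}^n$, introduce a ghost copy $(A_i',Y_i')$ and Rademacher signs. Jensen at the level of the moment generating function reduces everything to $\Eb\exp\bigl(\tfrac{2\lambda}{n}\sup_t\bigl|\sum_{i\in\Is_\pi}\varepsilon_i w_\pi(X_i,A_i)\1\{Y_i\le t\}\bigr|\bigr)$. Now condition on the data and sort the $Y_i$: the supremum over $t$ becomes $\max_j|\sum_{i\le j}\varepsilon_i w_\pi(X_i,A_i)|$, a maximum of partial sums of independent symmetric variables. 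L\'evy's reflection inequality (\Cref{lem-maxIneq}) then bounds the exponential moment of this maximum by twice that of the full sum. This yields $4\exp(2\lambda^2\sigma_\pi^2/n)$ and, via Chernoff, the tail $4e^{-n\epsilon^2/(8\sigma_\pi^2)}$, with no grid and no $\log n$. The Bernstein version comes from the same product of one-dimensional Rademacher moment generating functions, expanded Bernstein-style and averaged over $A_i$ using $\Eb[w_\pi(X_i,A_i)^2\mid X_i]=1/\beta(X_i,\pi)$ and $w_\pi\le 1/\beta_{\min}$. So your ``reflection'' instinct is right, but it only becomes available after symmetrization and sorting.

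A smaller point: term (b) is not literally a DKW instance. The quantity $\frac1n\sum_i\1\{X_i\in\Xs_\pi\}G(t\mid X_i,\pi)$ is an average of random sub-CDFs, not an empirical CDF of i.i.d. scalars. It needs a reduction, e.g.\ the paper's step-function approximation of each $G(\cdot\mid x_i,\pi)$ (\Cref{lem-stepFunc}) followed by the same symmetrization-and-sorting bound, as in \Cref{lem-ope-2}.
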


The proof is deferred to \Cref{pf-thm-ope}. 
Both bounds decompose into a concentration-controlled stochastic deviation term and an inevitable bias term $r_\pi$ arising from lack of overlap. 
In terms of the stochastic deviation, the Bernstein-type bound is typically sharper than the Hoeffding-type bound, as the coefficient of the $n^{-1/2}$ term, $\sigma_\pi'$, is generally smaller than $\sigma_\pi$. Note that under an overlap condition for $\pi$, we have $\mathcal I_\pi = [n]$ and hence the bias term $r_\pi = 0$. In this case, our bounds achieve the same $\mathcal{O}(n^{-1/2})$ convergence rate as in \citet{huang2021off}. However, the data-dependent constant $\sigma_\pi + 1$ typically improves upon the worst-case importance weight factor $1 / \inf_x \beta(x,\pi)$ appearing in prior work. To the best of our knowledge, this result provides the first empirical DKW-style concentration inequality for the IS-based CDF estimator.

We present the result only for the IS estimator, which is a building block for the other IS-based variants. Corresponding bounds for the rest can be found in the proofs of their uniform concentration analyses below.

\subsection{Uniform Concentration Inequalities}

We next extend the concentration results for a fixed policy to uniform guarantees over the entire policy class, constructing data-dependent policy-dependent confidence bound $R(\pi)$ satisfying \Cref{eq-R} for each estimator.

\begin{theorem}\label{thm-uniBound-IS}
Let $\Fisc^\pi$ be the IS estimator defined in \Cref{def-isc}, with $\sigma_\pi, r_\pi$ defined in \Cref{def-sigma}, \Cref{def-r}. For any $\delta\in(0,1)$, define
$$
R(\pi) \coloneq ( \sigma_\pi + 2 ) \sqrt{\frac{8}{n} \left[  \log\frac{20}{\delta} +  d_{\Pi}  \log(nK^2) \right]  }  +  r_\pi.
$$
Then, it holds
$$
\Pb \left( \left\| \Fisc^\pi - F^\pi \right\|_\infty \leq R(\pi), \forall \pi\in\Pi \right) \geq 1-\delta.
$$
\end{theorem}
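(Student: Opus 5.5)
The plan is to reduce the clipped estimator to the unclipped one, split the error into a bias term and a stochastic term, and control the stochastic term uniformly by a self-normalized, data-dependent inequality combined with a finite-class union bound over the projections of $\Pi$.

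\textbf{Reduction.} Since $F^\pi(t)\in[0,1]$, clipping at $1$ can only reduce the error, so $\|\Fisc^\pi-F^\pi\|_\infty\le\|\Fis^\pi-F^\pi\|_\infty$. Write
\[
\Fis^\pi(t)-F^\pi(t)=\frac1n\sum_{i=1}^n\Big(\1\{i\in\Is_\pi\}\big[w_\pi(X_i,A_i)\1\{Y_i\le t\}-G(t\mid X_i,\pi)\big]\Big)+\frac1n\sum_{i=1}^n\Big(\1\{i\in\Is_\pi\}G(t\mid X_i,\pi)+\1\{i\notin\Is_\pi\}-F^\pi(t)\Big).
\]
The second sum splits further:
\[
\frac1n\sum_i G(t\mid X_i,\pi)-F^\pi(t)\quad\text{and}\quad\frac1n\sum_{i\notin\Is_\pi}\big(1-G(t\mid X_i,\pi)\big)\in[0,r_\pi].
\]
This leaves two stochastic terms to control uniformly in $t$ and $\pi$:
\begin{itemize}
\item[(a)] a context-only empirical process, $\sup_{\pi,t}\bigl|\frac1n\sum_i G(t\mid X_i,\pi)-F^\pi(t)\bigr|$;
\item[(b)] a martingale-type term $\frac1n\sum_i\xi_i(t,\pi)$, with $\xi_i=\1\{i\in\Is_\pi\}[w_\pi\1\{Y_i\le t\}-G(t\mid X_i,\pi)]$.
\end{itemize}
Conditional on the contexts, the $\xi_i$ are independent and mean zero, and $|\xi_i|\le 1/\beta(X_i,\pi)$. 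This bound is what produces $\sigma_\pi$ as the Hoeffding scale.

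\textbf{Uniformity via discretization and Natarajan.} For (b), condition on $X_{1:n}$ and use a self-normalized Hoeffding inequality (the one behind \Cref{thm-ope}). For fixed $(\pi,t)$, with probability at least $1-\delta'$,
\[
\Bigl|\tfrac1n\sum_i\xi_i\Bigr|\le\sigma_\pi\sqrt{\tfrac{2}{n}\log\tfrac{2}{\delta'}}.
\]
The bound is data-dependent, but given $X_{1:n}$ it is a deterministic function of the projection $(\pi(X_1),\dots,\pi(X_n))$. Given $X_{1:n}$, the quantities $\xi_i$ depend on $\pi$ only through that projection. By Natarajan's lemma, the number of distinct projections is at most $n^{d_\Pi}K^{2d_\Pi}=(nK^2)^{d_\Pi}$. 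In $t$, the functions $t\mapsto\1\{Y_i\le t\}$ change only at the $n$ observed rewards. The paper's explicit constant $\log(20/\delta)$, with no $\log n$ from $t$, suggests avoiding a grid union bound. I would instead use the DKW-style argument from the proof of \Cref{thm-ope}, which handles the sup over $t$ at cost $\log(8/\delta)$. I would then take a union bound only over the at most $(nK^2)^{d_\Pi}$ projections, which gives the $d_\Pi\log(nK^2)$ term. For (a), apply a uniform DKW/VC-type bound to the class $\{x\mapsto G(t\mid x,\pi)\}$. A symmetrization argument over contexts, again with at most $(nK^2)^{d_\Pi}$ projection patterns, gives a deviation of order $\sqrt{\frac{8}{n}[\log\frac{c}{\delta}+d_\Pi\log(nK^2)]}$. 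This accounts for the extra ``$+1$'' in $(\sigma_\pi+2)$ relative to \Cref{thm-ope}.

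\textbf{Assembly and main obstacle.} Allocate $\delta$ across the pieces so the constants combine into $\log(20/\delta)$ with common factor $\sqrt{8/n}$. The coefficients are then $\sigma_\pi$ from (b), $1$ from the DKW-in-$t$ step, and $1$ from (a), giving $(\sigma_\pi+2)$, plus the bias $r_\pi$. The main obstacle is (a) together with the sup over $t$. The class $\{G(\cdot\mid\cdot,\pi)\}$ is indexed by a continuum in $t$ and the policies jointly, and it is not a class of indicators, so the DKW chaining must be combined with the Natarajan counting. I would first try monotonicity in $t$: bracket $t$ by quantiles of $F^\pi$ with mesh about $1/\sqrt n$, and absorb the discretization error into the $\sqrt{\log(\cdot)/n}$ term. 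If this fails, I would fall back on a symmetrization plus Massart argument with the Natarajan count, accepting slightly worse constants.
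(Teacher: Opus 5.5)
Your reduction to $\Fis^\pi$, the bias bound by $r_\pi$, and your treatment of term (b) agree with the paper. Conditional on $X_{1:n}$, the fixed-policy bound $\sigma_\pi\sqrt{\frac8n\log\frac8\delta}$ already includes the supremum over $t$. It depends on $\pi$ only through $(\pi(X_1),\dots,\pi(X_n))$, so a union bound over at most $(nK^2)^{d_\Pi}$ patterns yields the $\sigma_\pi$ part of $R(\pi)$.

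The gap is term (a), which you correctly flag as the crux but leave unresolved. Bracketing $t$ by quantiles of $F^\pi$ does not combine with the Natarajan count. The grid points $t^\pi_k$ depend on $\pi$ as a whole, not only on its projection onto the sample. So after symmetrization the vectors $(G(t^\pi_k\mid X_i,\pi))_{i\le n}$ are no longer controlled by the $(nK^2)^{d_\Pi}$ pattern count. A $\pi$-independent grid, e.g. quantiles of $\frac1K\sum_a\Eb\,G(\cdot\mid X,a)$ with $O(K\sqrt n)$ points, would give a bound of the right order. But it adds $\log(K\sqrt n)$ inside the root and an additive $O(n^{-1/2})$. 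Neither this nor your Massart fallback (which you concede changes constants) yields the stated $R(\pi)$.

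The missing idea is a resolution-free bound on the sup over $t$ for a Rademacher process whose summands are CDFs rather than indicators:
\begin{enumerate}
\item Approximate each $G(\cdot\mid x_i,\pi)$ within $1/(2m)$ by a uniform mixture of $m$ point masses (\Cref{lem-stepFunc}).
\item Use the triangle inequality and convexity of the exponential to reduce to the indicator processes $\sup_t|\sum_i\varepsilon_i\1\{s^i_j\le t\}|$. After sorting, these are maxima of Rademacher partial sums and are controlled by \Cref{lem-maxIneq}.
\item Let $m\to\infty$ to get $\Eb\exp(\lambda\|\frac1n\sum_i\varepsilon_iG(\cdot\mid x_i,\pi)\|_\infty)\le 4e^{\lambda^2/(2n)}$.
\end{enumerate}
The paper combines this with a ghost-sample symmetrization in probability. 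Given $X_{1:n}$, it fixes a maximizer $\pi^\dagger$ and uses that the fixed-policy bound holds for an independent copy $X'_{1:n}$ with conditional probability at least $4/5$. It then takes a union bound over patterns (\Cref{lem-uniBound-1}). This gives exactly $2\sqrt{\frac8n[\log\frac{20}\delta+d_\Pi\log(nK^2)]}$ for term (a), with probability at least $1-\delta/2$.

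Your constant bookkeeping is also inaccurate, and this obscures the gap. There is no separate ``$+1$ from the DKW-in-$t$ step.'' In \Cref{thm-ope} the coefficient $\sigma_\pi$ already covers the supremum over $t$ in term (b), and the $+1$ there is term (a) for a fixed policy (\Cref{lem-ope-2}). In the uniform bound the entire $+2$ comes from term (a); the factor $2$ is the cost of symmetrization. That cost cannot be avoided by union-bounding the fixed-policy bound for (a) over patterns. The patterns $(\pi(X_i))_i$ are determined by the very contexts whose randomness drives (a). Your claim that (a) contributes coefficient $1$ uniformly over $\Pi$ is therefore unsupported, and the tally $\sigma_\pi+1+1$ is not a derivation of $(\sigma_\pi+2)$.
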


The proof is deferred to \Cref{pf-thm-uniBound-IS}. Compared with the pointwise bound in \Cref{thm-ope}, $R(\pi)$ incurs an additional complexity term $\sqrt{d_{\Pi}}$ to ensure uniformity over $\Pi$. Moreover, since \Cref{pro-pessi} shows that the suboptimality of $\tilde{\pi}$ depends only on $R(\pi^*)$, we obtain a convergence rate of $\tilde{\mathcal{O}}(n^{-1/2})$ under the overlap condition for $\pi^*$, where $\tilde{\mathcal{O}}(\cdot)$ omits logarithm factors. This rate matches that achieved for expected-reward maximization \citep{jin2025policy}.

\begin{remark}\label{rmk-1}
We emphasize that our distributional approach should be preferable to expectation-based pessimistic learning \citep{jin2025policy}, even when the ultimate objective is expected-reward maximization. 
Recalling the pessimistic policy optimization procedure in \Cref{def-pessimism}, the penalty term $L R(\pi)$ remains meaningful even for policies without overlap. 
Specifically, $R(\pi)$ consists of two components: the fraction of uninformative samples $r_\pi$, and the uncertainty penalty of informative samples, characterized by $\sigma_\pi$ or $\sigma_\pi'$. 
As a result, even when no policy in $\Pi$ satisfies the overlap condition, our algorithm can still identify policies that maximize the conservative estimate on informative samples while having the fewest uninformative ones, along with meaningful suboptimality guarantees. 
In contrast, \citet{jin2025policy} assigns an infinite penalty uniformly to all policies without overlap, thereby restricting optimization to the overlap-satisfying policies and providing no suboptimality guarantee when the optimal policy itself lacks overlap. 
We note that this key difference arises from the fact that our distributional estimators preserve information from all samples, including those inevitably discarded by estimators targeting a single functional, such as the expectation.
\end{remark}

For clarity of presentation, the results in this and subsequent theorems are driven based on the first concentration bound in \Cref{thm-ope}. Analogous results can be obtained using the Bernstein-type bound, in which case $\sigma_\pi$ is replaced by $\sigma_\pi'$ and an additional $n^{-1}$ term is included.

\begin{theorem}\label{thm-uniBound-WIS}
Let $\Fwis^\pi$ be the WIS estimator defined in \Cref{def-wis}, with $\sigma_\pi, r_\pi$ defined in \Cref{def-sigma}, \Cref{def-r}. For any $\delta\in(0,1)$, define $R(\pi) \coloneq \1\{ \eta_{\pi} \geq 1 \} +  \1\{ \eta_{\pi} < 1 \} \cdot \xi_{\pi}$, where
\begin{align*}
	&\xi_{\pi} \coloneq (\frac{ \sigma_\pi }{ 1-\eta_{\pi} } + 2) \sqrt{\frac{8}{n}[\log\frac{20}{\delta}+ d_{\Pi}  \log(nK^2) ]} \\
	& \hspace*{10em} +  \frac{|\Is_\pi|}{n}\frac{\eta_{\pi}}{1-\eta_{\pi}} + r_\pi,
\end{align*}
and $\eta_{\pi} \coloneq \sigma_\pi\sqrt{ \frac{n}{2|\Is_\pi|^2} [\log\frac{8}{\delta} +  d_{\Pi}  \log(nK^2)] }$. It then holds
$$
\Pb \left( 	\left\| \Fwis^\pi - F^\pi \right\|_\infty \leq R(\pi), \; \forall \pi\in\Pi \right) \geq 1-\delta.
$$
\end{theorem}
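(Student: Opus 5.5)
Since every CDF difference is at most $1$ in sup norm, the case $\eta_\pi\ge 1$ is immediate: $R(\pi)=1\ge\|\Fwis^\pi-F^\pi\|_\infty$. All the work is therefore in the case $\eta_\pi<1$. There I will reduce the WIS estimator to the IS estimator, whose uniform behaviour is controlled by \Cref{thm-uniBound-IS}. Write $S_\pi(t)\coloneq\frac1n\sum_{i\in\Is_\pi}w_\pi(X_i,A_i)\1\{Y_i\le t\}$, so that $\Fis^\pi=S_\pi+r_\pi$, and put $q_\pi\coloneq|\Is_\pi|/n$. Then
$$\Fwis^\pi(t)=\frac{S_\pi(t)}{q_\pi W_\pi}+r_\pi .$$
The normalizer is $q_\pi W_\pi=\frac1n\sum_{i\in\Is_\pi}w_\pi(X_i,A_i)=S_\pi(\infty)$, which is exactly the IS estimate of the constant function $1$ restricted to informative samples.

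\textbf{Step 1 (concentration of the normalizer).} I will show that, uniformly over $\Pi$, $|W_\pi-1|\le\eta_\pi$. Fix $\pi$ and condition on the contexts $X_{1:n}$. The summands $w_\pi(X_i,A_i)$, $i\in\Is_\pi$, are then independent and lie in $[0,1/\beta(X_i,\pi)]$, each with conditional mean $1$. Hoeffding's inequality gives
$$\Bigl|\tfrac{1}{|\Is_\pi|}\textstyle\sum_{i\in\Is_\pi}w_\pi-1\Bigr|\le \frac{\sqrt{n}\,\sigma_\pi}{|\Is_\pi|}\sqrt{\tfrac12\log\tfrac2\delta'} .$$
To make this uniform, I condition on $X_{1:n}$ and use the fact that $W_\pi$ depends on $\pi$ only through $(\pi(X_1),\dots,\pi(X_n))$. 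By the Natarajan lemma there are at most $n^{d_\Pi}K^{2d_\Pi}$ such projections. A union bound then gives the stated $\eta_\pi$, with $\log\frac8\delta$ coming from allotting part of the budget $\delta$ to this event.

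\textbf{Step 2 (concentration of the numerator).} On the event of \Cref{thm-uniBound-IS}, run at a reduced confidence level (hence the $20$ in place of the constant there), I have, uniformly in $t$ and $\pi$,
$$|S_\pi(t)-(F^\pi(t)-\bar r_\pi(t))|\le \epsilon_\pi ,$$
where $\epsilon_\pi$ is the $\sqrt{\cdot}$ term and $\bar r_\pi(t)$ is the population mass of uninformative contexts. I expect I will actually need to reopen the proof of \Cref{thm-uniBound-IS} to extract the bound on $S_\pi$ before the $r_\pi$ shift and the clipping, since the target quantity is $\frac1n\sum_{i\in\Is_\pi}G(t\mid X_i,\pi)$.

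\textbf{Step 3 (combining).} I will decompose
$$\Fwis^\pi-F^\pi=\Bigl(\frac{S_\pi}{q_\pi W_\pi}-\frac{S_\pi}{q_\pi}\Bigr)+\Bigl(\frac{S_\pi}{q_\pi}-S_\pi\Bigr)+\bigl(S_\pi+r_\pi-F^\pi\bigr).$$
The first term is at most $\frac{|S_\pi|}{q_\pi}\cdot\frac{\eta_\pi}{1-\eta_\pi}$. To bound $|S_\pi|$ I will use $S_\pi\le q_\pi+\epsilon_\pi$, which should yield $q_\pi\frac{\eta_\pi}{1-\eta_\pi}$ plus an $\epsilon_\pi$-type term absorbed into $\frac{\sigma_\pi}{1-\eta_\pi}\sqrt{\cdot}$. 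The middle term is intended to merge with the $r_\pi$ bias. The last term is the IS error, contributing $(\sigma_\pi+2)\sqrt{\cdot}+r_\pi$.

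\textbf{Main obstacle.} The hardest part is arranging the algebra so that the constants match exactly, namely the coefficients $\frac{\sigma_\pi}{1-\eta_\pi}+2$ and $\frac{|\Is_\pi|}{n}\frac{\eta_\pi}{1-\eta_\pi}$. The cleanest route is probably to write
$$\frac{S_\pi}{q_\pi W_\pi}=\frac{S_\pi}{q_\pi}\cdot\frac{1}{W_\pi}$$
and compare it to $\frac1{|\Is_\pi|}\sum_{i\in\Is_\pi}G(t\mid X_i,\pi)\in[0,1]$, expanding $1/W_\pi\le 1/(1-\eta_\pi)$. Then $(1-r_\pi)\times[\cdot]$ plus $r_\pi$ approximates $F^\pi$ up to an additional sampling error of the contexts, which accounts for the "$+2$". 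Finally, I will allocate $\delta$ among the three union-bounded events so that the constants $8$ and $20$ come out as stated.
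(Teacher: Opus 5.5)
Your ingredients match the paper's: conditional Hoeffding for $W_\pi$, a union bound over the at most $n^{d_\Pi}K^{2d_\Pi}$ projections $(\pi(X_1),\dots,\pi(X_n))$, $1/W_\pi\le 1/(1-\eta_\pi)$, and \Cref{lem-uniBound-1} for the context fluctuation. However, Step~3 rests on a miscomputed identity and cannot reach $R(\pi)$ as written.

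From \Cref{def-wis}, $\frac1n\sum_{i\in\Is_\pi}\hat H(t\mid X_i,\pi)=S_\pi(t)/W_\pi$, so $\Fwis^\pi=S_\pi/W_\pi+r_\pi$. The normalizer is $W_\pi$, not $q_\pi W_\pi=S_\pi(\infty)$. Your formula would give the value $1+r_\pi$ at $t=\infty$, which also contradicts the ``error at most $1$'' argument you use when $\eta_\pi\ge 1$. The spurious factor $1/q_\pi$ damages every term of your decomposition:
\begin{itemize}
\item the first term is $\frac{|S_\pi|}{q_\pi}\frac{\eta_\pi}{1-\eta_\pi}$, which for large $t$ is about $\frac{\eta_\pi}{1-\eta_\pi}$ rather than $q_\pi\frac{\eta_\pi}{1-\eta_\pi}$, and its $\epsilon_\pi$-part is inflated by $1/q_\pi$;
\item the middle term $S_\pi(1/q_\pi-1)=S_\pi r_\pi/q_\pi$ is a second bias of size up to about $r_\pi$. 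It adds to the $r_\pi$ already in the IS error; it cannot ``merge with'' it.
\end{itemize}
So you would end with $2r_\pi$. In fact, for the estimator you wrote down, a bound with a single $r_\pi$ is false: take $t$ with $G(t\mid x,\pi)\approx 1$ on informative and $\approx 0$ on uninformative contexts.

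With the correct identity, set $\bar G_\pi(t)=\frac1n\sum_{i\in\Is_\pi}G(t\mid X_i,\pi)\in[0,q_\pi]$ and decompose
\begin{align*}
\Fwis^\pi-F^\pi &= \Bigl(\tfrac{1}{W_\pi}-1\Bigr)S_\pi+\bigl(S_\pi-\bar G_\pi\bigr)+\tfrac1n\textstyle\sum_{i\notin\Is_\pi}\bigl(1-G(\cdot\mid X_i,\pi)\bigr)\\
&\quad+\Bigl(\tfrac1n\textstyle\sum_{i=1}^{n}G(\cdot\mid X_i,\pi)-F^\pi\Bigr).
\end{align*}
The third term lies in $[0,r_\pi]$. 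The last is handled by \Cref{lem-uniBound-1}, which is where the ``$+2$'' comes from; no population bias $\bar r_\pi(t)$ is needed.

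The key point is where $\|S_\pi-\bar G_\pi\|_\infty\le\epsilon_\pi$, with $\epsilon_\pi=\sigma_\pi\sqrt{\frac8n[\log\frac{16}{\delta}+d_\Pi\log(nK^2)]}$, comes from. It must come from \Cref{lem-ope-1} conditionally on $X_{1:n}$, union-bounded jointly with the Hoeffding event. It should not come from the event of \Cref{thm-uniBound-IS}. That event controls $\Fisc^\pi-F^\pi$, which already contains $r_\pi$ and the context term. Using it to bound $|S_\pi|$ multiplies those pieces by $\eta_\pi/(1-\eta_\pi)$, so the coefficients no longer match.

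With $|S_\pi|\le q_\pi+\epsilon_\pi$ and $|W_\pi^{-1}-1|\le\eta_\pi/(1-\eta_\pi)$, the first two terms give $(q_\pi+\epsilon_\pi)\frac{\eta_\pi}{1-\eta_\pi}+\epsilon_\pi=\frac{\epsilon_\pi}{1-\eta_\pi}+\frac{|\Is_\pi|}{n}\frac{\eta_\pi}{1-\eta_\pi}$, exactly the shape of $\xi_\pi$. The paper instead splits as $\frac{1}{W_\pi}(S_\pi-\bar G_\pi)+(\frac{1}{W_\pi}-1)\bar G_\pi$ and uses $\bar G_\pi\le q_\pi$, which gives the same bound without bounding $|S_\pi|$.

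The constants follow from this allocation: give Hoeffding and \Cref{lem-ope-1} conditional failure probability $\delta/4$ each, divided by $n^{d_\Pi}K^{2d_\Pi}$ for the union bound, and give $\delta/2$ to \Cref{lem-uniBound-1}.
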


The proof is deferred to \Cref{pf-thm-uniBound-WIS}. The key idea in analyzing WIS is using $\eta_\pi$ to quantify the deviation of the averaged importance weights $W_\pi$ from 1. When $\eta_\pi \ge 1$, we conservatively bound the CDF error by 1. However, we typically have $\eta_{\pi} < 1$, so that $R(\pi) = \xi_\pi$. In fact, for policies such that the pointwise overlap holds with a positive $\Pb_X$-probability, we have $|\Is_\pi| \asymp n$, implying $\eta_\pi = \tilde{\mathcal{O}}(n^{-1/2})$. In this regime, the resulting convergence rate matches that of the IS estimator in \Cref{thm-uniBound-IS}. To the best of our knowledge, this provides the first finite-sample guarantee for the WIS CDF estimator.

\begin{theorem}\label{thm-uniBound-DR}
Let $\Fdrc^\pi$ be the IS estimator defined in \Cref{def-drc}, with $\sigma_\pi$ defined in \Cref{def-sigma}. For any $\delta\in(0,1)$, define
$$
R(\pi) \coloneq 2(\sigma_\pi + 1) \sqrt{ \frac{8}{n} \left[ \log\frac{20}{\delta} + d_{\Pi} \log(nK^2) \right] }  +  \bar{r}_\pi,
$$
where $\bar{r}_\pi \coloneq \left\| \frac{1}{n}\sum_{i\notin\Is_\pi} \left[ \bar{G}(\cdot\mid X_i,\pi) - G(\cdot\mid X_i,\pi) \right] \right\|_\infty$.
It then holds that
$$
\Pb \left( 	\left\| \Fdrc^\pi - F^\pi \right\|_\infty \leq R(\pi), \; \forall \pi\in\Pi \right) \geq 1-\delta.
$$
\end{theorem}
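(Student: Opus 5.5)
First, I would reduce the clipped and monotonized estimator to the raw DR estimator. The map $f\mapsto \max\{\min\{\max_{t'\le t} f(t'),1\},0\}$ is non-expansive in sup norm toward any non-decreasing function with values in $[0,1]$. Clipping to $[0,1]$ is a pointwise projection onto an interval containing $F^\pi(t)$. For the running maximum, if $|f-F|\le\epsilon$ everywhere and $F$ is non-decreasing, then $\max_{t'\le t}f(t')\le \max_{t'\le t}F(t')+\epsilon=F(t)+\epsilon$ and $\max_{t'\le t}f(t')\ge f(t)\ge F(t)-\epsilon$. Hence $\|\Fdrc^\pi-F^\pi\|_\infty\le \|\Fdr^\pi-F^\pi\|_\infty$, and it suffices to bound the unclipped DR error uniformly over $\Pi$.

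Next, I would decompose $\Fdr^\pi(t)-F^\pi(t)$ into three pieces. The first is the noise term $T_1(t)=\frac1n\sum_{i\in\Is_\pi}w_\pi(X_i,A_i)[\1\{Y_i\le t\}-G(t\mid X_i,A_i)]$. The second is the model-correction term $T_2(t)=\frac1n\sum_{i\in\Is_\pi}\bigl[(\bar G-G)(t\mid X_i,\pi)-w_\pi(X_i,A_i)(\bar G-G)(t\mid X_i,A_i)\bigr]$. The third is the context-sampling term $T_3(t)=\frac1n\sum_{i=1}^n G(t\mid X_i,\pi)-F^\pi(t)$. The leftover $\frac1n\sum_{i\notin\Is_\pi}(\bar G-G)(t\mid X_i,\pi)$ is exactly the bias piece, whose sup norm is $\bar r_\pi$. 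Conditional on $\{X_i\}$, $T_1$ and $T_2$ are averages of independent mean-zero terms, since $\Eb[w_\pi(X,A)h(X,A)\mid X]=h(X,\pi)$ on $\Is_\pi$. The $i$-th summand in each has range of order $1/\beta(X_i,\pi)$, with $|\bar G-G|\le1$. Conditioning on contexts makes $\sigma_\pi$ a fixed quantity, so Hoeffding with ranges $\propto 1/\beta(X_i,\pi)$ gives deviation $\lesssim\sigma_\pi\sqrt{\log(1/\delta)/n}$ for each fixed $t$. This mirrors the IS argument of \Cref{thm-ope}, and the two terms together produce the factor $2\sigma_\pi$. $T_3$ is a plain empirical average of $[0,1]$-valued functions and gives the $+1$ (doubled to $2$) part.

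To pass from fixed $t$ to the sup over $t$ and all $\pi\in\Pi$, I would follow the same route used in the proof of \Cref{thm-uniBound-IS}. Conditionally on the data indices, each indicator in $t$ and each policy only matters through its values on the $n$ sample points. By the Natarajan lemma, $\Pi$ restricted to $n$ points has at most $n^{d_\Pi}K^{2d_\Pi}$ patterns. The functions of $t$ are monotone step functions, so the sup over $t$ reduces to at most $n+1$ thresholds, or is handled by the DKW/symmetrization argument behind \Cref{thm-ope}. A union bound then yields the $\log\frac{20}{\delta}+d_\Pi\log(nK^2)$ term. For the term involving $\bar G$ evaluated at $(X_i,\pi)$, the function $\bar G(\cdot\mid x,a)$ is arbitrary in $t$ and is not a single indicator. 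Here I would use the model's monotonicity in $t$ (it is a CDF model), together with a ghost-sample/symmetrization with Rademacher signs conditional on $(X_i,A_i,Y_i)$, and cover $t$ by a grid at resolution $1/n$.

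The main obstacle I expect is this uniform-in-$t$ control of $T_2$. Its summand mixes $\bar G(t\mid X_i,\pi)$ with $\bar G(t\mid X_i,A_i)$, so it is a difference of monotone functions weighted by $w_\pi$, and is not monotone. My first attempt would be to split $T_2$ into the two monotone parts. Each is a weighted average of monotone $[0,1]$-valued functions, so its sup over $t$ is controlled by finitely many quantile points of the mean function plus $O(\sigma_\pi/n)$ slack, mirroring the empirical DKW argument. I would then allocate the failure probability among the pieces so that the constants match $2(\sigma_\pi+1)$ and $\log(20/\delta)$.
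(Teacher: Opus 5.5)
\textbf{There is a genuine gap: your route gives a bound of the same order, but not the stated $R(\pi)$.}

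Several parts of your plan are fine. The reduction from $\Fdrc^\pi$ to $\Fdr^\pi$ is correct, and cleaner than the paper's. The context term can be taken from \Cref{lem-uniBound-1}, and the union over $\Pi$ goes through via the Natarajan count conditional on $\{X_i\}$.

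The problem is the $\sigma_\pi$-part.
\begin{itemize}
\item You center $T_1$ at $G(t\mid X_i,A_i)$, so your $T_2$ has to carry $\bar G-G$, a difference of two CDFs. Splitting it into its monotone parts creates three separate $\sigma_\pi$-scale deviations: $T_1$, the $\bar G$-part and the $G$-part.
\item The best this yields is a coefficient $3\sigma_\pi$ with $\delta$ split three ways. No allocation of the failure probability turns that into $2\sigma_\pi$ with $\log\frac{20}{\delta}$, so your closing step cannot work as described.
\item Keeping $T_2$ whole does not help with this technique. Its summands are not monotone in $t$, so the sorted-partial-sum argument applies only after splitting, at the cost of a factor $2$.
\item Reducing $\sup_t$ to $n+1$ thresholds, or to a $1/n$-grid, and then taking a union bound adds a logarithm of the grid size and a discretization slack. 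Neither appears in $R(\pi)$.
\end{itemize}

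Two ideas close the gap.

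\emph{Regroup.} $T_1$ plus the $G$-part of $T_2$ is exactly $\frac1n\sum_{i\in\Is_\pi}[w_\pi(X_i,A_i)\1\{Y_i\le t\}-G(t\mid X_i,\pi)]$. This is centered given $X_i$ alone and is bounded directly by \Cref{lem-ope-1}. What remains is only the model term $\frac1n\sum_{i\in\Is_\pi}[w_\pi(X_i,A_i)\bar G(t\mid X_i,A_i)-\bar G(t\mid X_i,\pi)]$. Since $w_\pi\neq 0$ forces $A_i=\pi(X_i)$, each sample contributes the single monotone function $\bar G(\cdot\mid X_i,\pi)$.

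\emph{Control $\sup_t$ inside the moment generating function, with no union over $t$.}
\begin{itemize}
\item Symmetrize with a ghost action $A_i'$.
\item Approximate each $\bar G(\cdot\mid X_i,\pi)$ within $1/(2m)$ by $\frac1m\sum_{j=1}^m\1\{s^i_j\le t\}$, using \Cref{lem-stepFunc}.
\item Use the triangle inequality and convexity of $\exp$ to bound the MGF by the average over $j$ of the MGFs of $\frac{2\lambda}{n}\|\sum_i\varepsilon_i w_\pi(x_i,a_i)\1\{s^i_j\le\cdot\}\|_\infty$.
\item For each $j$, sorting the $s^i_j$ turns this into a maximum of partial sums. \Cref{lem-maxIneq} then bounds it by $4\exp(2\lambda^2\sigma_\pi^2/n)$.
\item The approximation cost $\exp(\frac{\lambda}{m}\max_i w_\pi(x_i,a_i))$ vanishes as $m\to\infty$.
\end{itemize}
This is \Cref{lem-uniBound-DR-1}. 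It supplies the second $\sigma_\pi\sqrt{(8/n)\log(8/\delta)}$, so the total coefficient is exactly $2\sigma_\pi$ rather than $3\sigma_\pi$ plus grid terms.
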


The proof is deferred to \Cref{pf-thm-uniBound-DR}. 
By incorporating the model $\bar{G}$, the bias due to lack of overlap is captured by $\bar{r}_\pi$, which can be substantially smaller than $r_\pi$ if the model is accurate. For samples in $\Is_\pi$, the DR estimator achieves a similar convergence rate up to a constant factor relative to the IS estimator. 
Nevertheless, the bound is conservative since it accounts for worst-case model misspecification, under which the DR estimator may perform worse than IS. 
Overall, $\Fdrc^\pi$ inherits the advantages of both IS and the model, resulting in more robust performance in practice.

Combining \Cref{thm-uniBound-IS}--\ref{thm-uniBound-DR} with Proposition~\ref{pro-pessi} immediately yields high-confidence upper bounds on the suboptimality of $\tilde{\pi}$ for various CDF estimators. Here, we illustrate the result using the DR estimator, under a overlap condition for the optimal policy.

\begin{corollary}\label{cor-bound-dr}
Suppose there exists a constant $\beta_{\inf}>0$ such that $\inf_{x\in\Xs}\beta(x,\pi^*)\geq \beta_{\inf}$, and that the sample size $n$ satisfies  $\log(20/\delta) +  d_{\Pi} \log(nK^2) \leq c_0 n\beta_{\inf}$. Then, for any constant $c \geq 8 (4\sqrt{2} + \frac{\sqrt{c_0}}{3})$ and any $\delta\in(0,1)$, it holds with probability at least $1-\delta$ that
$$
\rho_{\pi^*} - \rho_{\tilde{\pi}} \leq c L  \sqrt{ \frac{  d_{\Pi} \log(nK^2) \cdot \log(20/\delta) }{n\beta_{\inf}} } .
$$
\end{corollary}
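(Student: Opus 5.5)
The proof combines \Cref{pro-pessi} with the uniform bound of \Cref{thm-uniBound-DR}, evaluated at $\pi^*$. On the event of \Cref{thm-uniBound-DR}, which has probability at least $1-\delta$, \Cref{pro-pessi} gives $\rho_{\pi^*}-\rho_{\tilde\pi}\le 2LR(\pi^*)$. So it suffices to bound $R(\pi^*)$ deterministically under the overlap assumption. The first simplification is immediate. Since $\beta(x,\pi^*)\ge\beta_{\inf}>0$ for every $x$, we have $\Is_{\pi^*}=[n]$, so the bias term vanishes: $\bar r_{\pi^*}=0$ (and likewise $r_{\pi^*}=0$).

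The main obstacle is the dependence on $\beta_{\inf}$. The Hoeffding-type quantity only gives $\sigma_{\pi^*}\le 1/\beta_{\inf}$, which would produce a rate of order $1/(\beta_{\inf}\sqrt n)$. The claimed rate is $1/\sqrt{n\beta_{\inf}}$. The form of the constant $c\ge 8(4\sqrt2+\sqrt{c_0}/3)$ strongly suggests using the Bernstein-type analogue of \Cref{thm-uniBound-DR}. The paper notes that this analogue follows by replacing $\sigma_\pi$ with $\sigma'_\pi$ and adding an $n^{-1}$ term, obtained by running the uniform argument on the second bound of \Cref{thm-ope}. Writing $\Lambda\coloneq\log(20/\delta)+d_\Pi\log(nK^2)$, I would take this analogue in the form
$$
R(\pi)=2(\sigma'_\pi+1)\sqrt{\frac{8\Lambda}{n}}+2\cdot\frac{2\Lambda}{3n\beta_{\min}}+\bar r_\pi .
$$
I would verify that the factor $2$ carried over from the DR decomposition (IS part plus model part) multiplies both stochastic terms. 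This is the step where I am least sure of the exact constants, and I would re-derive it from the proof of \Cref{thm-uniBound-DR} if needed.

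With this form, the bound on $R(\pi^*)$ follows in three steps.
\begin{enumerate}
\item Bound the leading term. Since $\sigma'^2_{\pi^*}=\frac1n\sum_i 1/\beta(X_i,\pi^*)\le 1/\beta_{\inf}$ and $\beta_{\inf}\le 1$, we get $\sigma'_{\pi^*}+1\le 2/\sqrt{\beta_{\inf}}$. Hence the first term is at most $4\sqrt{8}\sqrt{\Lambda/(n\beta_{\inf})}=8\sqrt2\sqrt{\Lambda/(n\beta_{\inf})}$.
\item Bound the $n^{-1}$ term. Using $\beta_{\min}\ge\beta_{\inf}$ and the sample-size condition $\Lambda\le c_0 n\beta_{\inf}$, we have $\Lambda/(n\beta_{\inf})\le\sqrt{c_0}\sqrt{\Lambda/(n\beta_{\inf})}$. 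So the second term is at most $\frac{4\sqrt{c_0}}{3}\sqrt{\Lambda/(n\beta_{\inf})}$.
\item Combine. Multiplying by $2L$ gives
$$
\rho_{\pi^*}-\rho_{\tilde\pi}\le L\Bigl(16\sqrt2+\tfrac{8\sqrt{c_0}}{3}\Bigr)\sqrt{\Lambda/(n\beta_{\inf})}.
$$
\end{enumerate}

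It remains to turn the sum $\Lambda$ into the product in the statement. Set $a=\log(20/\delta)$ and $b=d_\Pi\log(nK^2)$. Both satisfy $a,b\ge1$, since $\log 20>1$, and $\log(nK^2)\ge\log 4>1$ for $K\ge2$, $d_\Pi\ge1$. Therefore $a+b\le ab+1\le 2ab$, so $\sqrt{\Lambda}\le\sqrt2\sqrt{ab}$. This costs at most a further factor $\sqrt2$. The resulting constant, $\sqrt2\bigl(16\sqrt2+\tfrac{8\sqrt{c_0}}{3}\bigr)=32+\tfrac{8\sqrt{2c_0}}{3}$, is at most $8(4\sqrt2+\sqrt{c_0}/3)=32\sqrt2+\tfrac{8\sqrt{c_0}}{3}$. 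This fits exactly within the stated constant, which completes the proof.
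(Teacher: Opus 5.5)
Your route is the paper's: \Cref{pro-pessi} on the event of the Bernstein-type version of \Cref{thm-uniBound-DR}, with $\bar r_{\pi^*}=0$, $\sigma'_{\pi^*}\le\beta_{\inf}^{-1/2}$, $\beta_{\min}\ge\beta_{\inf}$, the sample-size condition, and a final sum-to-product step. You depart from the paper in two places. First, you double the $n^{-1}$ term; the paper's $R$ carries $\frac{2}{3n\beta_{\min}}[\log\frac{16}{\delta}+d_\Pi\log(nK^2)]$ with no factor $2$, although your doubling is the more faithful reading of the two-piece DR decomposition. Second, you use $a+b\le 2ab$ where the paper uses $a+b\le 4ab$. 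With the paper's $R$ and conversion, one gets $2\cdot 2\,(8\sqrt2+\tfrac23\sqrt{c_0})=8(4\sqrt2+\sqrt{c_0}/3)$ exactly.

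\textbf{The gap.} Your combination of choices breaks your final sentence, because
\[
\Bigl(32\sqrt2+\tfrac{8\sqrt{c_0}}{3}\Bigr)-\Bigl(32+\tfrac{8\sqrt{2c_0}}{3}\Bigr)=(\sqrt2-1)\Bigl(32-\tfrac{8\sqrt{c_0}}{3}\Bigr),
\]
which is negative once $c_0>144$. The $\sqrt2$ you pay in converting $\sqrt{\Lambda}$ to $\sqrt{ab}$ also multiplies the doubled $n^{-1}$ term. That produces $\frac{8\sqrt2}{3}\sqrt{c_0}$ instead of $\frac83\sqrt{c_0}$. The fixed slack you gained on the leading term ($32$ versus $32\sqrt2$) cannot absorb a loss that grows like $\sqrt{c_0}$. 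Since $c_0$ is arbitrary in the statement, your argument as written proves the corollary only for $c_0\le 144$.

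\textbf{A repair valid for every $c_0$.} Any two CDFs are within $1$ in sup norm, so Lipschitz continuity gives $\rho_{\pi^*}-\rho_{\tilde\pi}\le L$ deterministically. The claim is therefore trivial when $c\sqrt{ab/(n\beta_{\inf})}\ge 1$. Otherwise $ab/(n\beta_{\inf})<1/c^2\le 1/2048$. Then $\Lambda/(n\beta_{\inf})\le 2ab/(n\beta_{\inf})<1/1024$, so your $n^{-1}$ term is at most $\frac43\cdot\frac{1}{32}\sqrt{\Lambda/(n\beta_{\inf})}$. Hence
\[
2LR(\pi^*)\le 2\sqrt2\Bigl(8\sqrt2+\tfrac{1}{24}\Bigr)L\sqrt{\tfrac{ab}{n\beta_{\inf}}}<33\,L\sqrt{\tfrac{ab}{n\beta_{\inf}}}\le cL\sqrt{\tfrac{ab}{n\beta_{\inf}}}.
\]
This also makes the conclusion insensitive to the exact constant in front of the $n^{-1}$ term, which is the constant you flagged as uncertain. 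Alternatively, you can bound that term through $\Lambda\le ab$, which holds when $1/a+1/b\le 1$; this gives exactly $\frac83\sqrt{c_0}$. However, that condition can fail for very small $n$, $K$, $d_\Pi$, so it does not cover every case.
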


The proof is deferred to \Cref{pf-cor-bound-dr}. This corollary implies that provably efficient offline policy learning with an $\tilde{\mathcal{O}}(n^{-1/2})$ convergence rate is feasible for all Lipschitz risk functional, as long as the optimal policy is sufficiently explored. Moreover, the suboptimality bound scales with the sample size $n$, the policy class complexity $d_\Pi$, and the overlap constant $\beta_{\inf}$ in the same order as in the expected-reward optimization \citep{jin2025policy}, indicating that offline risk-aware policy learning is statistically no harder than conventional expected-reward learning.

\subsection{Minimax Optimality}

We now establish a minimax lower bound, trying to show that the convergence rate obtained in \Cref{cor-bound-dr} is optimal given the policy $\Pi$. We first make the dependence on the underlying environment explicit. For any contextual bandit environment $v$, let $F^{\pi,v}$ denote the marginal reward CDF induced by policy $\pi$ under $v$, and let 
$$\pi^*_v \in \argmax_{\pi\in\Pi}\; \rho(F^{\pi,v})$$
be the optimal policy of $v$ for a fixed risk functional $\rho$. We consider $n$ offline samples generated from $v$ using a behavior policy $\beta$, and restrict attention to instances for which the optimal policy satisfies a overlap condition as in \Cref{cor-bound-dr}.

\begin{theorem}\label{thm-minimax}
Suppose the risk measure $\rho$ is monotone as defined in \Cref{def-monotone}. For a constant $\beta_{\inf}\in(0,1)$, denote by $\Vs$ the class of environment and behavior policy pairs for which the optimal policy has a $\beta_{\inf}$ overlap, i.e.,
$$\Vs \coloneq \{ (v,\beta): \inf_{x\in\Xs} \beta(x,\pi^*_v)\geq\beta_{\inf} \}.$$
If $n\beta_{\inf}/d_{\Pi} \geq 2/3$, then for any (possibly data-dependent) policy $\hat{\pi}\in\Pi$, it holds that
$$
\inf_{\hat{\pi}\in\Pi} \sup_{(v,\beta)\in\Vs} \Eb\,W_1( F^{\pi^*_v,v} , F^{\hat{\pi},v} ) \geq \frac{1}{8\sqrt{6e}}\sqrt{ \frac{d_\Pi}{n\beta_{\inf}} },
$$
where $W_1(\cdot,\cdot)$ denotes the $1$-Wasserstein distance of two CDFs.
\end{theorem}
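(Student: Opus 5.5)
We follow the standard Fano/Assouad-type construction for offline policy learning lower bounds, as in \citet{jin2025policy}, adapted to the distributional loss $W_1$. Since $\Pi$ has Natarajan dimension $d_\Pi$, there is a shattered set $\Ss=\{x_1,\ldots,x_d\}$, $d=d_\Pi$, with witnessing labelings $f_1,f_2$. Take $\Pb_X$ uniform on $\Ss$. For each $\tau\in\{0,1\}^d$, fix $\pi_\tau\in\Pi$ with $\pi_\tau(x_j)=f_1(x_j)$ if $\tau_j=1$ and $f_2(x_j)$ otherwise; such a policy exists by shattering. Define the behavior policy $\beta(x_j,f_1(x_j))=\beta(x_j,f_2(x_j))=\beta_{\inf}$. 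The remaining mass $1-2\beta_{\inf}$ goes to other actions if $K>2$; if $K=2$, instead use an asymmetric split in which the action favored under $\tau$ keeps probability at least $\beta_{\inf}$. Every optimal policy then has $\beta_{\inf}$ overlap, so all environments below lie in $\Vs$.

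Rewards take values in $\{0,1\}$. Under environment $v_\tau$, set $Y\mid X=x_j,A=f_1(x_j)\sim\Bern(1/2+\epsilon(2\tau_j-1))$, set $Y\mid X=x_j,A=f_2(x_j)\sim\Bern(1/2)$, and give all other actions reward $0$. By monotonicity of $\rho$, the optimal policy should select, at each $x_j$, the action that is stochastically larger. Monotonicity is what lets us identify $\pi^*_{v_\tau}$ as (a policy agreeing with) $\pi_\tau$ on $\Ss$, since that policy's CDF is pointwise minimal.

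Next, for any $\hat\pi$, the induced CDFs are mixtures: $F^{\pi,v}=\frac1d\sum_j G(\cdot\mid x_j,\pi(x_j))$. On $\{0,1\}$, $W_1$ equals the difference of means. Since the errors all have the same sign, $W_1(F^{\pi_\tau,v_\tau},F^{\hat\pi,v_\tau})\ge\frac{\epsilon}{d}\sum_j\1\{\hat\pi(x_j)\neq\pi_\tau(x_j)\}$. This reduces the problem to a Hamming-loss estimation of $\tau$.

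Apply Assouad's lemma, averaging over $\tau$ uniform. For neighbours $\tau,\tau'$ differing in coordinate $j$, the data distributions differ only on samples with $X=x_j$ and $A=f_1(x_j)$. The expected number of such samples is $n\beta_{\inf}/d$. Hence $\KL(P_\tau^n\|P_{\tau'}^n)\le\frac{n\beta_{\inf}}{d}\KL(\Bern(1/2+\epsilon)\|\Bern(1/2-\epsilon))\le\frac{n\beta_{\inf}}{d}\cdot c\epsilon^2$ for $\epsilon\le1/4$. Assouad then gives
\[
\sup_\tau\Eb\,W_1\ge\frac{\epsilon}{2}\Bigl(1-\sqrt{\tfrac12\KL}\Bigr)
\]
(or the Bretagnolle--Huber version, $\frac{\epsilon}{4}e^{-\KL}$). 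Choose $\epsilon\asymp\sqrt{d/(n\beta_{\inf})}$, say with $\epsilon^2=d/(6n\beta_{\inf})$ so the exponent is constant. Together with the $e^{-1/2}$ factor, this yields the stated constant $\frac{1}{8\sqrt{6e}}$. The hypothesis $n\beta_{\inf}/d\ge 2/3$ ensures $\epsilon\le 1/2$, so the Bernoulli parameters stay valid; this explains the precise threshold.

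The main obstacle is making the constructed environments genuinely lie in $\Vs$, with $\pi^*_v$ overlapping, for an arbitrary monotone $\rho$ and arbitrary $\hat\pi\in\Pi$ outside $\Ss$. Two issues need care. First, the $\argmax$ may not be unique, and $\rho$ only sees the mixture CDF. This is handled because first-order stochastic dominance pointwise on $\Ss$ makes $\pi_\tau$ an optimizer, and we may designate it as $\pi^*_v$. Second, the behavior policy must be chosen with $\beta_{\inf}$ on both witnessing actions. Because monotone but non-strict $\rho$ could be constant, the loss is measured by $W_1$ rather than by $\rho$, which sidesteps that degeneracy.
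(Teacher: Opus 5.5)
Your argument follows the paper's proof essentially step for step. It uses the same Natarajan-shattered set with $\Pb_X$ uniform on it, with rewards $\Bern(1/2\pm\epsilon)$ on the $f_1$-action, $\Bern(1/2)$ on the $f_2$-action and $0$ elsewhere, and propensity $\beta_{\inf}$ on both witness actions. It also uses monotonicity of $\rho$ plus pointwise CDF dominance to certify $\pi_\tau$ as a (designated) optimal policy, and the reduction $W_1(F^{\pi_\tau,v_\tau},F^{\hat\pi,v_\tau})\ge\frac{\epsilon}{d}\sum_j\1\{\hat\pi(x_j)\neq\pi_\tau(x_j)\}$. Finally, it applies Assouad with Bretagnolle--Huber, giving $\frac{\epsilon}{4}e^{-\KL}$ with $\KL\le 12n\epsilon^2\beta_{\inf}/d$ for $\epsilon\le 1/4$. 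That skeleton is sound.

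\textbf{The gap is the final calibration, which does not deliver the stated constant.} With $\epsilon^2=d/(6n\beta_{\inf})$, the hypothesis $n\beta_{\inf}/d\ge 2/3$ only gives $\epsilon\le 1/2$. That leaves the range $\epsilon\le 1/4$ in which you stated your KL bound. At $\epsilon=1/2$ the neighbouring hypotheses are $\Bern(1)$ versus $\Bern(0)$, the KL is infinite, and the bound is vacuous.

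Away from the threshold the choice is still off by a factor $4$ in $\epsilon^2$. The exact divergence satisfies $2\epsilon\log\frac{1+2\epsilon}{1-2\epsilon}\ge 8\epsilon^2$ for every $\epsilon$, so with your $\epsilon$ the total KL is at least $4/3$. The Bretagnolle--Huber route then certifies at most $\frac{e^{-4/3}}{4\sqrt6}\sqrt{d/(n\beta_{\inf})}$, which is strictly below $\frac{1}{8\sqrt{6e}}\sqrt{d/(n\beta_{\inf})}$. The ``$e^{-1/2}$ factor'' you invoke would need total KL equal to $1/2$, i.e.\ a per-sample constant $c=3<8$.

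The paper's choice is $\epsilon^2=d/(24n\beta_{\inf})$. Then $n\beta_{\inf}/d\ge 2/3$ is exactly $\epsilon\le 1/4$; this, not validity of the Bernoulli parameters, is the role of the threshold. It follows that $\KL\le 12n\epsilon^2\beta_{\inf}/d=1/2$, and $\frac{\epsilon}{4}e^{-1/2}=\frac{1}{8\sqrt{6e}}\sqrt{d_\Pi/(n\beta_{\inf})}$.

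Your $K=2$ remark addresses something the paper skips, since its $(1-2\beta_{\inf})/(K-2)$ is undefined there. However, the split must not depend on $\tau$. Otherwise the law of $(X,A)$ itself differs between neighbours, and the KL is no longer $O(\epsilon^2)$. Take $\beta(x_j,f_1(x_j))=\beta_{\inf}$ and $\beta(x_j,f_2(x_j))=1-\beta_{\inf}$ for every $\tau$. Only $f_1$-samples are then informative, so the KL computation is unchanged. This needs $\beta_{\inf}\le 1/2$, as does the paper's construction for every $K$; the statement's $\beta_{\inf}\in(0,1)$ does not enforce it.
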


The proof is deferred to \Cref{pf-thm-minimax}. 
Theorem~\ref{thm-minimax} establishes a minimax lower bound measured in the $1$-Wasserstein distance \citep{villani2009the}, which is deliberately chosen here. Note that under the monotonicity assumption \Cref{def-monotone}, as $\pi^*_v$ is the optimal policy of $v$, we have
$$ F^{\pi^*_v,v}(t) \leq  F^{\hat{\pi},v}(t)\;\text{ for all }\;t\in\Rb.$$ 
This ordering enables a reverse form of Lipschitz continuity for a broad class of risk measures. Specifically, there exists a constant $L'>0$ such that
\begin{equation}\label{def-lip-rev}
\rho(F^{\pi^*_v,v}) - \rho(F^{\hat{\pi},v}) \geq L' W_1(F^{\pi^*_v,v}, F^{\hat{\pi},v} ).
\end{equation}
Under mild regularity conditions, \eqref{def-lip-rev} is satisfied by spectral risk measures, which include the mean, CVaR, and other quantile-based criteria.
Consequently, combining \Cref{def-lip-rev} with Theorem~\ref{thm-minimax} yields
$$\inf_{\hat{\pi}\in\Pi} \sup_{(v,\beta)\in\Vs} \Eb [\rho(F^{\pi^*_v,v}) -\rho(F^{\hat{\pi},v}) ] \geq \frac{L'}{8\sqrt{6e}}\sqrt{ \frac{d_\Pi}{n\beta_{\inf}} }.$$
This lower bound matches the upper bound in \Cref{cor-bound-dr} up to logarithmic factors, showing that the learned policy $\tilde{\pi}$ is minimax optimal for any risk measure satisfying \eqref{def-lip-rev}.

%=======================================================================================
%%%%%%%%%%%%%%%%%%%%%%%%%%%%%%%%%%%%%%%%%%%%%%%%%%%%%%%%%%%%%%%%%%%%%%%%%%%%%%%%%%%%%%%%

\section{Conclusion}\label{sec-5}
%%%%%%%%%%%%%%%%%%%%%%%%%%%%%%%%%%%%%%%%%%%%%%%%%%%%%%%%%%%%%%%%%%%%%%%%%%%%%%%%%%%%%%%
%                                  Sec-5
%%%%%%%%%%%%%%%%%%%%%%%%%%%%%%%%%%%%%%%%%%%%%%%%%%%%%%%%%%%%%%%%%%%%%%%%%%%%%%%%%%%%%%%%

In this paper, we propose a novel distributional approach for offline policy learning under general Lipschitz-continuous risk functionals in contextual bandits. Leveraging the pessimistic principle, our method avoids the uniform overlap requirement and achieves minimax optimal suboptimality rate under minimal assumptions. Theoretically, for common IS-based CDF estimators, we establish the first empirical DKW-type concentration inequalities that hold with unbounded importance weights, which are expected to have broad applicability beyond this work.

A notable limitation of our analysis is that, although the theoretical guarantees holds for unbounded rewards, risk functionals are sup-norm Lipschitz continuous only when CDFs have bounded supports. Extending the analysis to other metrics, such as the 1-Wasserstein distance, where many risk functionals remain Lipschitz for unbounded rewards, is nontrivial and left for future work. Moreover, we focus exclusively on the contextual bandits. Extending our framework to sequential decision making problems, i.e., risk-sensitive offline RL, constitutes an important direction for future research.

%---------------------------------------------------------------------------------------

%%%%%%%%%%%%%%%%%%%%%%%%%%%%%%%%%%%%%%%%%%%%%%%%%%%%%%%%%%%%%%%%%%%%%%%%%%%%%%%%%%%%%%%%
%                                 Appendix
%%%%%%%%%%%%%%%%%%%%%%%%%%%%%%%%%%%%%%%%%%%%%%%%%%%%%%%%%%%%%%%%%%%%%%%%%%%%%%%%%%%%%%%%

\appendix
\newpage

\section{Common Risk Functionals and their Lipschitz Continuity}\label{app-lipschitz}
%%%%%%%%%%%%%%%%%%%%%%%%%%%%%%%%%%%%%%%%%%%%%%%%%%%%%%%%%%%%%%%%%%%%%%%%%%%%%%%%%%%%%%%%
%                                  App-A
%%%%%%%%%%%%%%%%%%%%%%%%%%%%%%%%%%%%%%%%%%%%%%%%%%%%%%%%%%%%%%%%%%%%%%%%%%%%%%%%%%%%%%%%

This appendix reviews several commonly used risk functionals and establish their Lipschitz continuity with respect to the sup norm on CDFs, as defined in \Cref{def-lipschitz-risk}. Let $X$ be a real-valued random variable with CDF $F$. Formally, a risk functional $\rho$ is a mapping from the space of random variables to $\Rb$. We restrict attention to law-invariant risk functionals \citep{kusuoka2001on}, meaning that $\rho(X)=\rho(X')$ whenever $X$ and $X'$ share the same distribution. Consequently, we equivalently treat $\rho$ as a function of distributions in the main text. With a slight abuse of notation, we use $\rho(X)$ and $\rho(F)$ interchangeably.

\paragraph{Mean--Variance.}
An effective approach to balancing expected reward and variability is the mean--variance (MV) risk functional, defined as
$$
\mathrm{MV}_\alpha(X) \coloneq \Eb[X] + \alpha \Var(X),
$$
where $\alpha \in \Rb$ controls the agent’s risk preference and sensitivity. Specifically, a negative $\alpha$ corresponds to a risk-averse objective (prioritizing stability over expected value), a positive $\alpha$ denotes a risk-seeking preference (favoring high uncertainty), and a large $|\alpha|$ implies stronger risk sensitivity.
\begin{lemma}
On the space of distributions supported on $[0,D]$, the mean is $D$-Lipschitz continuous, the variance is $3D^2$-Lipschitz continuous, and the mean--variance functional $\mathrm{MV}_\alpha$ is $(D + 3|\alpha|D^2)$-Lipschitz.
\end{lemma}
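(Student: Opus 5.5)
The natural route is the integral (layer-cake) representation of the mean and variance for distributions supported on $[0,D]$, after which each Lipschitz bound reduces to integrating $|F_1-F_2|$ over $[0,D]$. Fix two CDFs $F_1,F_2$ supported on $[0,D]$, with random variables $X_1,X_2$, and write $\Delta\coloneq\|F_1-F_2\|_\infty$.

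\textbf{Mean.} For a nonnegative variable, $\Eb[X_j]=\int_0^D\bigl(1-F_j(t)\bigr)\,\drm t$. Hence
$$
\bigl|\Eb X_1-\Eb X_2\bigr|=\Bigl|\int_0^D\bigl(F_2(t)-F_1(t)\bigr)\,\drm t\Bigr|\leq D\Delta.
$$
Equivalently, this is the distorted risk functional $\rho_g$ with $g(x)=x$ already noted in the main text.

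\textbf{Second moment.} In the same way,
$$
\Eb[X_j^2]=\int_0^D 2t\,\bigl(1-F_j(t)\bigr)\,\drm t,
$$
which gives
$$
\bigl|\Eb X_1^2-\Eb X_2^2\bigr|\leq \Delta\int_0^D 2t\,\drm t=D^2\Delta.
$$

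\textbf{Variance.} Write $\Var(X)=\Eb X^2-(\Eb X)^2$ and factor the difference of squared means:
$$
(\Eb X_1)^2-(\Eb X_2)^2=(\Eb X_1+\Eb X_2)(\Eb X_1-\Eb X_2).
$$
Since $\Eb X_1+\Eb X_2\leq 2D$, this term is bounded by $2D\cdot D\Delta=2D^2\Delta$. Adding the second-moment bound gives
$$
\bigl|\Var(X_1)-\Var(X_2)\bigr|\leq D^2\Delta+2D^2\Delta=3D^2\Delta.
$$
A sharper constant is possible, but $3D^2$ is the claimed one.

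\textbf{Mean--variance.} By the triangle inequality,
$$
\bigl|\mathrm{MV}_\alpha(X_1)-\mathrm{MV}_\alpha(X_2)\bigr|\leq D\Delta+|\alpha|\cdot 3D^2\Delta=(D+3|\alpha|D^2)\,\Delta.
$$

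There is no real obstacle. The one point needing care is justifying the layer-cake formulas $\Eb X=\int_0^D(1-F)$ and $\Eb X^2=\int_0^D 2t(1-F)$. Both follow from Fubini's theorem applied to $X^k=\int_0^{X} k\,t^{k-1}\,\drm t$ for $X\geq 0$, together with the fact that $1-F\equiv 0$ beyond $D$.
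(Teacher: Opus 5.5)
Your proof is correct and follows the paper's argument essentially line by line. Both use the representations $\Eb X=\int_0^D(1-F)$ and $\Eb X^2=\int_0^D 2t(1-F)$, bound the difference of squared means via the factorization by $2D\cdot D\|F_1-F_2\|_\infty$ to get $3D^2$, and finish $\mathrm{MV}_\alpha$ with the triangle inequality; your Fubini justification of the layer-cake formulas is a small detail the paper leaves implicit.
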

\begin{proof} 
Let  $F$ and $F'$ be the CDFs of any two random variables $X, X'\in[0,D]$. Using the identity $\Eb X = \int_{0}^{D}[1-F(t)]\drm t$, we obtain $| \Eb X - \Eb X' | = \left| \int_{0}^{D}[F'(t)-F(t)]\drm t \right| \leq D\cdot\|F-F'\|_\infty$. For  the variance, note that $\Eb X^2 = \int_{0}^{D^2}\Pb(X^2>t)\drm t =  \int_{0}^{D}2t\Pb(X>t)\drm t$, we have $\Var(X) = \int_{0}^{D}2t[1-F(t)] \drm t -  \left(\int_{0}^{D}[1-F(t)]\drm t\right)^2$. Therefore,
\begin{align*}
	|\Var(X)-\Var(X')| & \leq  \left| \int_{0}^{D}2t[F'(t)-F(t)]\drm t \right| + \left| (\int_{0}^{D}[1-F(t)]\drm t)^2 - (\int_{0}^{D}[1-F'(t)]\drm t)^2 \right| \\
	& \leq D^2\|F-F'\|_\infty + \left| \int_{0}^{D}[1-F(t)]\drm t + \int_{0}^{D}[1-F'(t)]\drm t \right| \cdot \left| \int_{0}^{D}[F'(t)-F(t)]\drm t \right|  \\
	& \leq D^2\|F-F'\|_\infty + 2D \cdot D\|F-F'\|_\infty \\
	& = 3D^2 \|F-F'\|_\infty.
\end{align*}
For the mean-variance, $|\mathrm{MV}_\alpha(X)-\mathrm{MV}_\alpha(X')| \leq |\Eb X - \Eb X'| + |\alpha||\Var(X)-\Var(X')| \leq (D+3|\alpha|D^2) \|F-F'\|_\infty.$
\end{proof}

\paragraph{Entropic risk functional.}
The entropic risk functional arises from exponential utility, capturing sensitivity to tail outcomes as a smooth, convex risk measure. It is defined as
$$
\mathrm{Ent}_\alpha(X) \coloneq \frac{1}{\alpha}\log \left(\Eb e^{\alpha X} \right),
$$
where $\alpha\neq 0$ is a scalar parameter. A second-order Taylor expansion yields
$
\mathrm{Ent}_\alpha(X)
= \Eb[X] + \frac{\alpha}{2}\Var(X) + O(\alpha^2),
$
indicating that $\alpha$ controls risk preference and sensitivity analogous to the mean--variance risk measure.
\begin{lemma}
On the space of distributions supported on $[0,D]$, the entropic risk functional
$\mathrm{Ent}_\alpha(\cdot)$ is $ \frac{1}{\alpha}(e^{\alpha D}-1)$-Lipschitz if $\alpha>0$, and $\frac{1}{\alpha e^{\alpha D}}(e^{\alpha D}-1)$-Lipschitz if $\alpha<0$.
\end{lemma}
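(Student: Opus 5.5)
The plan is to write $\mathrm{Ent}_\alpha$ as $\frac{1}{\alpha}\log$ of a linear functional of $F$, in the same way the mean and variance were handled in the previous lemma. The argument then has two steps: a sup-norm Lipschitz bound for the linear functional, followed by a Lipschitz bound for the logarithm on the range where its argument lies.

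\emph{Step 1: Lipschitz bound for the moment generating function.} Let $M(F)\coloneq \Eb e^{\alpha X}$ for $X\sim F$ supported on $[0,D]$. Writing $e^{\alpha X} = 1 + \int_0^X \alpha e^{\alpha t}\,\drm t$ and applying Fubini gives
$$
M(F) = 1 + \int_0^D \alpha e^{\alpha t}\,[1-F(t)]\,\drm t .
$$
Hence for two CDFs $F,F'$,
$$
|M(F)-M(F')| = \Bigl|\int_0^D \alpha e^{\alpha t}[F'(t)-F(t)]\,\drm t\Bigr| \leq \|F-F'\|_\infty \int_0^D |\alpha| e^{\alpha t}\,\drm t = |e^{\alpha D}-1|\,\|F-F'\|_\infty .
$$
This formula is valid for either sign of $\alpha$.

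\emph{Step 2: Lipschitz bound for the logarithm.} By the mean value theorem, $|\log u-\log u'|\le |u-u'|/\min\{u,u'\}$ for $u,u'>0$. The main point is the lower bound on $M$, which follows from the support condition.
\begin{itemize}
\item If $\alpha>0$, then $e^{\alpha X}\ge 1$, so $M\ge 1$. This yields $|\mathrm{Ent}_\alpha(F)-\mathrm{Ent}_\alpha(F')|\le \frac{1}{\alpha}(e^{\alpha D}-1)\|F-F'\|_\infty$.
\item If $\alpha<0$, then $e^{\alpha X}\ge e^{\alpha D}$, so $M\ge e^{\alpha D}$. This yields the bound $\frac{1}{|\alpha|}\,\frac{1-e^{\alpha D}}{e^{\alpha D}}\,\|F-F'\|_\infty$.
\end{itemize}
Since both $\alpha$ and $e^{\alpha D}-1$ are negative in the second case, this constant equals $\frac{1}{\alpha e^{\alpha D}}(e^{\alpha D}-1)$, which is the stated one.

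The only delicate step is the case analysis for the lower bound on $M$ together with the sign bookkeeping when $\alpha<0$. It is there that the extra factor $e^{-\alpha D}$ appears, and one has to check that the claimed expression is positive and agrees with the bound obtained.
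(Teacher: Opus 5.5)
Your proof is correct and follows the paper's route. Both arguments use the tail-integral representation of $\Eb e^{\alpha X}$ (the paper writes it as $1+\int_0^{\alpha D} e^s[1-F(s/\alpha)]\,\drm s$, which becomes your formula after substituting $s=\alpha t$), bound the difference by $\|F-F'\|_\infty$, and then use the lower bound $\Eb e^{\alpha X}\ge 1$ so that $\log$ is $1$-Lipschitz on its range. Your explicit treatment of $\alpha<0$, via $\Eb e^{\alpha X}\ge e^{\alpha D}$ and the sign bookkeeping, supplies the case the paper omits and recovers exactly the stated constant.
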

\begin{proof}
First consider $\alpha>0$. Using the identity
\begin{equation*}
	\Eb e^{\alpha X}  = \int_{0}^{+\infty} \Pb(e^{\alpha X}>t) \drm t = \int_{-\infty}^{+\infty} e^t\Pb(\alpha X>t)\drm t = \int_{-\infty}^{0} e^t \drm t + \int_{0}^{\alpha D} e^t[1-F(t/\alpha)]\drm t,
\end{equation*}
we have
$$
|\mathrm{Ent}_\alpha(X) - \mathrm{Ent}_\alpha(X')| \leq \frac{1}{\alpha} |\Eb e^{\alpha X} - \Eb e^{\alpha X'}| \leq  \frac{1}{\alpha}(e^{\alpha D}-1)\|F-F'\|_\infty. 
$$
The case $\alpha<0$ follows analogously and is omitted.
\end{proof}

\paragraph{Conditional value-at-risk.}
For a random variable $X$, the $\mathrm{CVaR}$ at level $\alpha\in(0,1)$ is defined as
$$
\mathrm{CVaR}_\alpha(X) \coloneq \inf_{\xi\in\Rb} \left\{ \xi+ \frac{1}{1-\alpha}\Eb (X-\xi)^+ \right\},
$$
where $x^+\coloneq \max\{x,0\}$. It is well known \citep{rockafellar2000optimization} that the infimum above is attained by $\xi = \mathrm{VaR}_\alpha(X) \coloneq \inf \{ \xi : \Pb(X\leq \xi ) \geq \alpha \}$, which is the Value-at-Risk (VaR) of $X$ at confidence level $\alpha$, also referred to as the $\alpha$-quantile. Therefore, CVaR also admits the following equivalent representation:
$$
\mathrm{CVaR}_\alpha(X) = \mathrm{VaR}_\alpha(X) + \frac{1}{1-\alpha}\Eb [X-\mathrm{VaR}_\alpha(X)]^+ .
$$
It can be seen that $\mathrm{CVaR}_\alpha(X)$ corresponds to the expectation of $X$ conditional on its upper $(1-\alpha)$ tail. As $\alpha\to0$, it reduces to the full expectation $\Eb X$; as  $\alpha\to1$, it recovers the essential supremum of $X$. A key advantage of CVaR over VaR is that CVaR is a coherent risk measure, while VaR is not \citep{artzner1999coherent}.
\begin{lemma}
On the space of distributions supported on $[0,D]$, $\mathrm{CVaR}_\alpha$ is a $ \frac{D}{1-\alpha}$-Lipschitz risk functional.
\end{lemma}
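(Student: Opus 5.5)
We prove that $\mathrm{CVaR}_\alpha$ is $D/(1-\alpha)$-Lipschitz on distributions supported on $[0,D]$, working from the Rockafellar--Uryasev variational formula already recalled above,
$$
\mathrm{CVaR}_\alpha(X)=\inf_{\xi\in\Rb}\Bigl\{\xi+\tfrac{1}{1-\alpha}\Eb(X-\xi)^+\Bigr\}.
$$
The idea is that CVaR is an infimum of a family of affine functionals of the law. If each member of the family is $D/(1-\alpha)$-Lipschitz in the sup norm, the infimum inherits this bound.

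The first step is to localize the infimum to $\xi\in[0,D]$ when $X\in[0,D]$.
\begin{itemize}
\item For $\xi<0$, the objective equals $\xi+\frac{1}{1-\alpha}(\Eb X-\xi)$. Its derivative in $\xi$ is $1-\frac{1}{1-\alpha}<0$, so the objective is decreasing there and its value at $\xi=0$ is no larger.
\item For $\xi>D$, the objective equals $\xi$, which exceeds its value $D$ at $\xi=D$.
\end{itemize}
This is consistent with the minimizer being $\mathrm{VaR}_\alpha(X)\in[0,D]$. We may therefore write, for both $F$ and $F'$,
$$
\rho(F)=\inf_{\xi\in[0,D]}\phi_\xi(F),\qquad \phi_\xi(F)\coloneq \xi+\tfrac{1}{1-\alpha}\int_\xi^D[1-F(t)]\,\drm t,
$$
where we used $\Eb(X-\xi)^+=\int_\xi^D\Pb(X>t)\,\drm t$ for $\xi\in[0,D]$.

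The second step is the uniform Lipschitz bound. For each fixed $\xi\in[0,D]$,
$$
|\phi_\xi(F)-\phi_\xi(F')|\le \frac{1}{1-\alpha}\int_\xi^D|F'(t)-F(t)|\,\drm t\le \frac{D-\xi}{1-\alpha}\|F-F'\|_\infty\le\frac{D}{1-\alpha}\|F-F'\|_\infty .
$$
Since the constant does not depend on $\xi$, we use the elementary fact $|\inf_\xi a_\xi-\inf_\xi b_\xi|\le\sup_\xi|a_\xi-b_\xi|$, valid whenever both infima are finite, which holds here. This gives $|\rho(F)-\rho(F')|\le\frac{D}{1-\alpha}\|F-F'\|_\infty$.

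The only point needing care is the first step: restricting both infima to the common compact set $[0,D]$, so that the same index set serves for $F$ and $F'$. The alternative route of comparing the two VaR minimizers directly would fail, because VaR is not sup-norm continuous. The infimum-of-Lipschitz-functions argument avoids this entirely.
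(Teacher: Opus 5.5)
Your proof is correct, but it takes a different route from the paper's.

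\textbf{The paper's route.} The paper first turns the Rockafellar--Uryasev formula into the distortion representation $\mathrm{CVaR}_\alpha(F)=\int_0^D\min\{\frac{1-F(t)}{1-\alpha},1\}\,\drm t$. This step uses the cited fact that the infimum is attained at $\mathrm{VaR}_\alpha$. It then splits the integral at $\mathrm{VaR}_\alpha$, where $F(t)<\alpha$ to the left and $F(t)\geq\alpha$ to the right. Finally it compares the integrands pointwise in $t$, using that $x\mapsto\min\{x/(1-\alpha),1\}$ is $\frac{1}{1-\alpha}$-Lipschitz, and integrates over $[0,D]$.

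\textbf{Your route.} You keep the variational formula and use that an infimum of functionals that are uniformly $\frac{D}{1-\alpha}$-Lipschitz in $\|\cdot\|_\infty$ is itself $\frac{D}{1-\alpha}$-Lipschitz.

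\textbf{What each buys.} Your argument never needs to identify the minimizer, so it is more self-contained. It also transfers verbatim to any risk functional defined as an infimum or supremum of uniformly Lipschitz functionals, for instance optimized certainty equivalents with a Lipschitz utility. The paper's route instead exhibits CVaR explicitly as a distortion risk functional with a $\frac{1}{1-\alpha}$-Lipschitz distortion. The lemma then becomes a special case of the general $DL$ bound for distorted risk functionals stated right after it.

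\textbf{A minor remark.} Your localization to $[0,D]$ is harmless but not actually needed. Since $F$ and $F'$ agree outside $[0,D)$, the difference of the two objectives equals $\frac{1}{1-\alpha}\int_0^D(F'-F)\,\drm t$ for every $\xi<0$ and vanishes for $\xi>D$. So the bound $\frac{D}{1-\alpha}\|F-F'\|_\infty$ already holds uniformly over all $\xi\in\Rb$.
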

\begin{proof}
By definition, we know that $F(t)< \alpha$ for all $t< \mathrm{VaR}_\alpha(X)$. This gives
\begin{align*}
	\int_{0}^{D} \min\{\frac{1-F(t)}{1-\alpha}, 1\} \drm t & = \frac{1}{1-\alpha} \int_{0}^{D} \min\{1-F(t), 1-\alpha\} \drm t\\
	& =  \frac{1}{1-\alpha} \int_{0}^{ \mathrm{VaR}_\alpha(X) }  (1-\alpha) \drm t + \frac{1}{1-\alpha} \int_{\mathrm{VaR}_\alpha(X)}^{ D }  [1-F(t)] \drm t \\
	& = \mathrm{VaR}_\alpha(X) + \int_{0}^{ D-\mathrm{VaR}_\alpha(X) }  \Pb(X-\mathrm{VaR}_\alpha(X)>t) \drm t \\
	& = \mathrm{VaR}_\alpha(X) + \frac{1}{1-\alpha}\Eb [X-\mathrm{VaR}_\alpha(X)]^+ = \mathrm{CVaR}_\alpha(X).
\end{align*}
Hence, $|\mathrm{CVaR}_\alpha(X) - \mathrm{CVaR}_\alpha(X')| \leq D \sup_t |\min\{\frac{1-F(t)}{1-\alpha}, 1\} - \min\{\frac{1-F'(t)}{1-\alpha}, 1\}| \leq D \frac{1}{1-\alpha} \|F-F'\|_\infty $.
\end{proof}

\paragraph{Distorted risk functionals.}
For a non-negative random variable $X$ with CDF $F$, a distortion risk measure is defined through a distortion function $g:[0,1]\mapsto[0,1]$ that is non-decreasing and satisfies $ g(0)=0, g(1)=1 $: 
$$
\rho_g(F) \coloneq \int_{0}^{+\infty} g(1-F(t)) \drm t.
$$
Distorted risk functionals constitute a broad class, unifying many classical risk measures. For example, setting $g(x)=\min\{x/(1-\alpha),1\}$ yields $\mathrm{CVaR}_\alpha(X)$ as shown above, choosing $g(x)=\mathbbm{1}\{x\ge 1-\alpha\}$ recovers $\mathrm{VaR}_\alpha(X)$, and taking $g(x)=x$ reduces $\rho_g$ to the expectation $\Eb[X]$. The distortion function $g$ governs the agent's preference by reweighting the tail probabilities. Concave distortion functions overweight tail events relative to the linear case, while convex distortions attenuate tail contributions. The Lipschitz continuity of $\rho_g$ follows directly from the distortion function.
\begin{lemma}
On the space of distributions supported on $[0,D]$, a distorted risk functional $\rho_g$ is $ DL$-Lipschitz if the distortion function $g$ is $L$-Lipschitz.
\end{lemma}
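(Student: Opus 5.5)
The plan is to write $\rho_g$ as an integral of a composition and pass the sup-norm difference through $g$ pointwise, exactly as in the CVaR lemma above. Let $F$ and $F'$ be CDFs supported on $[0,D]$. Then $F(t)=F'(t)=1$ for all $t\geq D$, so $1-F(t)=1-F'(t)=0$ there. Since $g(0)=0$, the integrands vanish on $[D,+\infty)$. Hence the integral defining $\rho_g$ reduces to $\int_0^D g(1-F(t))\,\drm t$, which is finite because $g$ takes values in $[0,1]$.

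Next I would bound the difference directly:
\begin{align*}
|\rho_g(F)-\rho_g(F')| &\leq \int_0^D \bigl|g(1-F(t))-g(1-F'(t))\bigr|\,\drm t \\
&\leq \int_0^D L\,|F(t)-F'(t)|\,\drm t \leq DL\,\|F-F'\|_\infty .
\end{align*}
The middle inequality uses the $L$-Lipschitz property of $g$ on $[0,1]$. This is legitimate because $1-F(t)$ and $1-F'(t)$ both lie in $[0,1]$.

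There is no real obstacle here. The only point that needs care is the reduction of the domain of integration from $[0,+\infty)$ to $[0,D]$. This relies on two facts: the supports lie in $[0,D]$, and $g(0)=0$. Together they ensure the tail integrand is identically zero rather than merely small. The same bound also covers the case $g(x)=\min\{x/(1-\alpha),1\}$, which is $1/(1-\alpha)$-Lipschitz. It therefore recovers the CVaR constant $D/(1-\alpha)$, consistent with the preceding lemma.
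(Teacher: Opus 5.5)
Your proof is correct. It is essentially the argument the paper intends: the lemma is stated without proof as following ``directly from the distortion function,'' and the same pointwise-Lipschitz-then-integrate step over $[0,D]$ appears in the CVaR lemma. You also handle carefully the reduction of the domain from $[0,+\infty)$ to $[0,D]$ via $g(0)=0$, which the paper leaves implicit.
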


\paragraph{Cumulative prospect theory (CPT) risk functionals.}
Indexed by $u=(u^+,u^-), w=(w^+,w^-)$, the CPT risk measure of a random variable $X$ is defined as
$$
\mathrm{CPT}_{u,w}(X) \coloneq \int_{0}^{+\infty} w^+(\Pb(u^+(X)>t)) \drm t - \int_{0}^{+\infty} w^-(\Pb(u^-(X)>t)) \drm t,
$$
where $u^+,u^-:\Rb\mapsto\Rb_+$ are continuous, with $u^+(x)=0$ when $x\leq0$ and increasing otherwise, $u^-(x)=0$ when $x\geq0$ and decreasing otherwise; $w^+,w^-:[0,1]\mapsto[0,1]$ are continuous, with $w^+(0)=w^-(0)=0$ and $w^+(1)=w^-(1)=1$. CPT generalizes distorted risk functionals by handling gains and losses separately \citep{prashanth2016cumulative}. 
\begin{lemma}
On the space of distributions supported on $[0,D]$, a CPT risk functionals $\mathrm{CPT}_{u,w}$ is $u^+(D)L$-Lipschitz if the gain-side weighting function $w^+$ is $L$-Lipschitz.
\end{lemma}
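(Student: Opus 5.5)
The statement is the last lemma: the CPT functional is $u^+(D)L$-Lipschitz on distributions supported on $[0,D]$ when $w^+$ is $L$-Lipschitz. I will reduce it to the distorted-risk argument by treating the gain and loss parts separately.

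\textbf{Loss part.} For $X\in[0,D]$ we have $X\ge 0$, so $u^-(X)=0$ almost surely. Hence $\Pb(u^-(X)>t)=0$ for every $t\ge 0$, and since $w^-(0)=0$ the loss integral vanishes identically. Therefore $\mathrm{CPT}_{u,w}(X)=\int_0^{\infty} w^+(\Pb(u^+(X)>t))\,\drm t$ for every such $X$.

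\textbf{Truncating the gain integral.} Because $u^+$ is non-decreasing on $[0,D]$ (zero at $0$, increasing afterwards), $u^+(X)\le u^+(D)$. So for $t\ge u^+(D)$ the probability is zero and the integrand is $w^+(0)=0$. The integral thus runs only over $[0,u^+(D)]$.

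\textbf{Rewriting through $F$.} For $t\in(0,u^+(D))$, continuity and monotonicity of $u^+$ on $[0,D]$ give a level $s(t)=\sup\{x\in[0,D]:u^+(x)\le t\}$. One then has $\{u^+(X)>t\}=\{X>s(t)\}$ up to the flat pieces of $u^+$, which only matter on the set where $u^+(x)=0$, i.e. $x=0$. The "increasing" hypothesis lets me take $u^+$ strictly increasing on $(0,D]$, so $\Pb(u^+(X)>t)=1-F(s(t))$ exactly, for $t>0$.

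\textbf{Comparing two distributions.} Let $X$ have CDF $F$ and $X'$ have CDF $F'$. Then
\begin{align*}
|\mathrm{CPT}(X)-\mathrm{CPT}(X')| &\le \int_0^{u^+(D)} \bigl|w^+(1-F(s(t)))-w^+(1-F'(s(t)))\bigr|\,\drm t\\
&\le L\int_0^{u^+(D)} |F(s(t))-F'(s(t))|\,\drm t \le u^+(D)\,L\,\|F-F'\|_\infty .
\end{align*}

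The only point that needs care is the identity $\Pb(u^+(X)>t)=1-F(s(t))$ when $u^+$ has flat parts. I expect this to be unnecessary, because a pointwise bound $|\Pb(u^+(X)>t)-\Pb(u^+(X')>t)|\le\|F-F'\|_\infty$ suffices. Fix $t\ge 0$ and let $B=\{x:u^+(x)\le t\}\cap[0,D]$. Since $u^+$ is continuous and non-decreasing on $[0,D]$, $B$ is a closed interval $[0,s]$ whenever it is nonempty. When $B$ is empty, both probabilities equal $1$ and the difference is $0$. Otherwise the difference is $|F(s)-F'(s)|$, which is at most $\|F-F'\|_\infty$.

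With this pointwise bound, the chain above goes through directly with $\Pb(u^+(X)>t)$ in place of $1-F(s(t))$, and no inversion of $u^+$ is needed.
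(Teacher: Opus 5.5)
The paper states this lemma without proof (as it also does for the distorted-risk lemma), so the only point of comparison is the pattern of its mean and CVaR arguments. Your proof follows that pattern and is correct: the loss term vanishes because $u^-(X)=0$ and $w^-(0)=0$, the gain integral truncates at $u^+(D)$, and a pointwise bound on the survival probabilities passes through the Lipschitz constant of $w^+$.

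Two small cleanups. First, for $t\ge 0$ the set $B=\{x\in[0,D]:u^+(x)\le t\}$ is never empty, since $u^+(0)=0$, so your empty case is vacuous. Second, $B=[0,s(t)]$ holds exactly by continuity and monotonicity, so the identity $\Pb(u^+(X)>t)=1-F(s(t))$ does not need strict monotonicity. You can therefore drop, from your third paragraph, the unjustified remark that flat pieces of $u^+$ matter only at $x=0$, together with the strengthening to strictly increasing $u^+$.
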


%=======================================================================================
%%%%%%%%%%%%%%%%%%%%%%%%%%%%%%%%%%%%%%%%%%%%%%%%%%%%%%%%%%%%%%%%%%%%%%%%%%%%%%%%%%%%%%%%

\section{Proofs for \Cref{sec-4}}
%%%%%%%%%%%%%%%%%%%%%%%%%%%%%%%%%%%%%%%%%%%%%%%%%%%%%%%%%%%%%%%%%%%%%%%%%%%%%%%%%%%%%%%%
%                                  App-B
%%%%%%%%%%%%%%%%%%%%%%%%%%%%%%%%%%%%%%%%%%%%%%%%%%%%%%%%%%%%%%%%%%%%%%%%%%%%%%%%%%%%%%%%
\subsection{Proof of \Cref{thm-ope}}\label{pf-thm-ope}

\begin{proof}
	
Recall the definition \Cref{def-is} of $\Fis^\pi$. For any fixed $\pi\in \Pi$, we have the following decomposition: \footnote{For brevity, all parentheses following $\sum$ are omitted throughout the proofs.}
$$
\left\| \Fis^\pi - F^\pi \right\|_\infty \leq \left\| \frac{1}{n}\sum_{i=1}^{n}  \hat{G}(\cdot\mid X_i,\pi) - G(\cdot\mid X_i,\pi) \right\|_\infty  + \left\|\frac{1}{n}\sum_{i=1}^{n} G(\cdot\mid X_i,\pi) - F^\pi(\cdot) \right\|_\infty.
$$ 

For the first term, conditioning on $\{X_i\}_{i=1}^n$, we further decompose it as 
\begin{align*}
	& \left\| \frac{1}{n}\sum_{i=1}^{n}\hat{G}(\cdot\mid X_i,\pi) - G(\cdot\mid X_i,\pi) \right\|_\infty \\ 
	&\leq \left\| \frac{1}{n}\sum_{i\in\Is_\pi}\hat{G}(\cdot\mid X_i,\pi) - G(\cdot\mid X_i,\pi) \right\|_\infty +  \left\| \frac{1}{n}\sum_{i\notin\Is_\pi}\hat{G}(\cdot\mid X_i,\pi) - G(\cdot\mid X_i,\pi) \right\|_\infty \\
	&= \left\| \frac{1}{n}\sum_{i\in\Is_\pi}w_\pi(X_i,A_i)\1\{Y_i\leq \cdot\} - G(\cdot\mid X_i,\pi) \right\|_\infty +  \left\| \frac{1}{n}\sum_{i\notin\Is_\pi}1 - G(\cdot\mid X_i,\pi) \right\|_\infty \\
	&= \left\| \frac{1}{n}\sum_{i\in\Is_\pi}w_\pi(X_i,A_i)\1\{Y_i\leq \cdot\} - G(\cdot\mid X_i,\pi) \right\|_\infty +  r_\pi.
\end{align*}
Applying the first confidence bound from \Cref{lem-ope-1} then gives
\begin{equation}\label{pf-eq-a4}
	\Pb\left( \left\| \frac{1}{n}\sum_{i=1}^{n}\hat{G}(\cdot\mid X_i,\pi) - G(\cdot\mid X_i,\pi) \right\|_\infty \geq  \sigma_\pi  \sqrt{\frac{8}{n}\log\frac{8}{\delta}}  + r_\pi \;\Big|\; \{X_i\}_{i=1}^n \right) \leq \frac{\delta}{2}.
\end{equation}

Marginalizing over $\{X_i\}_{i=1}^n$ and applying \Cref{lem-ope-2}, which controls the second term, we have
$$
\Pb \left( \left\| \Fis^\pi - F^\pi \right\|_\infty \geq  \left( \sigma_\pi + 1  \right)  \sqrt{\frac{8}{n}\log\frac{8}{\delta}}  + r_\pi \right) \leq \delta.
$$
Together with the fact $\left\| \Fisc^\pi - F^\pi \right\|_\infty  \leq \left\| \Fis^\pi - F^\pi \right\|_\infty$, the first result holds. The second result follows similarly. We complete the proof.

\end{proof}

%%%%%%%%%%%%%%%%%%%%%%%%%%%%%%%%%%%%%%%%%%%%

\begin{lemma}\label{lem-ope-1}
For any $\delta\in(0,1)$, it holds that
$$
\Pb\left( \left\| \frac{1}{n}\sum_{i\in\Is_\pi}w_\pi(X_i,A_i)\1\{Y_i\leq \cdot\} - G(\cdot\mid X_i,\pi) \right\|_\infty \geq \sigma_\pi \sqrt{\frac{8}{n}\log\frac{8}{\delta}} \;\Big|\; \{X_i\}_{i=1}^n \right) \leq \frac{\delta}{2}.
$$
Moreover, a Bernstein-type inequality holds:
$$
\Pb\left( \left\| \frac{1}{n}\sum_{i\in\Is_\pi}w_\pi(X_i,A_i)\1\{Y_i\leq \cdot\} - G(\cdot\mid X_i,\pi) \right\|_\infty \geq \sigma_\pi' \sqrt{\frac{8}{n}\log\frac{8}{\delta}} + \frac{2\log(8/\delta)}{3n \beta_{\min}} \;\Big|\; \{X_i\}_{i=1}^n \right) \leq \frac{\delta}{2} .
$$
\end{lemma}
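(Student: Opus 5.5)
Conditionally on $\{X_i\}_{i=1}^n$, the summands $Z_i(t)\coloneq w_\pi(X_i,A_i)\1\{Y_i\le t\} - G(t\mid X_i,\pi)$ for $i\in\Is_\pi$ are independent and mean zero for every $t$. Indeed, given $X_i$ with $\beta(X_i,\pi)>0$, the pair $(A_i,Y_i)$ is drawn from $\beta(X_i,\cdot)\times\Pb_{Y\mid X_i,A}$, and the change of measure from \Cref{sec-3} gives $\Eb[w_\pi(X_i,A_i)\1\{Y_i\le t\}\mid X_i]=G(t\mid X_i,\pi)$. Moreover, $Z_i(t)$ takes values in $[-1,\,1/\beta(X_i,\pi)]$, a range of length at most $1/\beta(X_i,\pi)+1\le 2/\beta(X_i,\pi)$. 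The task is therefore a conditional DKW-type bound for a non-identically distributed, weighted empirical process.

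I would reduce the supremum over $t\in\Rb$ to a supremum over a finite or one-dimensional monotone index. The plan is a chaining-free argument via monotonicity. Write the process as $U(t)-V(t)$, where $U(t)=\frac1n\sum_{i\in\Is_\pi}w_\pi(X_i,A_i)\1\{Y_i\le t\}$ and $V(t)=\frac1n\sum_{i\in\Is_\pi} G(t\mid X_i,\pi)$. Both are nondecreasing in $t$. $U$ is a step function with jumps only at the observed $Y_i$ with $A_i=\pi(X_i)$, and the endpoints satisfy $U(\infty)=\frac1n\sum_{i\in\Is_\pi} w_\pi(X_i,A_i)$ and $V(\infty)=|\Is_\pi|/n$.

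The cleanest route I would try is symmetrization. Introduce an independent ghost copy $(A_i',Y_i')$ conditional on the same $X_i$, and Rademacher signs $\epsilon_i$. The standard symmetrization inequality gives, for $\lambda$ with $n\lambda^2\ge 2\sigma_\pi^2$ (the conditional variance of each $Z_i(t)$ is at most $1/\beta(X_i,\pi)^2$, so the variance of the average is at most $\sigma_\pi^2/n$),
$$\Pb\Bigl(\sup_t|U-V|>\lambda\Bigr)\le 2\,\Pb\Bigl(\sup_t\Bigl|\tfrac1n\textstyle\sum_{i\in\Is_\pi}\epsilon_i\bigl(w_\pi(X_i,A_i)\1\{Y_i\le t\}-w_\pi(X_i,A'_i)\1\{Y'_i\le t\}\bigr)\Bigr|>\lambda/2\Bigr).$$
Conditionally on all the data, the supremum over $t$ ranges over at most $2|\Is_\pi|+1$ distinct sign-patterns, obtained by ordering the pooled $Y$'s. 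For each fixed $t$, Hoeffding in the $\epsilon_i$ applies with increments bounded by $1/\beta(X_i,\pi)$, giving a tail of $2\exp(-n\lambda^2/(8\sigma_\pi^2))$. A union bound over the finitely many $t$ then introduces a factor of $n$, which is wrong for the stated $\log(8/\delta)$.

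To avoid the $\log n$ factor, I would instead mimic Massart's DKW argument via a reflection/maximal inequality. Here is the key step. For a fixed threshold, $t\mapsto U(t)-V(t)$ restricted to the informative samples is a martingale-like process when $t$ is processed in increasing order: conditionally on $\{X_i\}$, the process $M(t)=\sum_i Z_i(t)$ has the property that $\exp(\theta M(t))$ is a submartingale in the filtration generated by $\{\1\{Y_i\le s\}, s\le t\}$, because it is a sum of independent counting-type processes with compensators. Doob's maximal inequality together with Hoeffding's lemma applied to each bounded summand (range $1/\beta(X_i,\pi)$) gives
$$\Pb\Bigl(\sup_t \pm M(t)/n>\lambda\Bigr)\le 2\exp\bigl(-n\lambda^2/(2\sigma_\pi^2)\bigr)$$
up to constants. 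Combining the two signs and adjusting constants to $8$ and $8/\delta$ would yield the first claim with probability $\delta/2$.

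For the Bernstein version, the argument is the same, but I would replace Hoeffding's lemma by Bernstein's moment generating function bound. Each summand has conditional variance at most $\Eb[w_\pi^2\mid X_i]\le 1/\beta(X_i,\pi)$, which sums to $n\sigma_\pi'^2$, and has magnitude at most $1/\beta_{\min}$, producing the extra $\frac{2\log(8/\delta)}{3n\beta_{\min}}$ term.

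The main obstacle is the submartingale claim. The compensator of $w\1\{Y\le t\}$ in the natural filtration is not $G(t\mid X,\pi)$ itself, so $M$ is not exactly a martingale in $t$. If that route fails, I would fall back on a bracketing argument. Choose grid points $t_1<\dots<t_m$ at which $V$ increases by $\sigma_\pi/\sqrt n$ (so $m\lesssim\sqrt n/\sigma_\pi$), control $U-V$ on the grid, and use monotonicity of $U$ and $V$ between grid points. The remaining issue is then removing the resulting $\log m$ through a peeling argument, which is where the constants $8$ would be absorbed.
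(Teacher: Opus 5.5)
There is a genuine gap, and it sits exactly where the lemma has content: controlling $\sup_t$ without paying a union bound over $t$. You correctly note that a union bound over the pooled order statistics puts $\log n$ into the deviation. But the step you then call the key one is false, as you partly suspect. On the event that $Y_i\le s$ for all $i\in\Is_\pi$, the increment $M(t)-M(s)$ is the deterministic quantity $-\sum_{i\in\Is_\pi}[G(t\mid X_i,\pi)-G(s\mid X_i,\pi)]$, which is generally strictly negative. So $M$ is not a martingale in $t$, and $e^{\theta M}$ is not a submartingale for $\theta>0$. Doob's inequality therefore cannot be invoked, and the displayed $2\exp(-n\lambda^2/(2\sigma_\pi^2))$ is unsupported.

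The fallback does not close the gap either. A bracketing grid of $m\asymp\sqrt n/\sigma_\pi$ points still needs a union bound over the grid, which costs $\sqrt{\log m}\asymp\sqrt{\log n}$. Peeling stratifies by scale or variance, not by the location $t$, so it does not remove this factor; chaining might, but not with the constants $8$ and $8/\delta$. As written, neither inequality in the lemma is established.

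The missing idea is to apply the reflection inequality to the symmetrized sign process, not to $M$. First symmetrize at the level of the moment generating function. Conditionally on $\{X_i\}$, use Jensen with a ghost sample $(A_i',Y_i')$, then exchangeability to insert Rademacher signs. Then use $e^{a+b}\le\frac12e^{2a}+\frac12e^{2b}$ to discard the ghost part; this split matters, since in your pooled process each $\varepsilon_i$ multiplies two jumps, at $Y_i$ and at $Y_i'$. This reduces the problem to bounding $\Eb_\varepsilon\exp\bigl(\frac{2\lambda}{n}\sup_t\bigl|\sum_{i\in\Is_\pi}\varepsilon_iw_\pi(x_i,a_i)\1\{y_i\le t\}\bigr|\bigr)$ for fixed data.

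Now order the informative samples so that $y_1\le\dots\le y_N$, with $N=|\Is_\pi|$. The supremum over $t$ is then at most $\max_j|S_j|$, where $S_j=\sum_{i\le j}\varepsilon_iw_i$ are partial sums of independent symmetric variables. Lévy's inequality gives $\Pb(\max_jS_j\ge s)\le2\Pb(S_N\ge s)$ for $s\ge0$. Lift this to $\Eb[e^{c\max_jS_j}\1\{\max_jS_j\ge0\}]$ by the layer-cake formula and use symmetry for the two signs; this yields $\Eb_\varepsilon e^{c\max_j|S_j|}\le4\,\Eb_\varepsilon e^{cS_N}\le4\exp(c^2\sum_iw_i^2/2)$.

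With $c=2\lambda/n$ and $w_i\le1/\beta(x_i,\pi)$, this is at most $4\exp(2\lambda^2\sigma_\pi^2/n)$. Chernoff then gives $4\exp(-n\epsilon^2/(8\sigma_\pi^2))$, which equals $\delta/2$ at $\epsilon=\sigma_\pi\sqrt{(8/n)\log(8/\delta)}$. The price is a factor $4$ instead of $n$.

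For the Bernstein version, keep the same reduction. Bound each factor $\cosh(2\lambda w_i/n)$ by $1+\frac{(2\lambda w_i/n)^2}{2(1-2\lambda w_i/(3n))}$, and only then average over $A_i$ given $X_i$. Use $\Eb[w_i^2\mid X_i]=1/\beta(X_i,\pi)$ in the numerator and $w_i\le1/\beta_{\min}$ in the denominator. Your variance and range computations are the right inputs; they just need this maximal-inequality step underneath. Your tail-level symmetrization would also work once Lévy's inequality replaces the union bound, but it would not reproduce the constants $8$ and $8/\delta$.
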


\begin{proof}
{\it Step I: Symmetrization}.
Note that $\Is_\pi$ is fixed given $\{X_i\}_{i=1}^n$. For each $i\in\Is_\pi$, let $(A_i',Y_i')$ be an independent copy of $(A_i,Y_i)$ conditional on $X_i$, i.e., $(A_i',Y_i')\perp\!\!\!\perp (A_i,Y_i) \mid X_i$ and $(A_i',Y_i') \mid X_i \stackrel{d}{=} (A_i,Y_i) \mid X_i$. We have that for any $t\in\Rb$ and any $i\in\Is_\pi$,
\begin{align*}
	G(t\mid X_i,\pi) = \Eb[\1 \{Y_i\leq t\} \mid X_i, \pi(X_i)] = \Eb [ w_\pi(X_i,A_i) \1 \{Y_i\leq t\} \mid X_i ] = \Eb [ w_\pi(X_i,A_i') \1 \{Y_i'\leq t\} \mid X_i ].
\end{align*}
Therefore, for any $\lambda >0$, it holds that
\begin{align}\label{pf-eq-a1}
	&\Eb \left[ \exp\left( \lambda \left\| \frac{1}{n}\sum_{i\in\Is_\pi}w_\pi(X_i,A_i)\1\{Y_i\leq \cdot\} - G(\cdot\mid X_i,\pi) \right\|_\infty \right) \;\Big|\; \{X_i\}_{i=1}^n \right] \notag\\
	&= \Eb \left[ \exp\left( \lambda \left\| \frac{1}{n}\sum_{i\in\Is_\pi}w_\pi(X_i,A_i)\1\{Y_i\leq \cdot\} - \Eb [ w_\pi(X_i,A_i') \1 \{Y_i'\leq \cdot\} \mid X_i ] \right\|_\infty \right) \;\Big|\; \{X_i\}_{i=1}^n \right] \notag\\
	&= \Eb \left[ \exp\left( \lambda \left\| \Eb \left[\frac{1}{n}\sum_{i\in\Is_\pi}w_\pi(X_i,A_i)\1\{Y_i\leq \cdot\} -  w_\pi(X_i,A_i') \1 \{Y_i'\leq \cdot\} \;\Big|\; \{X_i,A_i,Y_i\}_{i=1}^n \right] \right\|_\infty \right) \;\Big|\; \{X_i\}_{i=1}^n \right]  \notag\\
	&\leq \Eb \left[ \exp\left( \lambda  \Eb \left[ \left\| \frac{1}{n}\sum_{i\in\Is_\pi}w_\pi(X_i,A_i)\1\{Y_i\leq \cdot\} -  w_\pi(X_i,A_i') \1 \{Y_i'\leq \cdot\} \right\|_\infty \;\Big|\; \{X_i,A_i,Y_i\}_{i=1}^n \right]  \right) \;\Big|\; \{X_i\}_{i=1}^n \right]  \notag\\
	&\leq \Eb \left[ \Eb \left[ \exp\left( \lambda   \left\| \frac{1}{n}\sum_{i\in\Is_\pi}w_\pi(X_i,A_i)\1\{Y_i\leq \cdot\} -  w_\pi(X_i,A_i') \1 \{Y_i'\leq \cdot\} \right\|_\infty   \right) \;\Big|\; \{X_i,A_i,Y_i\}_{i=1}^n \right] \;\Big|\; \{X_i\}_{i=1}^n \right]  \notag\\
	&= \Eb \left[ \exp\left( \lambda   \left\| \frac{1}{n}\sum_{i\in\Is_\pi}w_\pi(X_i,A_i)\1\{Y_i\leq \cdot\} -  w_\pi(X_i,A_i') \1 \{Y_i'\leq \cdot\} \right\|_\infty   \right)  \;\Big|\; \{X_i\}_{i=1}^n \right].
\end{align}
Here, Jensen's inequality and the tower property are successively used. Let $\{\varepsilon_i\}_{i\in\Is_\pi}$ be independent Rademacher variables, which are $1$ or $-1$ with probability $1/2$ each. It then follows from exchangeability that
\begin{align*}
	& \Cref{pf-eq-a1} = \Eb \left[ \exp\left( \lambda   \left\| \frac{1}{n}\sum_{i\in\Is_\pi} \varepsilon_i \left( w_\pi(X_i,A_i)\1\{Y_i\leq \cdot\} -  w_\pi(X_i,A_i') \1 \{Y_i'\leq \cdot\} \right) \right\|_\infty   \right)  \;\Big|\; \{X_i\}_{i=1}^n \right]. \\
	& \leq \Eb \left[ \exp\left( \lambda   \left\| \frac{1}{n}\sum_{i\in\Is_\pi} \varepsilon_i  w_\pi(X_i,A_i)\1\{Y_i\leq \cdot\} \right\|_\infty  +  \lambda \left\| \frac{1}{n}\sum_{i\in\Is_\pi} \varepsilon_i   w_\pi(X_i,A_i') \1 \{Y_i'\leq \cdot\}  \right\|_\infty \right)  \;\Big|\; \{X_i\}_{i=1}^n \right]. \\
	& \leq \Eb \left[ \frac{1}{2} \exp\left( 2\lambda   \left\| \frac{1}{n}\sum_{i\in\Is_\pi} \varepsilon_i  w_\pi(X_i,A_i)\1\{Y_i\leq \cdot\} \right\|_\infty  \right)  \right.\\
	& \qquad\qquad\qquad \left. + \frac{1}{2} \exp\left( 2\lambda   \left\| \frac{1}{n}\sum_{i\in\Is_\pi} \varepsilon_i  w_\pi(X_i,A_i')\1\{Y_i'\leq \cdot\} \right\|_\infty  \right)  \;\Big|\; \{X_i\}_{i=1}^n \right].  \\
	& =  \Eb \left[ \exp\left( 2\lambda   \left\| \frac{1}{n}\sum_{i\in\Is_\pi} \varepsilon_i  w_\pi(X_i,A_i)\1\{Y_i\leq \cdot\} \right\|_\infty  \right)  \;\Big|\; \{X_i\}_{i=1}^n  \right]\\
	& =\Eb \left[ \Eb_\varepsilon \exp\left(  \frac{2\lambda}{n}  \left\| \sum_{i\in\Is_\pi} \varepsilon_i  w_\pi(X_i,A_i)\1\{Y_i\leq \cdot\} \right\|_\infty  \right) \;\Big|\; \{X_i\}_{i=1}^n \right],
\end{align*}
where $\Eb_\varepsilon$ denotes the expectation with respect to the Rademacher variables only. 

{\it Step II: Bounding the moment generating function}. %exponential moment}.
For any realization $\{x_i,a_i,y_i\}_{i\in\Is_\pi}$ of $\{X_i,A_i,Y_i\}_{i\in\Is_\pi}$, without loss of generality, we relabel the indices so that $y_1\leq \ldots y_i\ldots\leq y_{|\Is_\pi|}$. Then we know that
$$
\left| \sum_{i=1}^{|\Is_\pi|} \varepsilon_i  w_\pi(x_i,a_i)\1\{y_i\leq t\} \right| = \left| \sum_{i=1}^{j} \varepsilon_i  w_\pi(x_i,a_i) \right|, \text{ with } j =\max \{i: y_i \leq t\}.
$$
Accordingly, the supremum norm can be expressed as
$$
\sup_{t\in\Rb} \left| \sum_{i=1}^{|\Is_\pi|} \varepsilon_i  w_\pi(x_i,a_i)\1\{y_i\leq t\} \right| = \max_{j\in[|\Is_\pi|]}\left| \sum_{i=1}^{j} \varepsilon_i  w_\pi(x_i,a_i) \right|.
$$
Thus,
\begin{align*}
&   \exp\left( \frac{2\lambda}{n} \left\| \sum_{i=1}^{|\Is_\pi|} \varepsilon_i  w_\pi(x_i,a_i)\1\{y_i\leq \cdot\} \right\|_\infty  \right) \\
& = \exp\left( \frac{2\lambda}{n} \max_{j}\left| \sum_{i=1}^{j} \varepsilon_i  w_\pi(x_i,a_i) \right|  \right) \\
& = \max_j \left\{  \exp\left(   \frac{2\lambda}{n}\sum_{i=1}^{j} \varepsilon_i  w_\pi(x_i,a_i) \right) \1\{ \sum_{i=1}^{j} \varepsilon_i  w_\pi(x_i,a_i)\geq 0 \} \right.\\
& \qquad\qquad\qquad \left. +  \exp\left(  -\frac{2\lambda}{n}\sum_{i=1}^{j} \varepsilon_i  w_\pi(x_i,a_i) \right) \1\{ \sum_{i=1}^{j} \varepsilon_i  w_\pi(x_i,a_i)< 0 \}  \right\}\\
& \leq \max_j \left\{  \exp\left(   \frac{2\lambda}{n}\sum_{i=1}^{j} \varepsilon_i  w_\pi(x_i,a_i) \right) \1\{ \sum_{i=1}^{j} \varepsilon_i  w_\pi(x_i,a_i)\geq 0 \} \right\} \\
& \qquad\qquad\qquad + \max_j \left\{  \exp\left(  -\frac{2\lambda}{n}\sum_{i=1}^{j} \varepsilon_i  w_\pi(x_i,a_i) \right) \1\{ \sum_{i=1}^{j} \varepsilon_i  w_\pi(x_i,a_i)< 0 \}  \right\}.
\end{align*}
By the symmetry of the Rademacher variables, together with \Cref{lem-maxIneq}, we obtain that
\begin{align}\label{pf-eq-a2}
& \Eb \exp\left( \frac{2\lambda}{n} \left\| \sum_{i=1}^{|\Is_\pi|} \varepsilon_i  w_\pi(x_i,a_i)\1\{y_i\leq \cdot\} \right\|_\infty  \right) \notag \\
& \leq 2 \Eb \max_j \left\{  \exp\left(   \frac{2\lambda}{n}\sum_{i=1}^{j} \varepsilon_i  w_\pi(x_i,a_i) \right) \1\{ \sum_{i=1}^{j} \varepsilon_i  w_\pi(x_i,a_i)\geq 0 \} \right\} \notag \\ 
& = 2 \Eb  \exp\left( \max_j  \frac{2\lambda}{n}\sum_{i=1}^{j} \varepsilon_i  w_\pi(x_i,a_i) \right) \1\{ \max_j \frac{2\lambda}{n}\sum_{i=1}^{j} \varepsilon_i  w_\pi(x_i,a_i)\geq 0 \} \notag \\
& \leq 4 \Eb  \exp\left(  \frac{2\lambda}{n}\sum_{i=1}^{|\Is_\pi|} \varepsilon_i  w_\pi(x_i,a_i) \right) \1\{ \frac{2\lambda}{n}\sum_{i=1}^{|\Is_\pi|} \varepsilon_i  w_\pi(x_i,a_i)\geq 0 \} \notag \\
& \leq 4 \Eb  \exp\left(  \frac{2\lambda}{n}\sum_{i=1}^{|\Is_\pi|} \varepsilon_i  w_\pi(x_i,a_i) \right) \notag \\
& = 4 \Pi_{i\in\Is_\pi}\Eb\exp \left(  \frac{2\lambda w_\pi(x_i,a_i) }{n}\varepsilon_i \right) \notag \\
& \leq 4 \exp \left( \sum_{i\in\Is_\pi} \frac{2\lambda^2w_\pi(x_i,a_i)^2}{n^2} \right),
\end{align}
where the last inequality follows from $\Eb e^{\lambda\varepsilon} = (e^{\lambda}+e^{-\lambda})/2 \leq e^{\lambda^2/2}$. After marginalization, we thus have
$$
\Cref{pf-eq-a1} \leq 4 \Eb\left[ \exp \left( \sum_{i\in\Is_\pi} \frac{2\lambda^2w_\pi(X_i,A_i)^2}{n^2} \right) \;\Big|\; \{X_i\}_{i=1}^n \right] \leq 4 \exp \left( \frac{2\lambda^2}{n^2} \sum_{i\in\Is_\pi} \frac{1}{\beta(X_i, \pi)^2} \right).
$$

Finally, Chernoff method yields that for any $\epsilon > 0$,
\begin{align*}
& \Pb\left( \left\| \frac{1}{n}\sum_{i\in\Is_\pi}w_\pi(X_i,A_i)\1\{Y_i\leq \cdot\} - G(\cdot\mid X_i,\pi) \right\|_\infty \geq \epsilon \;\Big|\; \{X_i\}_{i=1}^n \right) \\
& \leq \inf_{\lambda>0} 4\exp\left(  \frac{2\lambda^2}{n^2} \sum_{i\in\Is_\pi} \frac{1}{\beta(X_i, \pi)^2}  -\lambda\epsilon \right)\\
& = 4\exp\left(-\frac{n\epsilon^2}{8 \sigma_\pi^2 } \right).
\end{align*}
Taking $\epsilon = \sigma_\pi \sqrt{\frac{8}{n}\log\frac{8}{\delta}} $ yields the first result.

We now drive the Bernstein-type result by re-bounding \Cref{pf-eq-a1}. Using the fact $e^x \leq 1+x+\frac{x^2}{2(1-|x|/3)}$ for $|x|\leq3$, it holds for $\lambda$ satisfying $\frac{2\lambda }{n}w_\pi(x_i,a_i) <3$ that
$$
\Eb \exp\left( \frac{2\lambda w_\pi(x_i,a_i)}{n} \varepsilon_i \right) = \frac{1}{2} e^{ \frac{2\lambda }{n}w_\pi(x_i,a_i) } + \frac{1}{2}e^{-\frac{2\lambda}{n} w_\pi(x_i,a_i)} \leq 1 + \frac{\lambda^2\frac{4}{n^2} w_\pi(x_i,a_i)^2 }{2(1-\frac{\lambda}{3} \frac{2w_\pi(x_i,a_i)}{n})}.
$$
Note that, given $\{X_i\}_{i=1}^n$, the weights $w_\pi(X_i,A_i)$ are conditional independent with $\Eb[ w_\pi(X_i,A_i)^2 \mid \{X_i\}_{i=1}^n]= \frac{1}{\beta(X_i,\pi)}$ and  $ w_\pi(X_i,A_i)  \leq \frac{1}{\beta(X_i,\pi)}$. Hence, for $0 < \lambda < \min_{i\in\Is_\pi} \frac{3}{2}n\beta(X_i,\pi)$, it follows that
\begin{align*}
\Cref{pf-eq-a1} & \leq \Eb\left[ 4 \Pi_{i\in\Is_\pi} \Eb_\varepsilon \exp \left(  \frac{2\lambda w_\pi(X_i,A_i) }{n}\varepsilon_i \right) \;\Big|\; \{X_i\}_{i=1}^n \right]\\
& \leq \Eb\left[ 4 \Pi_{i\in\Is_\pi} \left( 1 + \frac{\lambda^2\frac{4}{n^2} w_\pi(X_i,A_i)^2 }{2(1-\frac{\lambda}{3} \frac{2w_\pi(X_i,A_i)}{n})} \right) \;\Big|\; \{X_i\}_{i=1}^n \right]\\
& \leq 4 \Pi_{i\in\Is_\pi}   \left( 1 + \frac{\lambda^2\frac{4}{n^2\beta(X_i,A_i)} }{2(1-\frac{\lambda}{3} \frac{2}{n\beta(X_i,\pi)})} \right) \\
& \leq  4 \Pi_{i\in\Is_\pi}  \exp \left( \frac{\lambda^2 \frac{4}{n^2\beta(X_i,\pi)}}{2\left(1- \frac{\lambda}{3} \frac{2}{n\beta(X_i,\pi)} \right)}\right) \\
& \leq 4  \exp \left( \frac{\lambda^2 \sum_{i\in\Is_\pi}  \frac{4}{n^2\beta(X_i,\pi)}}{2\left(1- \frac{\lambda}{3} \frac{2}{n\beta_{\min}} \right)}\right).
\end{align*}
Therefore, for any $\epsilon>0$,
$$
\Pb\left( \left\| \frac{1}{n}\sum_{i\in\Is_\pi}w_\pi(X_i,A_i)\1\{Y_i\leq \cdot\} - G(\cdot\mid X_i,\pi) \right\|_\infty \geq \epsilon \;\Big|\; \{X_i\}_{i=1}^n \right)  \leq  4\exp\left( \frac{\lambda^2  \frac{4}{n}(\sigma_\pi')^2 }{2\left(1- \frac{\lambda}{3} \frac{2}{n\beta_{\min}} \right)} -\lambda\epsilon \right).
$$
Choosing $\lambda = \frac{\epsilon}{\frac{4}{n}(\sigma_\pi')^2 + \frac{\epsilon}{3} \frac{2}{n\beta_{\min}}}$ as in the classical Bernstein inequality, we obtain
$$
\Pb\left( \left\| \frac{1}{n}\sum_{i\in\Is_\pi}w_\pi(X_i,A_i)\1\{Y_i\leq \cdot\} - G(\cdot\mid X_i,\pi) \right\|_\infty \geq \epsilon \;\Big|\; \{X_i\}_{i=1}^n \right)  \leq  4\exp\left( \frac{- \epsilon^2}{2\left( \frac{4}{n}(\sigma_\pi')^2 + \frac{\epsilon}{3} \frac{2}{n\beta_{\min}} \right)} \right).
$$
The proof is complete.

\end{proof}

%%%%%%%%%%%%%%%%%%%%%%%%%%%%%%%%%%%%%%%%%%%%

\begin{lemma}\label{lem-ope-2} 
For any $\delta\in(0,1)$, it holds that
$$\Pb\left( \left\|\frac{1}{n}\sum_{i=1}^{n} G(\cdot\mid X_i,\pi) - F^\pi(\cdot) \right\|_\infty \geq \sqrt{\frac{8}{n}\log\frac{8}{\delta}} \right) \leq \frac{\delta}{2}.$$
\end{lemma}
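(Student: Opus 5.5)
Lemma~\ref{lem-ope-2} is a DKW-type bound for the empirical average of the random functions $Z_i(t)\coloneq G(t\mid X_i,\pi)$, $t\in\Rb$. These are i.i.d. across $i$, since $X_i\sim\Pb_X$ and $\pi$ is fixed. Each is a nondecreasing function of $t$ with values in $[0,1]$, and $\Eb Z_i(t)=F^\pi(t)$. I will reuse the two-step template from the proof of Lemma~\ref{lem-ope-1}: symmetrization followed by a moment generating function bound. Monotonicity replaces the sorting argument used there.

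\emph{Step I: symmetrization.} Let $X_i'$ be independent copies of $X_i$ and $\varepsilon_i$ independent Rademacher signs. Exactly as in \Cref{pf-eq-a1}, Jensen's inequality and exchangeability give, for every $\lambda>0$,
\[
\Eb\exp\Bigl(\lambda\Bigl\|\tfrac1n\textstyle\sum_i Z_i-F^\pi\Bigr\|_\infty\Bigr)\le \Eb\exp\Bigl(\tfrac{2\lambda}{n}\Bigl\|\textstyle\sum_i\varepsilon_i Z_i\Bigr\|_\infty\Bigr).
\]
The supremum over $t$ is harmless here. Each $Z_i$ is right-continuous, so the sup can be restricted to a countable dense set, which makes all quantities measurable.

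\emph{Step II: controlling the Rademacher sup.} This is the step I expect to be the main obstacle. In Lemma~\ref{lem-ope-1} the indicator structure meant that $\sum_i\varepsilon_i w_i\1\{y_i\le t\}$ was a partial sum along the sorted $y$'s. Here each $Z_i(\cdot)$ is a general monotone function, so that reduction is not available directly. I will use a quantile-coupling trick to recover it. Let $U_i\sim\mathrm{Unif}[0,1]$ be independent and let $Y_i^\ast$ be defined through the conditional quantile function of $G(\cdot\mid X_i,\pi)$ at $U_i$. Then $\1\{Y_i^\ast\le t\}$ has conditional mean $Z_i(t)$ given $X_i$, and $F^\pi$ is the CDF of the i.i.d. variables $Y_i^\ast$.

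I will then split
\[
\tfrac1n\textstyle\sum_i Z_i-F^\pi=\tfrac1n\textstyle\sum_i\bigl(\1\{Y_i^\ast\le\cdot\}-F^\pi\bigr)-\tfrac1n\textstyle\sum_i\bigl(\1\{Y_i^\ast\le\cdot\}-Z_i\bigr).
\]
Alternatively, by Jensen, $\|\frac1n\sum_i Z_i-F^\pi\|_\infty\le\Eb[\|\frac1n\sum_i\1\{Y_i^\ast\le\cdot\}-F^\pi\|_\infty\mid X_{1:n}]$, and taking the exponential MGF through this conditional expectation avoids the split entirely. Either way the problem reduces to the exponential moment of an ordinary empirical CDF deviation. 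For that I will apply the argument of Lemma~\ref{lem-ope-1} with all weights equal to $1$. Symmetrize, sort the $Y_i^\ast$, reduce the sup to $\max_j|\sum_{i\le j}\varepsilon_i|$, and apply the maximal inequality Lemma~\ref{lem-maxIneq} together with $\Eb e^{s\varepsilon}\le e^{s^2/2}$. This yields
\[
\Eb\exp\Bigl(\lambda\Bigl\|\tfrac1n\textstyle\sum_i Z_i-F^\pi\Bigr\|_\infty\Bigr)\le 4\exp\bigl(2\lambda^2/n\bigr).
\]

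\emph{Step III: Chernoff.} From the MGF bound,
\[
\Pb\bigl(\|\cdot\|_\infty\ge\epsilon\bigr)\le\inf_{\lambda>0}4\exp\bigl(2\lambda^2/n-\lambda\epsilon\bigr)=4\exp\bigl(-n\epsilon^2/8\bigr).
\]
Setting $\epsilon=\sqrt{\frac8n\log\frac8\delta}$ gives probability at most $\delta/2$, as claimed. This matches the $\sigma_\pi\equiv1$ case of the constants in Lemma~\ref{lem-ope-1}, which confirms the bookkeeping.
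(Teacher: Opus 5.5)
Your proof is correct and takes a genuinely different route from the paper's at the key step. It gives the same exponential-moment bound $4\exp(2\lambda^2/n)$, and hence exactly the stated threshold.

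After symmetrizing, the paper handles the general monotone functions $G(\cdot\mid x_i,\pi)$ by deterministic approximation. \Cref{lem-stepFunc} replaces each one by an $m$-atom step function $\frac1m\sum_{j}\1\{s_j^i\le t\}$ placed at the $\frac{2j-1}{2m}$-quantiles. Convexity of $\exp$ then splits the result into $m$ indicator-type Rademacher sums, using a common level $j$ for all $i$. Each sum is handled by the sorting argument and \Cref{lem-maxIneq}, and the approximation cost $e^{\lambda/m}$ is removed by letting $m\to\infty$.

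Your quantile coupling is the exact, continuous version of this device. The identity $G(t\mid X_i,\pi)=\Eb[\1\{Y_i^\ast\le t\}\mid X_i]$ holds exactly, so you need no approximation lemma, no error factor and no limit. A single conditional Jensen step then reduces the lemma to the classical exponential DKW-type bound for a genuine empirical CDF. This also makes transparent that averaging conditional CDFs is no harder than observing actual samples.

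The paper's route, in exchange, works only with deterministic objects and needs no auxiliary randomization. Your route needs joint measurability of $(u,x)\mapsto G^{-1}(u\mid x,\pi)$, which is immediate from $\{G^{-1}(u\mid x,\pi)\le t\}=\{u\le G(t\mid x,\pi)\}$. Both devices transfer equally well to the weighted $\bar G$ term in \Cref{lem-uniBound-DR-1}.

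Two points of execution. First, commit to the Jensen route. The split route, with the two pieces bounded separately, loses a factor $2$ in the threshold: Cauchy--Schwarz on the MGF gives only $4\exp(-n\epsilon^2/32)$. Second, in the Jensen-first version your Step I is redundant, and independence of the $U_i$ is essential, because you symmetrize again and need the $Y_i^\ast$ to be i.i.d.\ with CDF $F^\pi$. If you instead apply the coupling after Step I, conditionally on $X_{1:n}$ and the signs, any coupling with uniform marginals works (the paper's grid is the comonotone one). Your argument then becomes literally the continuous analogue of the paper's.
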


\begin{proof}
We begin with the symmetrization similarly. Let $\{X_i'\}_{i=1}^n$ be independent copies of $\{X_i\}_{i=1}^n$ and $\{\varepsilon_i\}_{i=1}^n$ be independent Rademacher variables. For any $\lambda>0$, we have
\begin{align*}
& \Eb\left[ \exp \left( \lambda \left\|\frac{1}{n}\sum_{i=1}^{n} G(\cdot\mid X_i,\pi) - F^\pi(\cdot) \right\|_\infty \right) \right] \\
& =\Eb\left[ \exp \left( \lambda \left\| \Eb\left[ \frac{1}{n}\sum_{i=1}^{n} G(\cdot\mid X_i,\pi) -  G(\cdot\mid X_i',\pi) \;\Big|\; \{X_i\}_{i=1}^n \right] \right\|_\infty \right) \right] \\
& \leq \Eb\left[ \exp \left( \lambda  \Eb\left[ \left\| \frac{1}{n}\sum_{i=1}^{n} G(\cdot\mid X_i,\pi) -  G(\cdot\mid X_i',\pi) \right\|_\infty \;\Big|\; \{X_i\}_{i=1}^n \right] \right) \right] \\
& \leq \Eb\left[ \Eb\left[ \exp \left( \lambda  \left\| \frac{1}{n}\sum_{i=1}^{n} G(\cdot\mid X_i,\pi) -  G(\cdot\mid X_i',\pi) \right\|_\infty  \right) \;\Big|\; \{X_i\}_{i=1}^n \right] \right] \\
& = \Eb\left[ \exp \left( \lambda   \left\| \frac{1}{n}\sum_{i=1}^{n} G(\cdot\mid X_i,\pi) -  G(\cdot\mid X_i',\pi) \right\|_\infty \right) \right] \\
& = \Eb\left[ \exp \left( \lambda   \left\| \frac{1}{n}\sum_{i=1}^{n} \varepsilon_i( G(\cdot\mid X_i,\pi) -  G(\cdot\mid X_i',\pi)) \right\|_\infty \right) \right] \\
& \leq \Eb\left[ \exp \left( 2\lambda   \left\| \frac{1}{n}\sum_{i=1}^{n} \varepsilon_i G(\cdot\mid X_i,\pi) \right\|_\infty \right) \right] \\
& = \Eb\left[ \Eb_\varepsilon\left[ \exp \left( 2\lambda   \left\| \frac{1}{n}\sum_{i=1}^{n} \varepsilon_i G(\cdot\mid X_i,\pi) \right\|_\infty \right) \right] \right].
\end{align*}

Next, we approximate the CDFs $G(\cdot\mid X_i,\pi)$ by step functions. Let $\{x_i\}_{i=1}^n$ be any realization  of $\{X_i\}_{i=1}^n$. For any $m\in\mathbb{N}_+$, we know from \Cref{lem-stepFunc} that there exists $\{(s_1^i,\ldots,s_m^i)\}_{i=1}^n\in\Rb^{m\times n}$ depending on $\{x_i\}_{i=1}^n$ such that, for step functions defined as $G_{m,i}(t) = \frac{1}{m}\sum_{j=1}^m\1\{ s_j^i \leq t\} $, $i\in[n]$, it holds
$$
\| G_{m,i}(\cdot) - G(\cdot\mid x_i,\pi) \|_\infty \leq \frac{1}{2m}.
$$
Therefore,
\begin{align*}
& \Eb \exp \left( 2\lambda   \left\| \frac{1}{n}\sum_{i=1}^{n} \varepsilon_i G(\cdot\mid x_i,\pi) \right\|_\infty \right)  \\
& \leq \Eb \exp \left( 2\lambda   \left\| \frac{1}{n}\sum_{i=1}^{n} \varepsilon_i G_{m,i}(\cdot) \right\|_\infty + 2\lambda\left\| \frac{1}{n}\sum_{i=1}^{n} \varepsilon_i( G(\cdot\mid x_i,\pi) - G_{m,i}(\cdot) )\right\|_\infty \right)\\
& \leq e^{\frac{\lambda}{m}}\Eb \exp \left( 2\lambda   \left\| \frac{1}{n}\sum_{i=1}^{n} \varepsilon_i G_{m,i}(\cdot) \right\|_\infty \right) \\
%& = e^{\frac{\lambda}{m}}\Eb \exp \left( 2\lambda   \left\|\frac{1}{m} \frac{1}{n} \sum_{j=1}^m \sum_{i=1}^{n} \varepsilon_i \1\{ s_j^i \leq t\} \right\|_\infty \right) \\
& \leq e^{\frac{\lambda}{m}}\Eb \exp \left( \frac{1}{m} \sum_{j=1}^m \frac{2\lambda}{n}  \left\|  \sum_{i=1}^{n} \varepsilon_i \1\{ s_j^i \leq \cdot\} \right\|_\infty \right) \\
& \leq e^{\frac{\lambda}{m}} \frac{1}{m} \sum_{j=1}^m  \Eb \exp \left( \frac{2\lambda}{n}  \left\|  \sum_{i=1}^{n} \varepsilon_i \1\{ s_j^i \leq \cdot\} \right\|_\infty \right).
\end{align*}
For each $j\in[m]$, analogously to \Cref{pf-eq-a2}, we can obtain the following bound for the exponential moment:
$$
\Eb \exp \left( \frac{2\lambda}{n}  \left\|  \sum_{i=1}^{n} \varepsilon_i \1\{ s_j^i \leq \cdot\} \right\|_\infty \right) \leq 4 \exp\left( \frac{2\lambda^2}{n} \right).
$$
Thus, 
$$
\Eb \exp \left( 2\lambda   \left\| \frac{1}{n}\sum_{i=1}^{n} \varepsilon_i G(\cdot\mid x_i,\pi) \right\|_\infty \right)  \leq e^{\frac{\lambda}{m}} 4 \exp\left( \frac{2\lambda^2}{n} \right).
$$
Since the above inequality holds for any $m\in\mathbb{N}_+$, taking $m\to\infty$ yields
\begin{equation}\label{pf-eq-a3}
\Eb \exp \left( 2\lambda   \left\| \frac{1}{n}\sum_{i=1}^{n} \varepsilon_i G(\cdot\mid x_i,\pi) \right\|_\infty \right)  \leq  4 \exp\left( \frac{2\lambda^2}{n} \right).
\end{equation}
Marginalizing it over $\{X_i\}_{i=1}^n$ then gives
$$
\Eb\left[ \exp \left( \lambda \left\|\frac{1}{n}\sum_{i=1}^{n} G(\cdot\mid X_i,\pi) - F^\pi(\cdot) \right\|_\infty \right) \right] \leq  4 \exp\left( \frac{2\lambda^2}{n} \right).
$$
Finally we have that, for any $\epsilon\in\Rb$,
\begin{align*}
\Pb\left( \left\|\frac{1}{n}\sum_{i=1}^{n} G(\cdot\mid X_i,\pi) - F^\pi(\cdot) \right\|_\infty \geq \epsilon \right) 
\leq \inf_{\lambda>0} 4\exp\left( \frac{2\lambda^2}{n}  -\lambda\epsilon \right) = 4\exp\left( -\frac{n\epsilon^2}{8} \right).
\end{align*}
Taking $\epsilon = \sqrt{\frac{8}{n}\log\frac{8}{\delta}}$ completes the proof.
\end{proof}
%%%%%%%%%%%%%%%%%%%%%%%%%%%%%%%%%%%%%%%%%%%%%%%%%%%%%%%%%%%%%%%%%%%%%%%%%%%%%%%%%%%%%%%%

%%%%%%%%%%%%%%%%%%%%%%%%%%%%%%%%%%%%%%%%%%%%%%%%%%%%%%%%%%%%%%%%%%%%%%%%%%%%%%%%%%%%%%%%
\subsection{Proof of \Cref{thm-uniBound-IS}}\label{pf-thm-uniBound-IS}
\begin{proof}
As before, we decompose the error as
\begin{align*}
\sup_{\pi\in\Pi} \left\| \Fis^\pi - F^\pi \right\|_\infty 
\leq \sup_{\pi\in\Pi} \left\| \frac{1}{n}\sum_{i=1}^{n}\hat{G}(\cdot\mid X_i,\pi) - G(\cdot\mid X_i,\pi) \right\|_\infty  + \sup_{\pi\in\Pi} \left\|\frac{1}{n}\sum_{i=1}^{n} G(\cdot\mid X_i,\pi) - F^\pi(\cdot) \right\|_\infty.
\end{align*}
For the first term, it holds by \Cref{pf-eq-a4} that, for any fixed $\pi\in\Pi$,
$$
\Pb\left( \left\| \frac{1}{n}\sum_{i=1}^{n}\hat{G}(\cdot\mid X_i,\pi) - G(\cdot\mid X_i,\pi) \right\|_\infty \geq    \sigma_\pi \sqrt{\frac{8}{n}\log\frac{8}{\delta}} + r_\pi \;\Big|\; \{X_i\}_{i=1}^n \right) \leq \frac{\delta}{2}.
$$

Recall the definitions of $ \hat{G}(\cdot\mid X_i,\pi), G(\cdot\mid X_i,\pi), \sigma_\pi$ and $ r_\pi$. Given $\{X_i\}_{i=1}^n$, we observe that different $\pi$ affect the above event only via $\pi(X_1),\ldots,\pi(X_n)$. Moreover, by \Cref{lem-Natarajan}, we know that 
$$|\{(\pi(X_1),\ldots,\pi(X_n)): \pi\in\Pi \}|\leq n^{d_{\Pi}} K^{2d_{\Pi}}.$$ 
Therefore, taking a union bound yields
\begin{align*}
\Pb\left( \sup_{\pi\in\Pi} \left\| \frac{1}{n}\sum_{i=1}^{n}\hat{G}(\cdot\mid X_i,\pi) - G(\cdot\mid X_i,\pi) \right\|_\infty -  \sigma_\pi \sqrt{\frac{8}{n}\log\frac{8}{\delta}} - r_\pi \geq 0 \;\Big|\; \{X_i\}_{i=1}^n \right) 
\leq \frac{\delta}{2}  n^{d_{\Pi}} K^{2d_{\Pi}}.
\end{align*}
After marginalization, it holds with probability at most $\delta/2$ that
$$
\sup_{\pi\in\Pi} \left\| \frac{1}{n}\sum_{i=1}^{n}\hat{G}(\cdot\mid X_i,\pi) - G(\cdot\mid X_i,\pi) \right\|_\infty - \sigma_\pi \sqrt{\frac{8}{n} \left( \log\frac{8}{\delta} +  d_{\Pi}  \log(nK^2)  \right)  } -  r_\pi \geq 0.
$$
Together with the bound of the second term in \Cref{lem-uniBound-1}, we finally have
\begin{align*}
\Pb \left( \sup_{\pi\in\Pi} \left\| \Fis^\pi - F^\pi \right\|_\infty -  (\sigma_\pi+2) \sqrt{ \frac{8}{n} \left( \log\frac{20}{\delta} + d_{\Pi}  \log(nK^2) \right) } - r_\pi \geq 0  \right) \leq \delta.
\end{align*}
The proof is complete.

\end{proof}

%%%%%%%%%%%%%%%%%%%%%%%%%%%%%%%%%%%%%%%%%%%%

\begin{lemma}\label{lem-uniBound-1}
For any $\delta\in(0,1)$, it holds that
$$
\Pb\left( \sup_{\pi\in\Pi} \left\|\frac{1}{n}\sum_{i=1}^{n} G(\cdot\mid X_i,\pi) - F^\pi(\cdot) \right\|_\infty \geq  2\sqrt{ \frac{8}{n} \left[ \log\frac{20}{\delta} +  d_{\Pi}  \log(nK^2) \right] } \right) \leq \frac{\delta}{2}.
$$
\end{lemma}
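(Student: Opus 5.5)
The plan is to reduce the uniform statement over $\Pi$ to a union bound over the finitely many action patterns that $\Pi$ induces on a sample. The difficulty is that $F^\pi=\Eb\,G(\cdot\mid X,\pi)$ depends on $\pi$ through its values on the whole context space, not just on the sample. So I first symmetrize with a ghost sample. Let $\{X_i'\}_{i=1}^n$ be an independent copy of $\{X_i\}_{i=1}^n$ and write
\[
Z_\pi(t)=\frac1n\sum_{i=1}^n G(t\mid X_i,\pi)-F^\pi(t).
\]
The standard symmetrization inequality for suprema (a Chebyshev/Giné–Zinn type bound, using $\Var(Z_\pi(t))\le 1/(4n)$) then gives, for $\epsilon\ge \sqrt{2/n}$,
\[
\Pb\bigl(\sup_\pi\|Z_\pi\|_\infty\ge \epsilon\bigr)\le 2\,\Pb\Bigl(\sup_\pi\Bigl\|\frac1n\sum_{i=1}^n \varepsilon_i\bigl(G(\cdot\mid X_i,\pi)-G(\cdot\mid X_i',\pi)\bigr)\Bigr\|_\infty\ge \epsilon/2\Bigr).
\]
Here $\varepsilon_i$ are Rademacher variables, and the swap of $X_i$ with $X_i'$ is justified by exchangeability.

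Next I condition on the $2n$ points $X_1,\dots,X_n,X_1',\dots,X_n'$. The symmetrized process now depends on $\pi$ only through the pattern $(\pi(X_1),\dots,\pi(X_n),\pi(X_1'),\dots,\pi(X_n'))$. By \Cref{lem-Natarajan} applied to these $2n$ points, there are at most $(2n)^{d_\Pi}K^{2d_\Pi}$ distinct patterns. For each fixed pattern, conditional on the points, the Rademacher average is bounded using the argument from the proof of \Cref{lem-ope-2}. I approximate each of the CDFs $G(\cdot\mid x_i,\pi)$ and $G(\cdot\mid x_i',\pi)$ by step functions via \Cref{lem-stepFunc}, reduce the sup over $t$ to a maximum of partial sums, and apply \Cref{lem-maxIneq}. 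Since each increment lies in $[-1,1]$, this gives an MGF bound of the form $C\exp(c\lambda^2/n)$, and the Chernoff method yields a conditional tail $\le C' \exp(-n\epsilon^2/c'')$ for a single pattern. A union bound over patterns then multiplies this by $(2n)^{d_\Pi}K^{2d_\Pi}$. I finish by marginalizing over the points.

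Finally I choose $\epsilon = 2\sqrt{\frac{8}{n}[\log\frac{20}{\delta}+d_\Pi\log(nK^2)]}$ and check that the total probability is at most $\delta/2$. The factor $2$ in front of the square root, together with the constant $20$ inside the logarithm, is where the slack from the halving $\epsilon/2$ and the prefactors of the tail bound gets absorbed. The extra $2^{d_\Pi}$ coming from $2n$ rather than $n$ points also needs to be absorbed. I expect it to be covered because the Chernoff exponent has spare room once $\epsilon$ carries the factor $2$: the exponent becomes roughly $4[\log(20/\delta)+d_\Pi\log(nK^2)]/c$, which dominates $d_\Pi\log(2nK^2)$ for the relevant constants. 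The condition $\epsilon\ge\sqrt{2/n}$ holds automatically. If it did not, the bound would be trivial.

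The main obstacle is the bookkeeping of constants. The symmetrization and union-bound step is conceptually standard, but matching exactly $2\sqrt{\tfrac{8}{n}[\log(20/\delta)+d_\Pi\log(nK^2)]}$ and $\delta/2$ requires care with the ghost-sample factor and with the $\log(2n)$ versus $\log n$ discrepancy. If the Chebyshev-based symmetrization turns out too lossy, I would switch to the MGF-based symmetrization used in \Cref{lem-ope-2}. That route bounds $\Eb\exp(\lambda\sup_\pi\|Z_\pi\|_\infty)$ by the expectation of the exponential of the symmetrized supremum, where the sup over $\pi$ is replaced by a sum over at most $(2n)^{d_\Pi}K^{2d_\Pi}$ patterns. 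Each pattern's term is bounded via \eqref{pf-eq-a3}, and the result follows from Chernoff.
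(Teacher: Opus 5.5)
Your primary (Chebyshev) route does not close, because the ``spare room'' you expect in the exponent is not there. Put $L=\log\frac{20}{\delta}+d_\Pi\log(nK^2)$, so the target threshold is $\epsilon=2\sqrt{8L/n}$ and $n\epsilon^2=32L$.

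After symmetrization you need the tail of the two-sample process $\|\frac1n\sum_i\varepsilon_i(G(\cdot\mid x_i,\pi)-G(\cdot\mid x_i',\pi))\|_\infty$ at level $\epsilon/2$. \Cref{lem-maxIneq} cannot be applied to it directly. Once all $2n$ thresholds are sorted, each $\varepsilon_i$ enters the walk twice with opposite signs, so the increments are not independent. You must first split the process into its two one-sample pieces. With the tools you cite, \eqref{pf-eq-a3} then gives a per-pattern tail of at best $4\exp(-n(\epsilon/2)^2/8)=4e^{-L}$. So the exponent is exactly $L$, not about $4L$: the factor $2$ in $\epsilon$ is used up by the halving and the split.

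A union bound over $(2n)^{d_\Pi}K^{2d_\Pi}$ patterns then gives $2^{d_\Pi}\delta/5$ even before the symmetrization factor. This exceeds $\delta/2$ for $d_\Pi\ge 2$, and also for $d_\Pi=1$ once your factor $2$ is included.

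The missing step is to split before the union bound. Bound the supremum by $\sup_\pi\|\frac1n\sum_i\varepsilon_iG(\cdot\mid X_i,\pi)\|_\infty+\sup_\pi\|\frac1n\sum_i\varepsilon_iG(\cdot\mid X_i',\pi)\|_\infty$. These are two identically distributed suprema, each depending on $\pi$ only through its pattern on $n$ points, so only $n^{d_\Pi}K^{2d_\Pi}$ patterns are needed. This is what the paper does. It splits at the probability level (two events at level $\epsilon/4$, total $2\delta/5$), so it also needs a symmetrization constant of at most $5/4$. It gets this from \Cref{lem-ope-2} at a maximizer $\pi^\dagger$, whose ghost tail is at most $1/5$ since $\sqrt{8L}\ge\sqrt{8\log 20}$. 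That lands exactly on $\delta/2$. If you instead split at the MGF level via $e^{a+b}\le\frac12e^{2a}+\frac12e^{2b}$, you get $\delta/5$, and your factor $2$ suffices.

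Your MGF fallback, by contrast, is correct and is in fact sharper than the paper's argument. Jensen and exchangeability, as in \Cref{lem-ope-2}, give $\Eb e^{\lambda\sup_\pi\|Z_\pi\|_\infty}\le\Eb\exp(\lambda\sup_\pi\|\frac1n\sum_i\varepsilon_i(G(\cdot\mid X_i,\pi)-G(\cdot\mid X_i',\pi))\|_\infty)$. Replace the supremum by a sum over the $(2n)^{d_\Pi}K^{2d_\Pi}$ patterns, and bound each term by $4e^{2\lambda^2/n}$ using the split above and \eqref{pf-eq-a3}. This yields $\Pb(\sup_\pi\|Z_\pi\|_\infty\ge\epsilon)\le 4(2n)^{d_\Pi}K^{2d_\Pi}e^{-n\epsilon^2/8}$.

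No threshold is halved, so at the lemma's $\epsilon$ the exponent really is $4L$. The bound becomes $4\cdot 2^{d_\Pi}(nK^2)^{-3d_\Pi}(\delta/20)^4$, far below $\delta/2$, and your $2^{d_\Pi}$ is harmless here. Splitting before the union bound even gives the lemma with threshold $\sqrt{\frac8n[\log\frac8\delta+d_\Pi\log(nK^2)]}$, less than half the stated one.

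The paper's probability symmetrization loses a factor $4$ in the exponent, which is why its constants are exactly tight. The MGF route is shorter and stronger, so make it your main argument rather than a fallback.
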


\begin{proof}
With loss of generality, we assume the supremum is attained, i.e., $ \exists\; \pi^\dagger\in\Pi$ s.t.
$$
\left\|\frac{1}{n}\sum_{i=1}^{n} G(\cdot\mid X_i,\pi^\dagger) - F^{\pi^\dagger}(\cdot) \right\|_\infty = \sup_{\pi\in\Pi} \left\|\frac{1}{n}\sum_{i=1}^{n} G(\cdot\mid X_i,\pi) - F^\pi(\cdot) \right\|_\infty .
$$
Otherwise, a limiting argument gives the same result. Note that $\pi^\dagger$ is fixed given $\{X_i\}_{i=1}^{n}$. By \Cref{lem-ope-2}, we know that for any $\epsilon \geq \sqrt{8\log20}$,
$$
\Pb\left( \left\|\frac{1}{n}\sum_{i=1}^{n} G(\cdot\mid X_i',\pi^\dagger) - F^{\pi^\dagger}(\cdot) \right\|_\infty \geq \frac{\epsilon}{\sqrt{n}}  \;\Big|\; \{X_i\}_{i=1}^{n} \right) \leq \frac{1}{5},
$$
where $\{X_i'\}_{i=1}^{n}$ are independent copies of $\{X_i\}_{i=1}^{n}$. Therefore, 
\begin{align*}
& \frac{4}{5} \Pb\left( \sup_{\pi\in\Pi} \left\|\frac{1}{n}\sum_{i=1}^{n} G(\cdot\mid X_i,\pi) - F^\pi(\cdot) \right\|_\infty \geq \frac{2\epsilon}{\sqrt{n}} \right) \\
& = \frac{4}{5} \Pb\left( \left\|\frac{1}{n}\sum_{i=1}^{n} G(\cdot\mid X_i,\pi^\dagger) - F^{\pi^\dagger}(\cdot) \right\|_\infty \geq \frac{2\epsilon}{\sqrt{n}} \right) \\
& \leq  \Eb\left[  \1\left\{ \left\|\frac{1}{n}\sum_{i=1}^{n} G(\cdot\mid X_i,\pi^\dagger) - F^{\pi^\dagger}(\cdot) \right\|_\infty \geq \frac{2\epsilon}{\sqrt{n}} \right\} \cdot \Pb\left( \left\|\frac{1}{n}\sum_{i=1}^{n} G(\cdot\mid X_i',\pi^\dagger) - F^{\pi^\dagger}(\cdot) \right\|_\infty \leq \frac{\epsilon}{\sqrt{n}}  \;\Big|\; \{X_i\}_{i=1}^{n} \right) \right] \\
& = \Pb\left( \left\|\frac{1}{n}\sum_{i=1}^{n} G(\cdot\mid X_i,\pi^\dagger) - F^{\pi^\dagger}(\cdot) \right\|_\infty \geq \frac{2\epsilon}{\sqrt{n}}  , \; \left\|\frac{1}{n}\sum_{i=1}^{n} G(\cdot\mid X_i',\pi^\dagger) - F^{\pi^\dagger}(\cdot) \right\|_\infty \leq \frac{\epsilon}{\sqrt{n}}  \right) \\
& \leq \Pb \left( \left\|\frac{1}{n}\sum_{i=1}^{n} G(\cdot\mid X_i,\pi^\dagger) - F^{\pi^\dagger}(\cdot) \right\|_\infty - \left\|\frac{1}{n}\sum_{i=1}^{n} G(\cdot\mid X_i',\pi^\dagger) - F^{\pi^\dagger}(\cdot) \right\|_\infty \geq \frac{\epsilon}{\sqrt{n}}  \right) \\
& \leq \Pb \left( \left\|\frac{1}{n}\sum_{i=1}^{n} G(\cdot\mid X_i,\pi^\dagger) - G(\cdot\mid X_i',\pi^\dagger) \right\|_\infty \geq \frac{\epsilon}{\sqrt{n}}  \right) \\
& \leq \Pb \left( \sup_{\pi\in\Pi} \left\|\frac{1}{n}\sum_{i=1}^{n} G(\cdot\mid X_i,\pi) - G(\cdot\mid X_i',\pi) \right\|_\infty \geq \frac{\epsilon}{\sqrt{n}}  \right) \\
& = \Pb \left( \sup_{\pi\in\Pi} \left\|\frac{1}{n}\sum_{i=1}^{n} \varepsilon_i( G(\cdot\mid X_i,\pi) - G(\cdot\mid X_i',\pi) ) \right\|_\infty \geq \frac{\epsilon}{\sqrt{n}}  \right) \\
& \leq 2 \Pb \left( \sup_{\pi\in\Pi} \left\|\frac{1}{n}\sum_{i=1}^{n} \varepsilon_i G(\cdot\mid X_i,\pi)  \right\|_\infty \geq \frac{\epsilon}{2\sqrt{n}}  \right).
\end{align*}
Above, the second equality holds since $\1\left\{ \left\|\frac{1}{n}\sum_{i=1}^{n} G(\cdot\mid X_i,\pi^\dagger) - F^{\pi^\dagger}(\cdot) \right\|_\infty \geq \frac{2\epsilon}{\sqrt{n}} \right\}$ is measurable with respect to $\{X_i\}_{i=1}^{n}$, the last equality follows from the exchangeability, and $\{\varepsilon_i\}_{i=1}^n$ are independent Rademacher variables. 

For any fixed $\pi\in\Pi$, using \Cref{pf-eq-a3} in the proof of \Cref{lem-ope-2}, we know that for any $\lambda>0$,
$$
\Eb \exp \left( \lambda   \left\| \frac{1}{n}\sum_{i=1}^{n} \varepsilon_i G(\cdot\mid X_i,\pi) \right\|_\infty \right)  \leq  4 \exp\left( \frac{\lambda^2}{2n} \right).
$$
Thus,
\begin{align*}
\Pb \left(  \left\|\frac{1}{n}\sum_{i=1}^{n} \varepsilon_i G(\cdot\mid X_i,\pi)  \right\|_\infty \geq \frac{\epsilon}{2\sqrt{n}}  \right) \leq \inf_{\lambda>0} 4\exp(\frac{\lambda^2}{2n} -\frac{\lambda \epsilon}{2\sqrt{n}}) = 4\exp(-\frac{\epsilon^2}{8}).
\end{align*}
Again, as different $\pi$ affect the above event only via $\pi(X_1),\ldots,\pi(X_n)$, \Cref{lem-Natarajan} and a union bound induce that
$$
\Pb \left( \sup_{\pi\in\Pi} \left\|\frac{1}{n}\sum_{i=1}^{n} \varepsilon_i G(\cdot\mid X_i,\pi)  \right\|_\infty \geq \frac{\epsilon}{2\sqrt{n}}  \right) \leq 4n^{d_{\Pi}} K^{2d_{\Pi}}\exp(-\frac{\epsilon^2}{8}).
$$

Finally, we obtain
$$
\Pb\left( \sup_{\pi\in\Pi} \left\|\frac{1}{n}\sum_{i=1}^{n} G(\cdot\mid X_i,\pi) - F^\pi(\cdot) \right\|_\infty \geq \frac{2\epsilon}{\sqrt{n}} \right) \leq 10 n^{d_{\Pi}} K^{2d_{\Pi}}\exp(-\frac{\epsilon^2}{8}).
$$
Taking $\epsilon = \sqrt{8\log\frac{20}{\delta} + 8 d_{\Pi}  \log(nK^2) } $ completes the proof.

\end{proof}

%%%%%%%%%%%%%%%%%%%%%%%%%%%%%%%%%%%%%%%%%%%%
%%%%%%%%%%%%%%%%%%%%%%%%%%%%%%%%%%%%%%%%%%%%%%%%%%%%%%%%%%%%%%%%%%%%%%%%%%%%%%%%%%%%%%%%

%%%%%%%%%%%%%%%%%%%%%%%%%%%%%%%%%%%%%%%%%%%%%%%%%%%%%%%%%%%%%%%%%%%%%%%%%%%%%%%%%%%%%%%%
\subsection{Proof of \Cref{thm-uniBound-WIS}}\label{pf-thm-uniBound-WIS}
\begin{proof}
It suffices to drive a conditional $1-\delta/2$ confidence upper bound, analogous to \Cref{lem-ope-1}, for 
$$ \left\| \frac{1}{n}\sum_{i\in\Is_\pi}\frac{1}{W_\pi} w_\pi(X_i,A_i)\1\{Y_i\leq \cdot\} - G(\cdot\mid X_i,\pi) \right\|_\infty, $$
after which the proof can be completed in the same manner as that of \Cref{thm-uniBound-IS}. Notice that
\begin{align*}
& \left\| \frac{1}{n}\sum_{i\in\Is_\pi}\frac{1}{W_\pi} w_\pi(X_i,A_i)\1\{Y_i\leq \cdot\} - G(\cdot\mid X_i,\pi) \right\|_\infty \\
& \leq  \left\| \frac{1}{n}\sum_{i\in\Is_\pi} \frac{1}{W_\pi}w_\pi(X_i,A_i)\1\{Y_i\leq \cdot\} - \frac{1}{W_\pi}G(\cdot\mid X_i,\pi) \right\|_\infty  
+  \left\| \frac{1}{n}\sum_{i\in\Is_\pi} \frac{1}{W_\pi}G(\cdot\mid X_i,\pi) - G(\cdot\mid X_i,\pi) \right\|_\infty \\
& \leq \frac{1}{W_\pi} \left\| \frac{1}{n}\sum_{i\in\Is_\pi} w_\pi(X_i,A_i)\1\{Y_i\leq \cdot\} - G(\cdot\mid X_i,\pi) \right\|_\infty + \frac{|1-W_\pi|}{W_\pi} \frac{|\Is_\pi|}{n}.
\end{align*}
Conditioning on $\{X_i\}_{i=1}^n$, as each $w_\pi(X_i,A_i)$ are independent and $\Eb[w_\pi(X_i,A_i)\mid \{X_i\}_{i=1}^n]=1$, Hoeffding's inequality ( \Cref{lem-Hoeffding}) implies
$$
\Pb \left( |W_\pi - 1 | \geq \tilde{\eta}_{\pi} \mid \{X_i\}_{i=1}^n \right) \leq \delta/4,
$$
where $\tilde{\eta}_{\pi} \coloneq \sqrt{ \frac{1}{2|\Is_\pi|} \log\frac{8}{\delta} } \sqrt{ \frac{1}{|\Is_\pi|} \sum_{i\in\Is_\pi} \frac{1}{\beta(X_i,\pi)^2}  }$. Note that $\tilde{\eta}_{\pi}$ is fixed given $\{X_i\}_{i=1}^n$. If $\tilde{\eta}_{\pi}<1$, we have $\frac{1}{W_\pi} < \frac{1}{1-\tilde{\eta}_{\pi}}$ on the event $\{|W_\pi - 1|< \tilde{\eta}_{\pi}\}$. Let $\tilde{\xi}_{\pi}\coloneq \frac{ \sigma_\pi }{ 1-\tilde{\eta}_{\pi} }\sqrt{\frac{8}{n}\log\frac{16}{\delta}} +  \frac{|\Is_\pi|}{n}\frac{\tilde{\eta}_{\pi}}{1-\tilde{\eta}_{\pi}}$. Applying \Cref{lem-ope-1} then gives
\begin{align*}
&	\Pb \left( \left\| \frac{1}{n}\sum_{i\in\Is_\pi}\frac{1}{W_\pi} w_\pi(X_i,A_i)\1\{Y_i\leq \cdot\} - G(\cdot\mid X_i,\pi) \right\|_\infty \geq \tilde{\xi}_{\pi} \;\Big|\; \{X_i\}_{i=1}^n \right) \\
& \leq \Pb \left(  \left\| \frac{1}{n}\sum_{i\in\Is_\pi} w_\pi(X_i,A_i)\1\{Y_i\leq \cdot\} - G(\cdot\mid X_i,\pi) \right\|_\infty \geq \sigma_\pi \sqrt{\frac{8}{n}\log\frac{16}{\delta}} \;\Big|\; \{X_i\}_{i=1}^n \right) \\ 
& \qquad\qquad\qquad + \Pb \left( |W_\pi - 1 | \geq \tilde{\eta}_{\pi} \mid \{X_i\}_{i=1}^n \right) \\
& \leq \delta/2.
\end{align*}
Thus, taking $\tilde{\alpha}_{\pi} \coloneq \1\{ \tilde{\eta}_{\pi} \geq 1 \} +  \1\{ \tilde{\eta}_{\pi} < 1 \}\cdot \tilde{\xi}_{\pi}$ yields a valid upper confidence bound for any fixed $\pi\in\Pi$:
$$
\Pb \left( \left\| \frac{1}{n}\sum_{i\in\Is_\pi}\frac{1}{W_\pi} w_\pi(X_i,A_i)\1\{Y_i\leq \cdot\} - G(\cdot\mid X_i,\pi) \right\|_\infty \leq \tilde{\alpha}_{\pi} \;\Big|\; \{X_i\}_{i=1}^n \right) \geq 1-\delta/2.
$$

The proof proceeds as in \Cref{thm-uniBound-IS}. Since different $\pi$ affect the above event only via $\pi(X_1),\ldots,\pi(X_n)$, \Cref{lem-Natarajan} and a union bound induce
\begin{align*}
\Pb \left( \sup_{\pi\in\Pi} \left\| \frac{1}{n}\sum_{i\in\Is_\pi}\frac{1}{W_\pi} w_\pi(X_i,A_i)\1\{Y_i\leq \cdot\} - G(\cdot\mid X_i,\pi) \right\|_\infty - \tilde{\alpha}_{\pi} > 0 \;\Big|\; \{X_i\}_{i=1}^n \right)  \leq n^{d_{\Pi}} K^{2d_{\Pi}}\delta/2.
\end{align*}
After adding the $r_\pi$ term, replacing $\delta$ by $\delta/n^{d_{\Pi}} K^{2d_{\Pi}}$ and marginalizing over $\{X_i\}_{i=1}^n$, it holds that
$$
\Pb \left( \sup_{\pi\in\Pi} \left\| \frac{1}{n}\sum_{i=1}^{n}\hat{H}(\cdot\mid X_i,\pi) - G(\cdot\mid X_i,\pi) \right\|_\infty - \alpha_{\pi} > 0 \right) \leq \frac{\delta}{2},
$$
where $\alpha_{\pi} \coloneq \1\{ \eta_{\pi} \geq 1 \} +  \1\{ \eta_{\pi} < 1 \} \cdot \xi_{\pi}'$, with $\xi_{\pi}' \coloneq \frac{ \sigma_\pi }{ 1-\eta_{\pi} }\sqrt{\frac{8}{n}[\log\frac{16}{\delta}+ d_{\Pi}  \log(nK^2) ]} +  \frac{|\Is_\pi|}{n}\frac{\eta_{\pi}}{1-\eta_{\pi}} + r_\pi$ and $\eta_{\pi} \coloneq \sqrt{ \frac{1}{2|\Is_\pi|} [\log\frac{8}{\delta} +  d_{\Pi}  \log(nK^2)] } \cdot \sqrt{ \frac{1}{|\Is_\pi|} \sum_{i\in\Is_\pi} \frac{1}{\beta(X_i,\pi)^2} }$

Finally, using a similar error decomposition and \Cref{lem-uniBound-1}, the proof is complete.

\end{proof}

%%%%%%%%%%%%%%%%%%%%%%%%%%%%%%%%%%%%%%%%%%%%%%%%%%%%%%%%%%%%%%%%%%%%%%%%%%%%%%%%%%%%%%%%

%%%%%%%%%%%%%%%%%%%%%%%%%%%%%%%%%%%%%%%%%%%%%%%%%%%%%%%%%%%%%%%%%%%%%%%%%%%%%%%%%%%%%%%%
\subsection{Proof of \Cref{thm-uniBound-DR}}\label{pf-thm-uniBound-DR}
\begin{proof}
We similarly make the following decomposition:
\begin{align*}
& \sup_{\pi\in\Pi} \left\| \Fdr^\pi - F^\pi \right\|_\infty \\
& \leq \sup_{\pi\in\Pi} \left\| \frac{1}{n}\sum_{i=1}^{n}\hat{\Gamma}(\cdot\mid X_i,\pi) - G(\cdot\mid X_i,\pi) \right\|_\infty  + \sup_{\pi\in\Pi} \left\|\frac{1}{n}\sum_{i=1}^{n} G(\cdot\mid X_i,\pi) - F^\pi(\cdot) \right\|_\infty.
\end{align*}
Note that the second term has already been bounded in \Cref{lem-uniBound-1}. For the first term, we know that for any fixed $\pi\in\Pi$,
\begin{align*}
& \left\| \frac{1}{n}\sum_{i=1}^{n}\hat{\Gamma}(\cdot\mid X_i,\pi) - G(\cdot\mid X_i,\pi) \right\|_\infty \\
& \leq \left\| \frac{1}{n}\sum_{i\in\Is_\pi}\hat{\Gamma}(\cdot\mid X_i,\pi) - G(\cdot\mid X_i,\pi) \right\|_\infty + \left\| \frac{1}{n}\sum_{i\notin\Is_\pi} \hat{\Gamma}(\cdot\mid X_i,\pi) - G(\cdot\mid X_i,\pi) \right\|_\infty \\
& = \left\| \frac{1}{n}\sum_{i\in\Is_\pi} w_\pi(X_i,A_i) \Big( \1 \{Y_i \leq \cdot\} - \bar{G}(\cdot\mid X_i,A_i) \Big) + \bar{G}(\cdot\mid X_i,\pi) - G(\cdot\mid X_i,\pi) \right\|_\infty \\
& \qquad\qquad\qquad + \left\| \frac{1}{n}\sum_{i\notin\Is_\pi} \bar{G}(\cdot\mid X_i,\pi) - G(\cdot\mid X_i,\pi) \right\|_\infty \\
& \leq \left\| \frac{1}{n}\sum_{i\in\Is_\pi} w_\pi(X_i,A_i)  \1 \{Y_i \leq \cdot\} - G(\cdot\mid X_i,\pi) \right\|_\infty + \left\| \frac{1}{n}\sum_{i\in\Is_\pi} w_\pi(X_i,A_i)  \bar{G}(\cdot\mid X_i,A_i) - \bar{G}(\cdot\mid X_i,\pi) \right\|_\infty + \bar{r}_\pi.
\end{align*}
By \Cref{lem-ope-1} and \Cref{lem-uniBound-DR-1}, which indicate that the first two terms above actually admit the same upper confidence bound, we obtain that for any fixed $\pi\in\Pi$,
\begin{align*}
\Pb\left(  \left\| \frac{1}{n}\sum_{i=1}^{n}\hat{\Gamma}(\cdot\mid X_i,\pi) - G(\cdot\mid X_i,\pi) \right\|_\infty  \geq  2\sigma_\pi \sqrt{\frac{8}{n}\log\frac{8}{\delta}} + \bar{r}_\pi \;\Biggl|\; \{X_i\}_{i=1}^n \right) \leq \delta.
\end{align*}
A similar union bound argument then gives
\begin{align*}
\Pb\left( \sup_{\pi\in\Pi} \left\| \frac{1}{n}\sum_{i=1}^{n}\hat{\Gamma}(\cdot\mid X_i,\pi) - G(\cdot\mid X_i,\pi) \right\|_\infty  -  2\sigma_\pi \sqrt{\frac{8}{n}\log\frac{8}{\delta}} - \bar{r}_\pi \geq 0 \;\Biggl|\; \{X_i\}_{i=1}^n \right)  \leq \delta  n^{d_{\Pi}} K^{2d_{\Pi}}.
\end{align*}
Marginalizing over $\{X_i\}_{i=1}^n$, we thus have
\begin{align*}
\Pb \Biggl(  \sup_{\pi\in\Pi} \left\| \frac{1}{n}\sum_{i=1}^{n}\hat{\Gamma}(\cdot\mid X_i,\pi) - G(\cdot\mid X_i,\pi) \right\|_\infty  - \bar{r}_\pi  -  2\sigma_\pi \sqrt{\frac{8}{n} [ \log\frac{16}{\delta}+ d_{\Pi}  \log(nK^2) ] }  \geq 0 \Biggl) \leq \frac{\delta}{2}.
\end{align*}

Finally, together with \Cref{lem-uniBound-1}, it holds with probability at least $1-\delta$ that
$$
\sup_{\pi\in\Pi} \left\| \Fdr^\pi - F^\pi \right\|_\infty -  2(\sigma_\pi + 1) \sqrt{\frac{8}{n} [ \log\frac{20}{\delta}+ d_{\Pi}  \log(nK^2) ] } - \bar{r}_\pi \leq 0.
$$
As \Cref{lem-monotone-trans} implies that $\left\| \Fdrc^\pi - F^\pi \right\|_\infty  \leq \left\| \Fdr^\pi - F^\pi \right\|_\infty$ always holds, the proof is complete.

\end{proof}

%%%%%%%%%%%%%%%%%%%%%%%%%%%%%%%%%%%%%%%%%%%%

\begin{lemma}\label{lem-uniBound-DR-1}
For any fixed $\pi\in\Pi$ and $\delta\in(0,1)$, it holds that
$$
\Pb\left( \left\| \frac{1}{n}\sum_{i\in\Is_\pi}w_\pi(X_i,A_i) \bar{G}(\cdot\mid X_i,A_i) - \bar{G}(\cdot\mid X_i,\pi) \right\|_\infty \geq \sigma_\pi \sqrt{\frac{8}{n}\log\frac{8}{\delta}}  \;\Big|\; \{X_i\}_{i=1}^n \right) \leq \frac{\delta}{2}.
$$
\end{lemma}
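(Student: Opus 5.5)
The plan is to copy the proof of \Cref{lem-ope-1} almost line by line. The indicator $\1\{Y_i\le t\}$ is replaced by the model CDF $\bar G(t\mid X_i,A_i)$. The only new ingredient is a monotone-step approximation of $\bar G(\cdot\mid x,a)$ in $t$, of the kind used in \Cref{lem-ope-2}.

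\emph{Conditional centering and symmetrization.} Conditioning on $\{X_i\}_{i=1}^n$, the set $\Is_\pi$ is fixed. For $i\in\Is_\pi$ we have
\[
\Eb\bigl[w_\pi(X_i,A_i)\bar G(t\mid X_i,A_i)\mid X_i\bigr]=\bar G(t\mid X_i,\pi),
\]
because $\bar G$ is a deterministic model (assumed fitted independently of $\mathcal D$, or fixed). So each summand is conditionally centered, and the summands are independent across $i$. Introducing an independent copy $A_i'$ and Rademacher signs $\varepsilon_i$ exactly as in Step I of \Cref{lem-ope-1}, Jensen's inequality and exchangeability give, for every $\lambda>0$,
\[
\Eb\Bigl[e^{\lambda\|\cdot\|_\infty}\,\Big|\,\{X_i\}\Bigr]\le \Eb\Bigl[\exp\Bigl(\tfrac{2\lambda}{n}\Bigl\|\sum_{i\in\Is_\pi}\varepsilon_i w_\pi(X_i,A_i)\bar G(\cdot\mid X_i,A_i)\Bigr\|_\infty\Bigr)\,\Big|\,\{X_i\}\Bigr].
\]

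\emph{Reduction to indicators.} Fix a realization $\{(x_i,a_i)\}$. By \Cref{lem-stepFunc}, each $\bar G(\cdot\mid x_i,a_i)$ is uniformly within $1/(2m)$ of a step function $\frac1m\sum_{j=1}^m\1\{s^i_j\le\cdot\}$. Substituting, the approximation error contributes at most $\frac{2\lambda}{n}\cdot\frac{1}{2m}\sum_i w_\pi(x_i,a_i)$, which vanishes as $m\to\infty$. By convexity of $\exp$, the remaining term is bounded by the average over $j$ of
\[
\Eb_\varepsilon\exp\Bigl(\tfrac{2\lambda}{n}\sup_t\Bigl|\sum_i\varepsilon_i w_\pi(x_i,a_i)\1\{s^i_j\le t\}\Bigr|\Bigr).
\]
For each fixed $j$, ordering the $s^i_j$ turns the supremum into a maximum of partial sums. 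Step II of \Cref{lem-ope-1}, via \Cref{lem-maxIneq} and sub-Gaussianity of $\varepsilon_i$, then bounds this by
\[
4\exp\Bigl(\tfrac{2\lambda^2}{n^2}\sum_{i\in\Is_\pi}w_\pi(x_i,a_i)^2\Bigr).
\]

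\emph{Conclusion and main obstacle.} Using $w_\pi\le 1/\beta(X_i,\pi)$, this is at most $4\exp(2\lambda^2\sigma_\pi^2/n)$. The Chernoff step with $\epsilon=\sigma_\pi\sqrt{(8/n)\log(8/\delta)}$ then gives a tail probability of $\delta/2$, as claimed. The main obstacle is the approximation step. Unlike the indicator case, the weighted error $\sum_i w_i/m$ must be controlled, but this is harmless because $\Is_\pi$ is finite and the weights are bounded given the $X_i$. One also needs $\bar G$ to be a genuine CDF in $t$ for each $(x,a)$, so that \Cref{lem-stepFunc} applies; I would state this as the standing assumption on the model.
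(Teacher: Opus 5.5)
Your proposal is correct and follows the paper's proof essentially step for step. The shared route is conditional symmetrization with an independent copy $A_i'$, then the step-function approximation of $\bar G(\cdot\mid x_i,a_i)$ via \Cref{lem-stepFunc} (your vanishing error factor $\frac{\lambda}{nm}\sum_i w_\pi(x_i,a_i)$ is dominated by the paper's $\frac{\lambda}{m}\max_i w_\pi(x_i,a_i)$), then the bound of \Cref{pf-eq-a2} for each $j$, and finally a Chernoff step using $w_\pi\le 1/\beta(X_i,\pi)$. Your explicit standing assumptions, that $\bar G$ is fixed independently of $\Ds$ and is a genuine CDF in $t$, are exactly what the paper uses implicitly.
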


\begin{proof}
Note that $\bar{G}(t\mid X_i,\pi) = \Eb[ w(X_i, A_i'; \pi) \bar{G}(t\mid X_i,A_i') \mid \{X_i, A_i\}_{i=1}^n ]$ for each $i\in\Is_\pi$. For any $\lambda>0$, analogous to the proof of \Cref{lem-ope-1}, we have the following symmetrization.
\begin{align*}
&\Eb \left[ \exp\left( \lambda \left\| \frac{1}{n}\sum_{i\in\Is_\pi}w_\pi(X_i,A_i) \bar{G}(\cdot\mid X_i,A_i) - \bar{G}(\cdot\mid X_i,\pi) \right\|_\infty \right) \;\Big|\; \{X_i\}_{i=1}^n \right] \notag\\
&= \Eb \left[ \exp\left( \lambda \left\| \Eb \left[\frac{1}{n}\sum_{i\in\Is_\pi} w_\pi(X_i,A_i) \bar{G}(\cdot\mid X_i,A_i) -  w_\pi(X_i,A_i') \bar{G}(\cdot\mid X_i,A_i') \;\Big|\; \{X_i,A_i\}_{i=1}^n \right] \right\|_\infty \right) \;\Big|\; \{X_i\}_{i=1}^n \right]  \notag\\
&\leq \Eb \left[ \exp\left( \lambda  \Eb \left[ \left\| \frac{1}{n}\sum_{i\in\Is_\pi} w_\pi(X_i,A_i) \bar{G}(\cdot\mid X_i,A_i) -  w_\pi(X_i,A_i') \bar{G}(\cdot\mid X_i,A_i') \right\|_\infty \;\Big|\; \{X_i,A_i\}_{i=1}^n \right]  \right) \;\Big|\; \{X_i\}_{i=1}^n \right]  \notag\\
&\leq \Eb \left[ \exp\left( \lambda   \left\| \frac{1}{n}\sum_{i\in\Is_\pi} w_\pi(X_i,A_i) \bar{G}(\cdot\mid X_i,A_i) -  w_\pi(X_i,A_i') \bar{G}(\cdot\mid X_i,A_i') \right\|_\infty   \right)  \;\Big|\; \{X_i\}_{i=1}^n \right]. \\
&= \Eb \left[ \exp\left( \lambda   \left\| \frac{1}{n}\sum_{i\in\Is_\pi} \varepsilon_i ( w_\pi(X_i,A_i) \bar{G}(\cdot\mid X_i,A_i) -  w_\pi(X_i,A_i') \bar{G}(\cdot\mid X_i,A_i') ) \right\|_\infty   \right)  \;\Big|\; \{X_i\}_{i=1}^n \right]. \\
&\leq \Eb \left[ \exp\left( 2\lambda   \left\| \frac{1}{n}\sum_{i\in\Is_\pi} \varepsilon_i w_\pi(X_i,A_i) \bar{G}(\cdot\mid X_i,A_i)  \right\|_\infty   \right)  \;\Big|\; \{X_i\}_{i=1}^n \right].
\end{align*}

We next approximate $ \bar{G}(\cdot\mid X_i,A_i) $ by step functions as we did in \Cref{lem-ope-2}. Conditional on any realization $\{x_i, a_i\}_{i=1}^n$ of $\{X_i, A_i\}_{i=1}^n$, we know from \Cref{lem-stepFunc} that there exists $\{(s_1^i,\ldots,s_m^i)\}_{i\in\Is_\pi}$ and $ \bar{G}_{m,i}(t) = \frac{1}{m}\sum_{j=1}^m\1\{ s_j^i \leq t\} $ such that
$
\| \bar{G}_{m,i}(\cdot) -  \bar{G}(\cdot\mid x_i,a_i)  \|_\infty \leq \frac{1}{2m}. 
$
Then
\begin{align*}
& \Eb_\varepsilon  \exp\left( 2\lambda   \left\| \frac{1}{n}\sum_{i\in\Is_\pi} \varepsilon_i w_\pi(x_i,a_i) \bar{G}(\cdot\mid x_i,a_i)  \right\|_\infty   \right)  \\
%& \leq \Eb_\varepsilon \exp \left( 2\lambda   \left\| \frac{1}{n}\sum_{i=1}^{n} \varepsilon_i w_\pi(x_i,a_i) \bar{G}^m(t; x_i, \pi) \right\|_\infty + 2\lambda\left\| \frac{1}{n}\sum_{i=1}^{n} \varepsilon_i  w_\pi(x_i,a_i) ( \bar{G}^m(t,x_i,\pi) -  \bar{G}(t;x_i,\pi) )\right\|_\infty \right)\\
& \leq \exp(\frac{\lambda}{m}\max_{i\in\Is_\pi}  w_\pi(x_i,a_i) ) \Eb_\varepsilon \exp \left( 2\lambda   \left\| \frac{1}{n}\sum_{i\in\Is_\pi} \varepsilon_i w_\pi(x_i,a_i) \bar{G}_{m,i}(\cdot) \right\|_\infty \right) \\
& = \exp(\frac{\lambda}{m}\max_{i\in\Is_\pi}  w_\pi(x_i,a_i) )  \Eb_\varepsilon \exp \left( 2\lambda   \left\|\frac{1}{m} \frac{1}{n} \sum_{j=1}^m \sum_{i\in\Is_\pi} \varepsilon_i w_\pi(x_i,a_i) \1\{ s_j^i \leq \cdot\} \right\|_\infty \right) \\
%& \leq \exp(\frac{\lambda}{m}\max_{i\in\Is_\pi}  w_\pi(x_i,a_i) ) \Eb_\varepsilon \exp \left( \frac{1}{m} \sum_{j=1}^m \frac{2\lambda}{n}  \left\|  \sum_{i\in\Is_\pi} \varepsilon_i w_\pi(x_i,a_i) \1\{ s_j^i \leq t\} \right\|_\infty \right) \\
& \leq \exp(\frac{\lambda}{m}\max_{i\in\Is_\pi}  w_\pi(x_i,a_i) ) \frac{1}{m} \sum_{j=1}^m \Eb_\varepsilon \exp \left( \frac{2\lambda}{n}  \left\|  \sum_{i\in\Is_\pi} \varepsilon_i w_\pi(x_i,a_i) \1\{ s_j^i \leq \cdot\} \right\|_\infty \right).
\end{align*}
For each $j\in[m]$, analogously to \Cref{pf-eq-a2}, we can obtain that
$$
\Eb_\varepsilon \exp \left( \frac{2\lambda}{n}  \left\|  \sum_{i\in\Is_\pi} \varepsilon_i w_\pi(x_i,a_i) \1\{ s_j^i \leq t\} \right\|_\infty \right) \leq 4 \exp\left( \frac{2\lambda^2}{n^2}\sum_{i\in\Is_\pi}w_\pi(x_i,a_i)^2 \right).
$$
Taking $m\to\infty$ and marginalizing over $\{A_i\}_{i=1}^n$ yield
$$
\Eb \left[ \exp\left( \lambda \left\| \frac{1}{n}\sum_{i\in\Is_\pi}w_\pi(X_i,A_i) \bar{G}(\cdot\mid X_i,A_i) - \bar{G}(\cdot\mid X_i,\pi) \right\|_\infty \right) \;\Big|\; \{X_i\}_{i=1}^n \right] \leq 4 \exp\left( \frac{2\lambda^2}{n}\sigma_\pi^2 \right).
$$
A final Chernoff argument completes the proof.

\end{proof}

%%%%%%%%%%%%%%%%%%%%%%%%%%%%%%%%%%%%%%%%%%%%%%%%%%%%%%%%%%%%%%%%%%%%%%%%%%%%%%%%%%%%%%%%

%%%%%%%%%%%%%%%%%%%%%%%%%%%%%%%%%%%%%%%%%%%%%%%%%%%%%%%%%%%%%%%%%%%%%%%%%%%%%%%%%%%%%%%%
\subsection{Proof of \Cref{cor-bound-dr}} \label{pf-cor-bound-dr}
\begin{proof}
Note that, by using the Bernstein-style inequality in \Cref{lem-ope-1}, one can similarly establish the conclusion of \Cref{thm-uniBound-DR} with
$$
R(\pi)\coloneq 2(\sigma_\pi'+1) \sqrt{\frac{8}{n} [ \log\frac{20}{\delta} +  d_{\Pi} \log(nK^2) ] } + \frac{2}{3n\beta_{\min}}[ \log\frac{16}{\delta} +  d_{\Pi} \log(nK^2) ] + \bar{r}_\pi.
·$$
Under the assumption that $\beta(x,\pi^*)\geq \beta_{\inf}$ for $\Pb_X$-almost all $x$, we have $\sigma_{\pi^*}' = \sqrt{\frac{1}{n}\sum_{i\in[n]}\frac{1}{\beta(X_i, \pi^*)}} \leq \frac{1}{\sqrt{\beta_{\inf}}}$ and $\bar{r}_{\pi^*}=0$. Therefore,
\begin{align*}
R(\pi^*) & \leq 2 ( \frac{1}{\sqrt{\beta_{\inf}}} + 1) \sqrt{\frac{8}{n} [ \log\frac{20}{\delta} +  d_{\Pi} \log(nK^2) ] } + \frac{2}{3n\beta_{\inf}}[ \log\frac{16}{\delta} +  d_{\Pi} \log(nK^2) ] \\
& \leq (8\sqrt{2} + \frac{2}{3} \sqrt{ \frac{\log\frac{20}{\delta} +  d_{\Pi} \log(nK^2)}{n\beta_{\inf}} } ) \sqrt{ \frac{\log\frac{20}{\delta} +  d_{\Pi} \log(nK^2)}{n\beta_{\inf}} } \\
& \leq (16\sqrt{2} + \frac{4}{3}\sqrt{c_0}) \sqrt{ \frac{\log\frac{20}{\delta} \cdot d_{\Pi} \log(nK^2)}{n\beta_{\inf}} }.
\end{align*}
Applying \Cref{pro-pessi} completes the proof.
\end{proof}
%%%%%%%%%%%%%%%%%%%%%%%%%%%%%%%%%%%%%%%%%%%%%%%%%%%%%%%%%%%%%%%%%%%%%%%%%%%%%%%%%%%%%%%%

%%%%%%%%%%%%%%%%%%%%%%%%%%%%%%%%%%%%%%%%%%%%%%%%%%%%%%%%%%%%%%%%%%%%%%%%%%%%%%%%%%%%%%%%
\subsection{Proof of \Cref{thm-minimax}}\label{pf-thm-minimax}

This proof builds on the argument of Theorem 1 in \citet{zhan2024policy}, albeit with several adaptations.% to our setting.

\begin{proof}
Let $S=\{s_1,\ldots,s_d\}\subset\Xs$ with $d=d_\Pi$ be the set shattered by $\Pi$. By \Cref{def-Natarajan-dim}, there exists two functions $f_1, f_{-1}:\Ss\mapsto[K]$ such that $f_1(s_i) \neq f_{-1}(s_i)$ for all $i\in[d]$, and moreover, for any vector $\theta=(\theta_1, \ldots,\theta_d)\in\Theta\coloneq \{\pm1\}^d$, there exists a policy $\pi_\theta \in\Pi$ satisfying $\pi_\theta(s_i) = f_{\theta_i}(s_i), \forall i\in[d]$.

We now construct a collection of $2^d$ instances in $\Vs$. For each $\theta\in\Theta$, define $v_\theta$ as the bandit environment whose contextual distribution $\Pb_X$ is uniform over $S$, and the conditional reward is given by
$$
Y|X=s_i,A=a \sim 
\begin{cases}
\Bern(1/2+\theta_i\delta),	& \text{if } a=f_1(s_i), \\
\Bern(1/2),		& \text{if } a=f_{-1}(s_i), \\
\Bern(0),  & \text{otherwise},
\end{cases}
$$ 
where $\delta\in(0,1/4)$ is a constant to be specified later, and $\Bern(\cdot)$ denotes the Bernoulli distribution. We next verify that $\pi_\theta$ is an optimal policy under the environment $v_\theta$. For any policy $\pi\in\Pi$, observe first that $F^{\pi_\theta,v_\theta}(t) = F^{\pi,v_\theta}(t) = 0$ or $1$ if $t<0$ or $t\geq1$. Fix $t\in[0,1)$. For indices $i$ satisfying $\theta_i=1$, we have
$$
\Pb(Y\leq t \mid s_i, \pi_\theta) = 1/2-\delta,\qquad \Pb(Y\leq t \mid s_i, \pi) =  \begin{cases}
1/2 - \delta,	& \text{if } \pi(s_i) = \pi_\theta(s_i) = f_1(s_i), \\
1/2,	& \text{if } \pi(s_i) = f_{-1}(s_i), \\
1.	& \text{otherwise}.
\end{cases}
$$
Similarly, for $i$ with $\theta_i=-1$,
$$
\Pb(Y\leq t \mid s_i, \pi_\theta) = 1/2,\qquad \Pb(Y\leq t \mid s_i, \pi) =  \begin{cases}
1/2,	& \text{if } \pi(s_i) = \pi_\theta(s_i) = f_{-1}(s_i), \\
1/2 + \delta,	& \text{if } \pi(s_i) = f_1(s_i), \\
1.	& \text{otherwise}.
\end{cases}
$$
Therefore,
\begin{equation}\label{pf-eq-a5}
F^{\pi,v_\theta}(t) - F^{\pi_\theta,v_\theta}(t) = \frac{1}{d}\sum_{i=1}^d [\Pb(Y\leq t \mid s_i, \pi) - \Pb(Y\leq t \mid s_i, \pi_\theta)] \geq \frac{1}{d}\sum_{i=1}^d \delta\cdot\1\{ \pi(s_i)\neq\pi_\theta(s_i) \} \geq 0.
\end{equation}
Consequently, $F^{\pi_\theta,v_\theta}(t) \leq F^{\pi,v_\theta}(t)$ holds for all $t\in\Rb$. By the monotonicity assumption on $\rho$, this implies $\rho(F^{\pi_\theta,v_\theta})\geq\rho(F^{\pi,v_\theta})$, thereby establishing the optimality of $\pi_\theta$ under $v_\theta$. In addition, define a fixed behavior policy $\beta$ by setting
$$
\beta(s_i,a)=
\begin{cases}
\beta_{\inf}, & \text{if } a=f_1(s_i) \text{ or } f_{-1}(s_i), \\
(1-2\beta_{\inf})/(K-2), &\text{otherwise}.
\end{cases}
$$ 
By construction, we know that $(v_\theta,\beta)\in\Vs$ for every $\theta\in\Theta$. 

Let $\Pb_{\theta}$ and $\Eb_{\theta}$ denote the probability measure and expectation with respect to the data distribution induced by $(v_\theta,\beta)$. Then, for any (data-dependent) policy $\hat{\pi}$, it follows from \Cref{pf-eq-a5} that	
\begin{align}\label{pf-eq-a6}
\sup_{(v,\beta)\in\Vs} \Eb\,W_1( F^{\pi^*_v,v} , F^{\hat{\pi},v} ) &\geq \max_{\theta\in\Theta} \Eb_\theta W_1(F^{\pi_\theta,v_\theta} , F^{\hat{\pi},v_\theta} ) \notag \\
& = \max_{\theta\in\Theta} \Eb_\theta \int_{0}^{1}| F^{\pi_\theta,v_\theta}(t) - F^{\hat{\pi},v_\theta}(t) |\drm t \notag \\
&\geq \frac{1}{2^d} \sum_{\theta\in\Theta} \frac{1}{d}\sum_{i=1}^d \delta \cdot \Eb_\theta \1\{ \hat{\pi}(s_i)\neq\pi_\theta(s_i) \},
\end{align}
where the equality follows from an equivalent characterization of the Wasserstein distance between two CDFs (see, e.g., \citealt{prashanth2022a}, Lemma 2). For any $\theta$ and $i\in[d]$, let $\theta^{-i}\coloneq(\theta_1,\ldots,-\theta_i,\ldots,\theta_d)$ denotes the vector with the $i$-th coordinate flipped. Then,
\begin{align*}
\Cref{pf-eq-a6} & \geq \frac{\delta}{2^d \cdot d} \sum_{i=1}^d \sum_{\theta\in\Theta} \Eb_\theta \1\{ \hat{\pi}(s_i)\neq\pi_\theta(s_i) \} \\
& = \frac{\delta}{2^d \cdot d} \sum_{i=1}^d \sum_{\theta\in\Theta:\theta_i=1} [ \Eb_\theta \1\{ \hat{\pi}(s_i)\neq\pi_\theta(s_i) \} + \Eb_{\theta^{-i}} \1\{ \hat{\pi}(s_i)\neq\pi_{\theta^{-i}}(s_i) \} ] \\
& = \frac{\delta}{2^d \cdot d} \sum_{i=1}^d \sum_{\theta\in\Theta:\theta_i=1} [ \Eb_\theta \1\{ \hat{\pi}(s_i)\neq f_1(s_i) \} + \Eb_{\theta^{-i}} \1\{ \hat{\pi}(s_i)\neq f_{-1}(s_i) \} ] \\
& \geq \frac{\delta}{2^d \cdot d} \sum_{i=1}^d \sum_{\theta\in\Theta:\theta_i=1} [ \Eb_\theta \1\{ \hat{\pi}(s_i)\neq f_1(s_i) \} + \Eb_{\theta^{-i}} \1\{ \hat{\pi}(s_i) = f_1(s_i) \} ] \\
& \geq  \frac{\delta}{2^d \cdot d} \sum_{i=1}^d \sum_{\theta\in\Theta:\theta_i=1} \frac{1}{2}\exp\left(-\KL(\Pb_{\theta},\Pb_{\theta^{-i}})\right),
\end{align*} 
where the last inequality follows from the Bretagnolle–Huber inequality (see, e.g., \citealt{lattimore2020bandit}, Theorem 14.2). Since $\theta$ and $\theta^{-i}$ differ only in the $i$-th coordinate, the likelihood ratio satisfies
\begin{align*}
\log \frac{\Pb_{\theta}}{\Pb_{\theta^{-i}}}(x_1,\ldots,y_n) &= \sum_{j=1}^n \log \frac{\Pb_{\theta}}{\Pb_{\theta^{-i}}} (y_j\mid x_j,a_j) \\
& = \sum_{j=1}^n \1\{x_j=s_i,a_j=f_1(s_i)\} [\1\{y_j=1\}\log\frac{1/2+\delta}{1/2-\delta} + \1\{y_j=0\}\log\frac{1/2-\delta}{1/2+\delta}].
\end{align*}
Taking expectations yields
\begin{align*}
\KL(\Pb_{\theta},\Pb_{\theta^{-i}}) & = \Eb_\theta \log \frac{\Pb_{\theta}}{\Pb_{\theta^{-i}}}(X_1,\ldots,Y_n) \\
& = \sum_{j=1}^n \frac{\beta_{\inf}}{d}[(1/2+\delta)\log\frac{1/2+\delta}{1/2-\delta} + (1/2-\delta)\log\frac{1/2-\delta}{1/2+\delta}] \\
& = \sum_{j=1}^n \frac{2\delta\beta_{\inf}}{d} [\log(1+2\delta) - \log(1-2\delta)]  \leq \frac{12n\delta^2\beta_{\inf}}{d},
\end{align*}
where the last inequality follows from $\log(1+x)\leq x$ and $\log(1-x)\geq-2x$ for $x\in(0,1/2)$. Substituting this bound back into \Cref{pf-eq-a6}, we obtain
$$
\Cref{pf-eq-a6} \geq \frac{\delta}{2^d \cdot d} \sum_{i=1}^d \sum_{\theta\in\Theta:\theta_i=1} \frac{1}{2} \exp(-12n\delta^2\beta_{\inf}/d) = \frac{\delta}{4} \exp(-12n\delta^2\beta_{\inf}/d).
$$
Choosing $\delta=\sqrt{d/(24n\beta_{\inf})}$ completes the proof.
\end{proof}

%%%%%%%%%%%%%%%%%%%%%%%%%%%%%%%%%%%%%%%%%%%%%%%%%%%%%%%%%%%%%%%%%%%%%%%%%%%%%%%%%%%%%%%%

%%%%%%%%%%%%%%%%%%%%%%%%%%%%%%%%%%%%%%%%%%%%%%%%%%%%%%%%%%%%%%%%%%%%%%%%%%%%%%%%%%%%%%%%
%%%%%%%%%%%%%%%%%%%%%%%%%%%%%%%%%%%%%%%%%%%%%%%%%%%%%%%%%%%%%%%%%%%%%%%%%%%%%%%%%%%%%%%%
\section{Auxiliary Lemmas}

\begin{lemma}\label{lem-maxIneq}
Let $\varepsilon_1,\ldots,\varepsilon_n$ be independent Rademacher variables with any $n\geq1$. For any $(w_1,\ldots,w_n)\in\Rb^n$, it holds
$$
\Eb  \exp\left( \max_{j\in[n]}  \sum_{i=1}^{j} w_i\varepsilon_i  \right) \1\{ \max_{j\in[n]}  \sum_{i=1}^{j} w_i\varepsilon_i \geq 0 \}  \leq  2 \Eb  \exp\left(  \sum_{i=1}^{n} w_i\varepsilon_i  \right) \1\{ \sum_{i=1}^{n} w_i\varepsilon_i  \geq 0 \}.
$$
\end{lemma}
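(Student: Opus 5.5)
The plan is to reduce the claim to Lévy's reflection inequality for partial sums of symmetric independent random variables, and then integrate that tail bound against the function $\varphi(x)\coloneq e^{x}\1\{x\geq 0\}$. Write $S_j\coloneq\sum_{i=1}^j w_i\varepsilon_i$ and $M\coloneq\max_{j\in[n]}S_j$. The claim then reads $\Eb\,\varphi(M)\le 2\,\Eb\,\varphi(S_n)$. Since $\varphi$ is non-decreasing, it suffices to compare the tails $\Pb(M\ge u)$ and $\Pb(S_n\ge u)$ for every $u\ge 0$.

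\emph{Step 1 (Lévy's inequality).} Fix $u\ge 0$ and define the stopping time $\tau\coloneq\min\{j: S_j\ge u\}$, with $\tau=\infty$ if no such $j$ exists. Then $\{M\ge u\}=\bigcup_{j=1}^n\{\tau=j\}$, a disjoint union. The event $\{\tau=j\}$ depends only on $\varepsilon_1,\ldots,\varepsilon_j$, while $S_n-S_j=\sum_{i>j}w_i\varepsilon_i$ is independent of it and symmetric. Because $\Pb(S_n-S_j\ge 0)+\Pb(S_n-S_j\le 0)\ge 1$ and the two terms are equal by symmetry, we get $\Pb(S_n-S_j\ge 0)\ge 1/2$. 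On $\{\tau=j\}\cap\{S_n-S_j\ge 0\}$ we have $S_n\ge S_j\ge u$, so
$$
\Pb(S_n\ge u)\;\ge\;\sum_{j=1}^n \Pb(\tau=j)\,\Pb(S_n-S_j\ge 0)\;\ge\;\tfrac12\,\Pb(M\ge u).
$$
This gives $\Pb(M\ge u)\le 2\Pb(S_n\ge u)$ for all $u\ge 0$, including $u=0$.

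\emph{Step 2 (layer-cake integration).} For any real $x$ we have the identity $\varphi(x)=\1\{x\ge 0\}+\int_0^\infty e^{u}\1\{x\ge u\}\,\drm u$. By Tonelli, for any random variable $Z$,
$$
\Eb\,\varphi(Z)=\Pb(Z\ge 0)+\int_0^\infty e^{u}\,\Pb(Z\ge u)\,\drm u .
$$
Apply this with $Z=M$. Then bound $\Pb(M\ge 0)\le 2\Pb(S_n\ge 0)$ and $\Pb(M\ge u)\le 2\Pb(S_n\ge u)$ using Step 1. Finally, recombine the right-hand side using the same identity with $Z=S_n$. This yields $\Eb\,\varphi(M)\le 2\,\Eb\,\varphi(S_n)$, which is exactly the stated inequality. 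All quantities are finite because the $\varepsilon_i$ take finitely many values.

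The only real content is Step 1. The subtle point there is that the $w_i$ are arbitrary real numbers, possibly of mixed sign and possibly zero. The argument needs only the symmetry of each $w_i\varepsilon_i$ and independence across $i$, both of which hold in general. I therefore do not expect any genuine obstacle, beyond taking care to use the non-strict inequality $S_j\ge u$ in the definition of $\tau$ so that the case $u=0$ is covered.
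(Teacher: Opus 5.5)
Your proof is correct and follows essentially the same route as the paper's. The paper also proves the Lévy-type tail bound $\Pb(\max_j S_j\ge t)\le 2\Pb(S_n\ge t)$ for $t\ge 0$, using the same disjoint first-passage events (your $\{\tau=j\}$) and the fact that the symmetric remainder is nonnegative with probability at least $1/2$. It then integrates this bound via Fubini with the identity $e^x\1\{x\ge 0\}=\1\{x\ge 0\}+\int_0^\infty e^t\1\{t\le x\}\,\drm t$, exactly as in your Step 2.
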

\begin{proof}
We first prove that, for any $t\geq0$,
\begin{equation}\label{pf-eq-c1}
\Pb\left( \max_{j\in[n]}\sum_{i=1}^j w_i\varepsilon_i \geq t \right) \leq 2\Pb\left( \sum_{i=1}^n w_i\varepsilon_i \geq t \right).
\end{equation}
Define disjoint events $E_1 \coloneq \{ w_1\varepsilon_1\geq t \}$ and $E_j \coloneq \left\{ \sum_{i=1}^j w_i\varepsilon_i \geq t, \sum_{i=1}^i w_i\varepsilon_i < t, \forall i<j \right\}$ for $j=2,\ldots,n$. Clearly,
$$
\cup_j\Big(  E_j\cap \Big\{ \sum_{i>j} w_i\varepsilon_i \geq 0 \Big\} \Big) \subset \Big\{  \sum_{i=1}^n w_i\varepsilon_i \geq t \Big\}.
$$
Note that $\Pb\left( \sum_{i>j} w_i\varepsilon_i \geq 0 \right)\geq 1/2$, since $\sum_{i>j} w_i\varepsilon_i$ is centered and symmetric. By the independence, we have
\begin{align*}
\Pb\Big(  \sum_{i=1}^n w_i\varepsilon_i \geq t \Big) \geq& \Pb\Big( \cup_j\Big(  E_j\cap \Big\{ \sum_{i>j} w_i\varepsilon_i \geq 0 \Big\} \Big) \Big\} \Big)\\
=& \sum_{j} \Pb\Big(  E_j\bigcap \Big\{ \sum_{i>j} w_i\varepsilon_i \geq 0 \Big\} \Big)\\
=& \sum_{j}\Pb\left(E_j\right)\Pb\Big(\sum_{i>j} w_i\varepsilon_i \geq 0\Big)\\
\geq& \sum_{j}\frac12\Pb(E_j) =\frac12\Pb\left( \cup_jE_j \right).
\end{align*}
Noting that $	\left\{  \max_{j\in[n]}\sum_{i=1}^j w_i\varepsilon_i \geq t   \right\} \subset \cup_jE_j$, we thus obtain \Cref{pf-eq-c1}. Moreover, by Fubini's theorem, 
\begin{align*}
& \Eb [ \exp\left( \max_j  \sum_{i=1}^{j} w_i\varepsilon_i \right)  \1\{ \max_j  \sum_{i=1}^{j} w_i\varepsilon_i  \geq0\} ] \\
& = \Eb [ (\int_{0}^{\max_j  \sum_{i=1}^{j} w_i\varepsilon_i} e^t \drm t + e^0 )\1\{ \max_j  \sum_{i=1}^{j} w_i\varepsilon_i \geq0\} ]\\
& = \Eb [ \int_{0}^{+\infty} e^t\1\{t\leq \max_j  \sum_{i=1}^{j} w_i\varepsilon_i\} \drm t + \1\{\max_j  \sum_{i=1}^{j} w_i\varepsilon_i\geq0\}  ]\\
& = \Pb(\max_j  \sum_{i=1}^{j} w_i\varepsilon_i\geq0) + \int_{0}^{+\infty} e^t \Pb(\max_j  \sum_{i=1}^{j} w_i\varepsilon_i\geq t) \drm t.\\
& \leq 2 \Pb(  \sum_{i=1}^{n} w_i\varepsilon_i\geq0) + \int_{0}^{+\infty} e^t 2 \Pb(  \sum_{i=1}^{n} w_i\varepsilon_i\geq t) \drm t.\\
& = 2 \Eb [ \exp\left(  \sum_{i=1}^{n} w_i\varepsilon_i \right)  \1\{  \sum_{i=1}^{n} w_i\varepsilon_i  \geq0\} ].
\end{align*}
The proof is complete.
\end{proof}

\begin{lemma}\label{lem-Bernstein}
Let $X$ be a random variable satisfying $\Eb X=0$ and $|X| \leq M$ almost surely. Then, for any $0 <\lambda<3/M$, it holds
$$
\Eb e^{\lambda X} \leq \exp \left( \frac{\lambda^2 \Eb X^2}{2\left(1- \lambda M/3\right)}\right).
$$
\end{lemma}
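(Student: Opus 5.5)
The plan is the classical Bernstein moment-generating-function argument. Set $u=\lambda X$, so that $|u|\le \lambda M<3$, and expand
$$e^{u}=1+u+\sum_{k\ge2}\frac{u^k}{k!}.$$
Taking expectations and using $\Eb X=0$ kills the linear term, which leaves
$$\Eb e^{\lambda X}=1+\sum_{k\ge2}\frac{\lambda^k\Eb X^k}{k!}.$$

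The key step is to bound the higher moments by the variance and the sup bound. For $k\ge 2$ we have $|\Eb X^k|\le \Eb[X^2|X|^{k-2}]\le M^{k-2}\Eb X^2$. We also need the elementary factorial inequality $k!\ge 2\cdot 3^{k-2}$ for $k\ge2$, which follows by induction since each factor $j\ge3$ in $k!$ is at least $3$. Together these give
$$\sum_{k\ge2}\frac{\lambda^k\Eb X^k}{k!}\le \frac{\lambda^2\Eb X^2}{2}\sum_{k\ge2}\Bigl(\frac{\lambda M}{3}\Bigr)^{k-2}.$$
Because $0<\lambda M/3<1$, the geometric series converges, and the right-hand side equals
$$\frac{\lambda^2\Eb X^2}{2(1-\lambda M/3)}.$$

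To finish, apply $1+y\le e^{y}$ with $y$ equal to that quantity, which is exactly the claimed bound. The same computation also yields the pointwise inequality $e^x\le 1+x+\frac{x^2}{2(1-|x|/3)}$ for $|x|<3$, which is the form used in \Cref{lem-ope-1}.

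The only step needing care is the factorial inequality and keeping absolute values correct when $k$ is odd. Neither is a real obstacle, since bounding $|\Eb X^k|$ from above is all that is needed. Interchanging expectation and the series is justified by dominated convergence, because $|X|\le M$ makes $\sum_k \lambda^k|X|^k/k!$ bounded by $e^{\lambda M}$.
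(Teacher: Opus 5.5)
Your proof is correct. It is the standard Taylor-series argument: the moment bound $|\Eb X^k|\le M^{k-2}\Eb X^2$, the factorial bound $k!\ge 2\cdot 3^{k-2}$, summing the geometric series, and finishing with $1+y\le e^y$. The paper states this lemma without proof as a classical fact, and the pointwise inequality $e^x\le 1+x+\frac{x^2}{2(1-|x|/3)}$ you extract is exactly what it invokes in the Bernstein part of \Cref{lem-ope-1}, where the strict range $|x|<3$ you give, rather than the paper's $|x|\le 3$, is the correct one.
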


\begin{lemma}\label{lem-stepFunc}
For any cumulative distribution function $F$ and any $m\in\mathbb{N}_+$, there exits $\{s_1,\ldots,s_m\} \in \Rb^m$ such that, for a step function of the form
$$
F_m(t) \coloneq \frac{1}{m} \sum_{j=1}^m \1\{s_j \leq t\}, \; t\in\Rb,
$$
it holds that
$$
\| F - F_m \|_\infty \leq \frac{1}{2m}.
$$
\end{lemma}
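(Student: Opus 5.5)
We take the $s_j$ to be the midpoint quantiles of $F$ and then count how many of them lie below a given $t$.

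\textbf{Construction.} Let $F^{-1}(u)\coloneq\inf\{t\in\Rb: F(t)\geq u\}$ denote the generalized inverse. Set
$$
u_j\coloneq \frac{2j-1}{2m}, \qquad s_j\coloneq F^{-1}(u_j), \qquad j=1,\ldots,m.
$$
Since every $u_j$ lies in $(0,1)$ and $F(t)\to0$ as $t\to-\infty$ and $F(t)\to1$ as $t\to+\infty$, each $s_j$ is a finite real number.

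\textbf{Key step (Galois equivalence).} The first thing to verify is that, for $u\in(0,1)$ and $t\in\Rb$,
$$
F^{-1}(u)\leq t \iff u\leq F(t).
$$
For "$\Leftarrow$": if $u\leq F(t)$, then $t$ belongs to the set defining the infimum, so $F^{-1}(u)\leq t$. For "$\Rightarrow$": by right-continuity of $F$, the infimum is attained, so $F(F^{-1}(u))\geq u$. Monotonicity of $F$ then gives $F(t)\geq F(F^{-1}(u))\geq u$. This is the only place where the CDF properties are really used. I expect it to be the main (though mild) obstacle, since one has to handle the infimum carefully.

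\textbf{Counting.} By the equivalence, for every $t$,
$$
F_m(t)=\frac{1}{m}\#\{j\in[m]: u_j\leq F(t)\}.
$$
The condition $(2j-1)/(2m)\leq F(t)$ is the same as $j\leq mF(t)+\tfrac12$. Hence the count equals
$$
k=\min\bigl\{\lfloor mF(t)+\tfrac12\rfloor,\ m\bigr\}.
$$
Because $0\leq F(t)\leq 1$, we have $mF(t)+\tfrac12\in[\tfrac12,\,m+\tfrac12]$. So $\lfloor mF(t)+\tfrac12\rfloor\in\{0,\ldots,m\}$, and the minimum with $m$ never binds.

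\textbf{Conclusion.} For any real $x$, the nearest-integer rounding satisfies $|\lfloor x+\tfrac12\rfloor-x|\leq\tfrac12$. Applying this with $x=mF(t)$ gives $|mF_m(t)-mF(t)|\leq \tfrac12$, that is,
$$
|F_m(t)-F(t)|\leq\frac{1}{2m}.
$$
Taking the supremum over $t\in\Rb$ yields $\|F-F_m\|_\infty\leq 1/(2m)$.
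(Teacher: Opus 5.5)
Your proof is correct and is the paper's argument: the paper also takes $s_j$ to be the $\frac{2j-1}{2m}$-quantile of $F$, and shows that $F_m(t)=k/m$ whenever $F(t)\in[\frac{2k-1}{2m},\frac{2k+1}{2m})$, which is exactly your identity $mF_m(t)=\lfloor mF(t)+\tfrac12\rfloor$. Your Galois-equivalence and rounding formulation makes the right-continuity step explicit and handles the boundary cases $F(t)<\frac{1}{2m}$ and $F(t)\geq\frac{2m-1}{2m}$ uniformly; the paper leaves those cases to the reader.
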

\begin{proof}
For any $j\in[m]$, let $s_j$ be the $\frac{2j-1}{2m}$-quantile of $F$, i.e., $$s_j \coloneq \inf\{t: F^\pi(t) \geq \frac{2j-1}{2m} \}.$$
With these choices of $s_j$, we next show $|F^\pi(t) - F_m(t)| \leq \frac{1}{2m}$ for any $ t\in\Rb$. 

For $t$ such that $F^\pi(t)\in[\frac{2k-1}{2m}, \frac{2k+1}{2m})$ with some $k\in[m-1]$, we have
$$
s_k = \inf\{t: F^\pi(t) \geq \frac{2k-1}{2m} \} \leq t \text{ \quad and \quad}
s_{k+1} = \inf\{t: F^\pi(t) \geq \frac{2k+1}{2m} \} > t,
$$
which implies $F_m(t) = \frac{1}{m} \sum_{j=1}^m \1\{s_j \leq t\} = \frac{k}{m}$. Thus, $ |F^\pi(t) - F_m(t)| \leq \frac{1}{2m}$. One can similarly verify the correctness for $t$ such that $F^\pi(t)\in[0, \frac{1}{2m})\cup [\frac{2m-1}{2m}, 1]$. The proof is complete.
\end{proof}

\begin{lemma}[\citealt{natarajan1989learning} ]\label{lem-Natarajan}
Let $\Ss, \As$ be two finite sets and $\Pi$ be a set of functions from $\Ss$ to $\As$. Then, $|\Pi|\leq |\Ss|^{\mathrm{Ndim}(\Pi)}|\As|^{2\mathrm{Ndim}(\Pi)}$, where $\mathrm{Ndim}$ is the Natarajan dimension of $\Pi$.
\end{lemma}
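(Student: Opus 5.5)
The plan is the standard Sauer--Shelah-type shifting argument. I would establish the sharper counting bound
$$
|\Pi| \le \Phi(m,d) \coloneq \sum_{i=0}^{d} \binom{m}{i}\binom{K}{2}^{i}, \qquad m=|\Ss|,\; K=|\As|,\; d=\mathrm{Ndim}(\Pi),
$$
by induction on $m$, and then relax it to $m^{d}K^{2d}$.

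\textbf{Base cases.} If $d=0$, no single point is shattered. So at every $x$ all functions in $\Pi$ take the same value, and $|\Pi|\le 1=\Phi(m,0)$. If $m=1$, then trivially $|\Pi|\le K\le 1+\binom{K}{2}$, which covers $\Phi(1,d)$ for $d\ge1$.

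\textbf{Inductive step.} Fix a point $x\in\Ss$ and set $\Ss'=\Ss\setminus\{x\}$. Let $\Pi'$ be the set of restrictions of functions in $\Pi$ to $\Ss'$. For $g\in\Pi'$, let $V(g)\subset\As$ be the set of values $a$ such that the extension $(g,a)$ of $g$ to $\Ss$ lies in $\Pi$.
\begin{enumerate}[label=(\roman*)]
\item Counting: $|\Pi|=\sum_{g\in\Pi'}|V(g)|$. Since $k\le 1+\binom{k}{2}$ for every integer $k\ge1$, we get
$$|\Pi|\le |\Pi'|+\sum_{\{b,c\}\subset\As}|\Pi_{bc}|, \qquad \Pi_{bc}\coloneq\{g\in\Pi':\, b,c\in V(g)\}.$$
\item Dimension of $\Pi'$: clearly $\mathrm{Ndim}(\Pi')\le d$.
\item Dimension of $\Pi_{bc}$: the key claim is $\mathrm{Ndim}(\Pi_{bc})\le d-1$. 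Suppose $T\subset\Ss'$ is shattered by $\Pi_{bc}$ with witnesses $f_1,f_2$. Extend the witnesses by $f_1(x)=b$ and $f_2(x)=c$. Every pattern on $T\cup\{x\}$ is then realized in $\Pi$, because each $g\in\Pi_{bc}$ extends by both $b$ and $c$. Hence $T\cup\{x\}$ is shattered by $\Pi$, so $|T|\le d-1$.
\end{enumerate}
Applying the induction hypothesis on $\Ss'$ gives
$$\Phi(m,d)\le\Phi(m-1,d)+\binom{K}{2}\Phi(m-1,d-1),$$
and Pascal's rule $\binom{m}{i}=\binom{m-1}{i}+\binom{m-1}{i-1}$ closes the induction.

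\textbf{Relaxation.} I would bound $\binom{K}{2}^{i}\le K^{2d}/2^{i}$, so $\Phi(m,d)\le K^{2d}\sum_{i\le d}\binom{m}{i}2^{-i}$. It remains to show $\sum_{i\le d}\binom{m}{i}2^{-i}\le m^{d}$. This follows from $\binom{m}{i}\le m^{i}/i!$ when $m\ge2$; the small cases $m=1$ and $d=0$ were handled above.

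\textbf{Main obstacle.} The only delicate step is (iii): the lifting of shattering from $\Pi_{bc}$ to $\Pi$. One must check that the definition of shattering requires only the two witnesses to differ pointwise, and indeed $b\ne c$ supplies this at the new point $x$. Everything else is routine bookkeeping.
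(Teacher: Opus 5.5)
The paper gives no proof of this lemma: it is imported from Natarajan (1989) and used only as a black box in the union bounds over $\{(\pi(X_1),\ldots,\pi(X_n)):\pi\in\Pi\}$. Your argument is a correct self-contained proof. The three ingredients are all sound:
\begin{itemize}
\item the counting step $|\Pi|=\sum_{g}|V(g)|\le|\Pi'|+\sum_{\{b,c\}}|\Pi_{bc}|$, which uses $k\le 1+\binom{k}{2}$;
\item the lifting of a set $T$ shattered by $\Pi_{bc}$ to a set $T\cup\{x\}$ shattered by $\Pi$, with witnesses $b\neq c$ at $x$;
\item Pascal's rule to close the recursion.
\end{itemize}
What you actually establish is the sharper Haussler--Long count $\sum_{i\le d}\binom{m}{i}\binom{K}{2}^i$, which you then relax. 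A common alternative runs the same split but inducts directly on the target bound:
$(m-1)^dK^{2d}+K^2(m-1)^{d-1}K^{2d-2}=m(m-1)^{d-1}K^{2d}\le m^dK^{2d}$.
That version avoids the relaxation step entirely. Your route costs that extra step but yields a tighter intermediate bound, which is also manifestly monotone in $d$ and $m$, as the paper's application requires.

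Two small repairs are needed.

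First, the displayed recursion should read $|\Pi|\le\Phi(m-1,d)+\binom{K}{2}\Phi(m-1,d-1)=\Phi(m,d)$. As written, $\Phi(m,d)\le\cdots$ is not the inequality you need.

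Second, in the relaxation the case $d=1$ is tight: $1+m/2\le m$ holds only for $m\ge 2$. So the estimate via $\binom{m}{i}\le m^i/i!$ should be carried out explicitly. Using $m^{i}\le m^d2^{i-d}$ for $m\ge2$ and $i\le d$, you get $\sum_{i\le d}\binom{m}{i}2^{-i}\le m^d\,2^{-d}\sum_{i\le d}1/i!$. The factor $2^{-d}\sum_{i\le d}1/i!$ equals $1$ at $d=1$ and is at most $e/4$ for $d\ge2$. The cruder bound $e/2^d$ would not suffice at $d=1$.
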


\begin{lemma}[Hoeffding's inequality]\label{lem-Hoeffding}
Let $ X_1,\ldots,X_n$ be independent random variables such that $a_i\leq X_i\leq b_i$ almost surely. Let $S_n\coloneq\sum_{i=1}^n X_i$. It holds for any $t>0$ that
$$
\Pb(|S_n - \Eb S_n| \geq t ) \leq 2\exp\left(-\frac{2t^2}{\sum_{i=1}^n (b_i-a_i)^2 }\right).
$$
\end{lemma}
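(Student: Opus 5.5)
The plan is the standard Chernoff argument, organized around Hoeffding's lemma for a single bounded variable. Set $Z_i\coloneq X_i-\Eb X_i$, so that each $Z_i$ is centered, independent of the others, and lies in the interval $[a_i-\Eb X_i,\,b_i-\Eb X_i]$, whose length is still $b_i-a_i$. Then $S_n-\Eb S_n=\sum_{i=1}^n Z_i$. By symmetry it suffices to bound the upper tail $\Pb(\sum_i Z_i\ge t)$ by $\exp(-2t^2/\sum_i(b_i-a_i)^2)$. The same bound for the lower tail follows by applying the argument to $-X_i$, which lie in $[-b_i,-a_i]$ with the same interval lengths. A union bound over the two tails then gives the factor $2$.

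The key step is Hoeffding's lemma: if $Z$ is centered with $a\le Z\le b$, then $\Eb e^{\lambda Z}\le \exp(\lambda^2(b-a)^2/8)$ for all $\lambda\in\Rb$. I would prove it by convexity. First, for $z\in[a,b]$,
\begin{equation*}
e^{\lambda z}\le \frac{b-z}{b-a}e^{\lambda a}+\frac{z-a}{b-a}e^{\lambda b}.
\end{equation*}
Taking expectations and using $\Eb Z=0$ gives $\Eb e^{\lambda Z}\le e^{\phi(u)}$, where $u=\lambda(b-a)$, $p=-a/(b-a)\in[0,1]$, and $\phi(u)=-pu+\log(1-p+pe^u)$. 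A direct computation shows $\phi(0)=\phi'(0)=0$. Moreover $\phi''(u)=q(1-q)\le 1/4$, where $q=pe^u/(1-p+pe^u)\in[0,1]$. Taylor's theorem with remainder then yields $\phi(u)\le u^2/8$. This second-derivative bound is the only nonroutine computation, and I expect it to be the main (mild) obstacle.

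Finally, I would apply Markov's inequality to $e^{\lambda\sum_i Z_i}$ and use independence to factor the moment generating function:
\begin{equation*}
\Pb\Bigl(\sum_i Z_i\ge t\Bigr)\le e^{-\lambda t}\prod_{i=1}^n\Eb e^{\lambda Z_i}\le \exp\Bigl(-\lambda t+\frac{\lambda^2}{8}\sum_{i=1}^n (b_i-a_i)^2\Bigr).
\end{equation*}
Optimizing over $\lambda>0$ gives $\lambda=4t/\sum_i(b_i-a_i)^2$, and hence the one-sided bound $\exp(-2t^2/\sum_i(b_i-a_i)^2)$. Combining the two tails as described above completes the proof.
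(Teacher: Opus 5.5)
Your proof is correct. The paper states this lemma without proof as a classical result, and your argument (Hoeffding's lemma via the convexity bound and $\phi''=q(1-q)\le 1/4$, then Chernoff and a union bound over both tails) is the standard textbook proof. The one case left implicit is $b_i=a_i$: there $Z_i=0$ almost surely, so $\Eb e^{\lambda Z_i}=1$ holds trivially, and if every $b_i=a_i$ both sides are zero under the convention $\exp(-\infty)=0$.
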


\begin{lemma}\label{lem-monotone-trans}	
For any c.d.f. $F_0$ and any function $F:\Rb\mapsto\Rb$ with finite $\| F - F_0 \|_\infty$, define the monotone transformation of $F$ as
$
F_\mathrm{M}(t) \coloneq \sup_{t'\leq t} F(t').
$
It holds that 
$$
\| F_\mathrm{M} - F_0 \|_\infty \leq \| F - F_0 \|_\infty. 
$$
Moreover, we have $ \| \max\{ \min\{F_\mathrm{M}, 1\}, 0 \} - F_0 \|_\infty \leq \| F - F_0 \|_\infty. $
\end{lemma}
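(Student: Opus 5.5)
Write $e\coloneq\|F-F_0\|_\infty<\infty$. The plan is to prove the pointwise two-sided bound $F_0(t)-e\le F_\mathrm{M}(t)\le F_0(t)+e$ for every $t\in\Rb$, and then to show that clipping to $[0,1]$ cannot increase the pointwise distance to $F_0(t)$. Only two facts about $F_0$ are needed: it is non-decreasing, and it takes values in $[0,1]$.

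\emph{Lower bound.} Taking $t'=t$ in the supremum gives $F_\mathrm{M}(t)\ge F(t)\ge F_0(t)-e$.

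\emph{Upper bound.} For every $t'\le t$, the definition of $e$ and the monotonicity of the CDF $F_0$ give
\[
F(t')\le F_0(t')+e\le F_0(t)+e .
\]
Taking the supremum over $t'\le t$ yields $F_\mathrm{M}(t)\le F_0(t)+e$. In particular $F_\mathrm{M}(t)$ is finite for every $t$, so $F_\mathrm{M}$ is a well-defined real-valued function. Combining the two bounds gives $|F_\mathrm{M}(t)-F_0(t)|\le e$ for all $t$, and taking the supremum over $t$ proves the first claim.

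\emph{Clipping.} Let $c(x)\coloneq\max\{\min\{x,1\},0\}$. This is the Euclidean projection of $\Rb$ onto $[0,1]$, so it is $1$-Lipschitz. Since $F_0(t)\in[0,1]$, we have $c(F_0(t))=F_0(t)$. Hence, for every $t$,
\[
|c(F_\mathrm{M}(t))-F_0(t)|=|c(F_\mathrm{M}(t))-c(F_0(t))|\le|F_\mathrm{M}(t)-F_0(t)|\le e .
\]
Taking the supremum over $t$ gives the second claim.

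There is no real obstacle in this argument. The one point that needs care is that the upper bound genuinely uses the monotonicity of $F_0$: it is what allows us to replace $F_0(t')$ by $F_0(t)$ for $t'\le t$. The same upper bound is also what guarantees that $F_\mathrm{M}$ is finite, even though $F$ itself is an arbitrary function.
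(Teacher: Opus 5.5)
Your proof is correct and uses the same two ingredients as the paper's proof: $F_\mathrm{M}(t)\ge F(t)$ gives the lower side, and the monotonicity of $F_0$ gives the upper side. The paper instead assumes the supremum is attained at some $s\le t$ and splits into cases on the sign of $F_\mathrm{M}(t)-F_0(t)$; your route bounds every $F(t')$ before taking the supremum, which removes that WLOG step, and your $1$-Lipschitz projection argument spells out the clipping claim the paper calls obvious.
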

\begin{proof}
For any $t\in\Rb$, WLOG, assume there exists $s \leq t$ s.t. $F_\mathrm{M}(t) = F(s)$. If $F(s) \geq F_0(t)$, then $ |F_\mathrm{M}(t) - F_0(t)| = F(s) - F_0(t) \leq F(s) - F_0(s) \leq \| F - F_0 \|_\infty $ since $F_0$ is monotone. If  $F(s) < F_0(t)$, we also have $ |F_\mathrm{M}(t) - F_0(t)| = F_0(t) - F(s) \leq F_0(t) - F^\pi(t) \leq \| F - F_0 \|_\infty $ since $F(s) = \sup_{t'\leq t} F(t')$. Therefore, $| F_\mathrm{M}(t) - F_0(t) | \leq \| F - F_0 \|_\infty $ for all $t\in\Rb$. The second claim is obvious.
\end{proof}

%%%%%%%%%%%%%%%%%%%%%%%%%%%%%%%%%%%%%%%%%%%%%%%%%%%%%%%%%%%%%%%%%%%%%%%%%%%%%%%%%%%%%%%%
%                                  Bibliography
%%%%%%%%%%%%%%%%%%%%%%%%%%%%%%%%%%%%%%%%%%%%%%%%%%%%%%%%%%%%%%%%%%%%%%%%%%%%%%%%%%%%%%%%
%\nocite{}                             %单独列出某一未被引用的文献
%\nocite{*}                            %列出所有未被引用的文献
%\bibliographystyle{plain}             %使用\usepackage{cite}时要用，和{plainnat}二选一
\bibliographystyle{plainnat}           %使用\usepackage{natbib}时要用
\bibliography{Bib_Wan.bib}

\begin{thebibliography}{48}
\providecommand{\natexlab}[1]{#1}
\providecommand{\url}[1]{\texttt{#1}}
\expandafter\ifx\csname urlstyle\endcsname\relax
  \providecommand{\doi}[1]{doi: #1}\else
  \providecommand{\doi}{doi: \begingroup \urlstyle{rm}\Url}\fi

\bibitem[Artzner et~al.(1999)Artzner, Delbaen, Eber, and
  Heath]{artzner1999coherent}
Philippe Artzner, Freddy Delbaen, Jean-Marc Eber, and David Heath.
\newblock Coherent measures of risk.
\newblock \emph{Mathematical Finance}, 9\penalty0 (3):\penalty0 203--228, 1999.
\newblock \doi{https://doi.org/10.1111/1467-9965.00068}.
\newblock URL
  \url{https://onlinelibrary.wiley.com/doi/abs/10.1111/1467-9965.00068}.

\bibitem[Athey and Wager(2021)]{athey2021policy}
Susan Athey and Stefan Wager.
\newblock Policy learning with observational data.
\newblock \emph{Econometrica}, 89\penalty0 (1):\penalty0 pp. 133--161, 2021.
\newblock ISSN 00129682, 14680262.
\newblock URL \url{https://www.jstor.org/stable/48628848}.

\bibitem[Bastani et~al.(2022)Bastani, Ma, Shen, and Xu]{bastani2022regret}
Osbert Bastani, Yecheng~Jason Ma, Estelle Shen, and Wanqiao Xu.
\newblock Regret bounds for risk-sensitive reinforcement learning.
\newblock In Alice~H. Oh, Alekh Agarwal, Danielle Belgrave, and Kyunghyun Cho,
  editors, \emph{Advances in Neural Information Processing Systems}, 2022.
\newblock URL \url{https://openreview.net/forum?id=yJEUDfzsTX7}.

\bibitem[Bellemare et~al.(2017)Bellemare, Dabney, and Munos]{bellemare2017a}
Marc~G. Bellemare, Will Dabney, and R{\'e}mi Munos.
\newblock A distributional perspective on reinforcement learning.
\newblock In Doina Precup and Yee~Whye Teh, editors, \emph{Proceedings of the
  34th International Conference on Machine Learning}, volume~70 of
  \emph{Proceedings of Machine Learning Research}, pages 449--458. PMLR, 06--11
  Aug 2017.
\newblock URL \url{https://proceedings.mlr.press/v70/bellemare17a.html}.

\bibitem[Bertsimas and Kallus(2019)]{bertsimas2020predictive}
Dimitris Bertsimas and Nathan Kallus.
\newblock From predictive to prescriptive analytics.
\newblock \emph{Management Science}, 66\penalty0 (3):\penalty0 1025--1044,
  2019.
\newblock URL \url{https://doi.org/10.1287/mnsc.2018.3253}.

\bibitem[Bottou et~al.(2013)Bottou, Peters, Qui{{\~n}}onero-Candela, Charles,
  Chickering, Portugaly, Ray, Simard, and Snelson]{bottou2013acounterfactual}
L{{\'e}}on Bottou, Jonas Peters, Joaquin Qui{{\~n}}onero-Candela, Denis~X.
  Charles, D.~Max Chickering, Elon Portugaly, Dipankar Ray, Patrice Simard, and
  Ed~Snelson.
\newblock Counterfactual reasoning and learning systems: The example of
  computational advertising.
\newblock \emph{Journal of Machine Learning Research}, 14\penalty0
  (101):\penalty0 3207--3260, 2013.
\newblock URL \url{http://jmlr.org/papers/v14/bottou13a.html}.

\bibitem[Buckman et~al.(2021)Buckman, Gelada, and Bellemare]{buckman2021the}
Jacob Buckman, Carles Gelada, and Marc~G Bellemare.
\newblock The importance of pessimism in fixed-dataset policy optimization.
\newblock In \emph{International Conference on Learning Representations}, 2021.
\newblock URL \url{https://openreview.net/forum?id=E3Ys6a1NTGT}.

\bibitem[Cassel et~al.(2018)Cassel, Mannor, and Zeevi]{cassel2018a}
Asaf Cassel, Shie Mannor, and Assaf Zeevi.
\newblock A general approach to multi-armed bandits under risk criteria.
\newblock In Sébastien Bubeck, Vianney Perchet, and Philippe Rigollet,
  editors, \emph{Proceedings of the 31st Conference On Learning Theory},
  volume~75 of \emph{Proceedings of Machine Learning Research}, pages
  1295--1306. PMLR, 06--09 Jul 2018.
\newblock URL \url{https://proceedings.mlr.press/v75/cassel18a.html}.

\bibitem[Chandak et~al.(2021)Chandak, Niekum, da~Silva, Learned-Miller,
  Brunskill, and Thomas]{chandak2021universal}
Yash Chandak, Scott Niekum, Bruno da~Silva, Erik Learned-Miller, Emma
  Brunskill, and Philip~S. Thomas.
\newblock Universal off-policy evaluation.
\newblock In M.~Ranzato, A.~Beygelzimer, Y.~Dauphin, P.S. Liang, and J.~Wortman
  Vaughan, editors, \emph{Advances in Neural Information Processing Systems},
  volume~34, pages 27475--27490. Curran Associates, Inc., 2021.
\newblock URL
  \url{https://proceedings.neurips.cc/paper_files/paper/2021/file/e71e5cd119bbc5797164fb0cd7fd94a4-Paper.pdf}.

\bibitem[Dabney et~al.(2018{\natexlab{a}})Dabney, Ostrovski, Silver, and
  Munos]{dabney2018implicit}
Will Dabney, Georg Ostrovski, David Silver, and Remi Munos.
\newblock Implicit quantile networks for distributional reinforcement learning.
\newblock In Jennifer Dy and Andreas Krause, editors, \emph{Proceedings of the
  35th International Conference on Machine Learning}, volume~80 of
  \emph{Proceedings of Machine Learning Research}, pages 1096--1105. PMLR,
  10--15 Jul 2018{\natexlab{a}}.
\newblock URL \url{https://proceedings.mlr.press/v80/dabney18a.html}.

\bibitem[Dabney et~al.(2018{\natexlab{b}})Dabney, Rowland, Bellemare, and
  Munos]{dabney2018distributional}
Will Dabney, Mark Rowland, Marc Bellemare, and Rémi Munos.
\newblock Distributional reinforcement learning with quantile regression.
\newblock \emph{Proceedings of the AAAI Conference on Artificial Intelligence},
  32\penalty0 (1), Apr. 2018{\natexlab{b}}.
\newblock \doi{10.1609/aaai.v32i1.11791}.
\newblock URL \url{https://ojs.aaai.org/index.php/AAAI/article/view/11791}.

\bibitem[Daniely et~al.(2011)Daniely, Sabato, Ben-David, and
  Shalev-Shwartz]{daniely2011multiclass}
Amit Daniely, Sivan Sabato, Shai Ben-David, and Shai Shalev-Shwartz.
\newblock Multiclass learnability and the erm principle.
\newblock In Sham~M. Kakade and Ulrike von Luxburg, editors, \emph{Proceedings
  of the 24th Annual Conference on Learning Theory}, volume~19 of
  \emph{Proceedings of Machine Learning Research}, pages 207--232, Budapest,
  Hungary, 09--11 Jun 2011. PMLR.
\newblock URL \url{https://proceedings.mlr.press/v19/daniely11a.html}.

\bibitem[Dud\'{\i}k et~al.(2011)Dud\'{\i}k, Langford, and Li]{dudik2011doubly}
Miroslav Dud\'{\i}k, John Langford, and Lihong Li.
\newblock Doubly robust policy evaluation and learning.
\newblock In \emph{Proceedings of the 28th International Conference on
  International Conference on Machine Learning}, ICML'11, page 1097–1104,
  Madison, WI, USA, 2011. Omnipress.
\newblock ISBN 9781450306195.

\bibitem[Fei and Xu(2022)]{fei2022cascaded}
Yingjie Fei and Ruitu Xu.
\newblock Cascaded gaps: Towards logarithmic regret for risk-sensitive
  reinforcement learning.
\newblock In Kamalika Chaudhuri, Stefanie Jegelka, Le~Song, Csaba Szepesvari,
  Gang Niu, and Sivan Sabato, editors, \emph{Proceedings of the 39th
  International Conference on Machine Learning}, volume 162 of
  \emph{Proceedings of Machine Learning Research}, pages 6392--6417. PMLR,
  17--23 Jul 2022.
\newblock URL \url{https://proceedings.mlr.press/v162/fei22b.html}.

\bibitem[Fei et~al.(2020)Fei, Yang, Chen, Wang, and Xie]{fei2020risk}
Yingjie Fei, Zhuoran Yang, Yudong Chen, Zhaoran Wang, and Qiaomin Xie.
\newblock Risk-sensitive reinforcement learning: near-optimal risk-sample
  tradeoff in regret.
\newblock In \emph{Proceedings of the 34th International Conference on Neural
  Information Processing Systems}, NIPS '20, Red Hook, NY, USA, 2020. Curran
  Associates Inc.
\newblock ISBN 9781713829546.

\bibitem[Howard and Matheson(1972)]{howard1972risk-sensitive}
Ronald~A. Howard and James~E. Matheson.
\newblock Risk-sensitive markov decision processes.
\newblock \emph{Management Science}, 18\penalty0 (7):\penalty0 356--369, 1972.
\newblock ISSN 00251909, 15265501.
\newblock URL \url{http://www.jstor.org/stable/2629352}.

\bibitem[Huang et~al.(2021)Huang, Leqi, Lipton, and
  Azizzadenesheli]{huang2021off}
Audrey Huang, Liu Leqi, Zachary Lipton, and Kamyar Azizzadenesheli.
\newblock Off-policy risk assessment in contextual bandits.
\newblock In M.~Ranzato, A.~Beygelzimer, Y.~Dauphin, P.S. Liang, and J.~Wortman
  Vaughan, editors, \emph{Advances in Neural Information Processing Systems},
  volume~34, pages 23714--23726. Curran Associates, Inc., 2021.
\newblock URL
  \url{https://proceedings.neurips.cc/paper_files/paper/2021/file/c7502c55f8db540625b59d9a42638520-Paper.pdf}.

\bibitem[Huang et~al.(2022)Huang, Leqi, Lipton, and
  Azizzadenesheli]{huang2022offpolicy}
Audrey Huang, Liu Leqi, Zachary Lipton, and Kamyar Azizzadenesheli.
\newblock Off-policy risk assessment for markov decision processes.
\newblock In Gustau Camps-Valls, Francisco J.~R. Ruiz, and Isabel Valera,
  editors, \emph{Proceedings of The 25th International Conference on Artificial
  Intelligence and Statistics}, volume 151 of \emph{Proceedings of Machine
  Learning Research}, pages 5022--5050. PMLR, 28--30 Mar 2022.
\newblock URL \url{https://proceedings.mlr.press/v151/huang22b.html}.

\bibitem[Jin(2023)]{jin2023natarajan}
Ying Jin.
\newblock Upper bounds on the natarajan dimensions of some function classes.
\newblock In \emph{2023 IEEE International Symposium on Information Theory
  (ISIT)}, pages 1020--1025, 2023.
\newblock \doi{10.1109/ISIT54713.2023.10206618}.

\bibitem[Jin et~al.(2021)Jin, Yang, and Wang]{jin2021is}
Ying Jin, Zhuoran Yang, and Zhaoran Wang.
\newblock Is pessimism provably efficient for offline rl?
\newblock In Marina Meila and Tong Zhang, editors, \emph{Proceedings of the
  38th International Conference on Machine Learning}, volume 139 of
  \emph{Proceedings of Machine Learning Research}, pages 5084--5096. PMLR,
  18--24 Jul 2021.
\newblock URL \url{https://proceedings.mlr.press/v139/jin21e.html}.

\bibitem[Jin et~al.(2025)Jin, Ren, Yang, and Wang]{jin2025policy}
Ying Jin, Zhimei Ren, Zhuoran Yang, and Zhaoran Wang.
\newblock {Policy learning “without” overlap: Pessimism and generalized
  empirical Bernstein’s inequality}.
\newblock \emph{The Annals of Statistics}, 53\penalty0 (4):\penalty0 1483 --
  1512, 2025.
\newblock \doi{10.1214/25-AOS2511}.
\newblock URL \url{https://doi.org/10.1214/25-AOS2511}.

\bibitem[Kallus(2018)]{kallus2018balanced}
Nathan Kallus.
\newblock Balanced policy evaluation and learning.
\newblock In S.~Bengio, H.~Wallach, H.~Larochelle, K.~Grauman, N.~Cesa-Bianchi,
  and R.~Garnett, editors, \emph{Advances in Neural Information Processing
  Systems}, volume~31. Curran Associates, Inc., 2018.
\newblock URL
  \url{https://proceedings.neurips.cc/paper_files/paper/2018/file/6616758da438b02b8d360ad83a5b3d77-Paper.pdf}.

\bibitem[Kennedy(2019)]{kennedy2019nonparametric}
Edward~H. Kennedy.
\newblock Nonparametric causal effects based on incremental propensity score
  interventions.
\newblock \emph{Journal of the American Statistical Association}, 114\penalty0
  (526):\penalty0 645--656, 2019.
\newblock \doi{10.1080/01621459.2017.1422737}.
\newblock URL \url{https://doi.org/10.1080/01621459.2017.1422737}.

\bibitem[Keramati et~al.(2020)Keramati, Dann, Tamkin, and
  Brunskill]{keramati2020being}
Ramtin Keramati, Christoph Dann, Alex Tamkin, and Emma Brunskill.
\newblock Being optimistic to be conservative: Quickly learning a cvar policy.
\newblock \emph{Proceedings of the AAAI Conference on Artificial Intelligence},
  34\penalty0 (04):\penalty0 4436--4443, Apr. 2020.
\newblock \doi{10.1609/aaai.v34i04.5870}.
\newblock URL \url{https://ojs.aaai.org/index.php/AAAI/article/view/5870}.

\bibitem[Kitagawa and Tetenov(2018)]{kitagawa2018who}
Toru Kitagawa and Aleksey Tetenov.
\newblock Who should be treated? empirical welfare maximization methods for
  treatment choice.
\newblock \emph{Econometrica}, 86\penalty0 (2):\penalty0 591--616, 2018.
\newblock ISSN 00129682, 14680262.
\newblock URL \url{http://www.jstor.org/stable/44955978}.

\bibitem[Kusuoka(2001)]{kusuoka2001on}
Shigeo Kusuoka.
\newblock \emph{On law invariant coherent risk measures}, pages 83--95.
\newblock Springer Japan, Tokyo, 2001.
\newblock ISBN 978-4-431-67891-5.
\newblock \doi{10.1007/978-4-431-67891-5_4}.
\newblock URL \url{https://doi.org/10.1007/978-4-431-67891-5_4}.

\bibitem[L.A. and Bhat(2022)]{prashanth2022a}
Prashanth L.A. and Sanjay~P. Bhat.
\newblock A wasserstein distance approach for concentration of empirical risk
  estimates.
\newblock \emph{Journal of Machine Learning Research}, 23\penalty0
  (238):\penalty0 1--61, 2022.
\newblock URL \url{http://jmlr.org/papers/v23/20-965.html}.

\bibitem[L.A. et~al.(2016)L.A., Jie, Fu, Marcus, and
  Szepesvari]{prashanth2016cumulative}
Prashanth L.A., Cheng Jie, Michael Fu, Steve Marcus, and Csaba Szepesvari.
\newblock Cumulative prospect theory meets reinforcement learning: Prediction
  and control.
\newblock In Maria~Florina Balcan and Kilian~Q. Weinberger, editors,
  \emph{Proceedings of The 33rd International Conference on Machine Learning},
  volume~48 of \emph{Proceedings of Machine Learning Research}, pages
  1406--1415, New York, New York, USA, 20--22 Jun 2016. PMLR.
\newblock URL \url{https://proceedings.mlr.press/v48/la16.html}.

\bibitem[Lattimore and Szepesvári(2020)]{lattimore2020bandit}
Tor Lattimore and Csaba Szepesvári.
\newblock \emph{Bandit Algorithms}.
\newblock Cambridge University Press, 2020.

\bibitem[Li et~al.(2010)Li, Chu, Langford, and Schapire]{li2010a}
Lihong Li, Wei Chu, John Langford, and Robert~E. Schapire.
\newblock A contextual-bandit approach to personalized news article
  recommendation.
\newblock In \emph{Proceedings of the 19th International Conference on World
  Wide Web}, WWW '10, page 661–670, New York, NY, USA, 2010. Association for
  Computing Machinery.
\newblock ISBN 9781605587998.
\newblock \doi{10.1145/1772690.1772758}.
\newblock URL \url{https://doi.org/10.1145/1772690.1772758}.

\bibitem[Liang and Luo(2024)]{liang2024bridging}
Hao Liang and Zhi-Quan Luo.
\newblock Bridging distributional and risk-sensitive reinforcement learning
  with provable regret bounds.
\newblock \emph{Journal of Machine Learning Research}, 25\penalty0
  (221):\penalty0 1--56, 2024.
\newblock URL \url{http://jmlr.org/papers/v25/22-1253.html}.

\bibitem[Ma et~al.(2021)Ma, Jayaraman, and Bastani]{ma2021conservative}
Yecheng Ma, Dinesh Jayaraman, and Osbert Bastani.
\newblock Conservative offline distributional reinforcement learning.
\newblock In M.~Ranzato, A.~Beygelzimer, Y.~Dauphin, P.S. Liang, and J.~Wortman
  Vaughan, editors, \emph{Advances in Neural Information Processing Systems},
  volume~34, pages 19235--19247. Curran Associates, Inc., 2021.
\newblock URL
  \url{https://proceedings.neurips.cc/paper_files/paper/2021/file/a05d886123a54de3ca4b0985b718fb9b-Paper.pdf}.

\bibitem[Mannor and Tsitsiklis(2011)]{mannor2011mean}
Shie Mannor and John~N Tsitsiklis.
\newblock Mean-variance optimization in markov decision processes.
\newblock In \emph{Proceedings of the 28th International Conference on
  International Conference on Machine Learning}, pages 177--184, 2011.

\bibitem[Murphy(2003)]{murphy2003optimal}
S.~A. Murphy.
\newblock Optimal dynamic treatment regimes.
\newblock \emph{Journal of the Royal Statistical Society. Series B (Statistical
  Methodology)}, 65\penalty0 (2):\penalty0 331--366, 2003.
\newblock ISSN 13697412, 14679868.
\newblock URL \url{http://www.jstor.org/stable/3647509}.

\bibitem[Natarajan(1989)]{natarajan1989learning}
Balas~K Natarajan.
\newblock On learning sets and functions.
\newblock \emph{Machine Learning}, 4\penalty0 (1):\penalty0 67--97, 1989.

\bibitem[Precup et~al.(2000)Precup, Sutton, and Singh]{precup2000eligibility}
Doina Precup, Richard~S Sutton, and Satinder Singh.
\newblock Eligibility traces for off-policy policy evaluation.
\newblock In \emph{ICML}, volume 2000, pages 759--766. Citeseer, 2000.

\bibitem[Rigter et~al.(2023)Rigter, Lacerda, and Hawes]{rigter2023one}
Marc Rigter, Bruno Lacerda, and Nick Hawes.
\newblock One risk to rule them all: A risk-sensitive perspective on
  model-based offline reinforcement learning.
\newblock In A.~Oh, T.~Naumann, A.~Globerson, K.~Saenko, M.~Hardt, and
  S.~Levine, editors, \emph{Advances in Neural Information Processing Systems},
  volume~36, pages 77520--77545. Curran Associates, Inc., 2023.
\newblock URL
  \url{https://proceedings.neurips.cc/paper_files/paper/2023/file/f49287371916715b9209fa41a275851e-Paper-Conference.pdf}.

\bibitem[Rockafellar et~al.(2000)Rockafellar, Uryasev,
  et~al.]{rockafellar2000optimization}
R~Tyrrell Rockafellar, Stanislav Uryasev, et~al.
\newblock Optimization of conditional value-at-risk.
\newblock \emph{Journal of risk}, 2:\penalty0 21--42, 2000.

\bibitem[Sani et~al.(2012)Sani, Lazaric, and Munos]{sani2012risk}
Amir Sani, Alessandro Lazaric, and R\'{e}mi Munos.
\newblock Risk-aversion in multi-armed bandits.
\newblock In F.~Pereira, C.J. Burges, L.~Bottou, and K.Q. Weinberger, editors,
  \emph{Advances in Neural Information Processing Systems}, volume~25. Curran
  Associates, Inc., 2012.
\newblock URL
  \url{https://proceedings.neurips.cc/paper_files/paper/2012/file/83f2550373f2f19492aa30fbd5b57512-Paper.pdf}.

\bibitem[Swaminathan and Joachims(2015)]{swaminathan2015batch}
Adith Swaminathan and Thorsten Joachims.
\newblock Batch learning from logged bandit feedback through counterfactual
  risk minimization.
\newblock \emph{Journal of Machine Learning Research}, 16\penalty0
  (52):\penalty0 1731--1755, 2015.
\newblock URL \url{http://jmlr.org/papers/v16/swaminathan15a.html}.

\bibitem[Tamar et~al.(2016)Tamar, Castro, and Mannor]{tamar2016learning}
Aviv Tamar, Dotan~Di Castro, and Shie Mannor.
\newblock Learning the variance of the reward-to-go.
\newblock \emph{Journal of Machine Learning Research}, 17\penalty0
  (13):\penalty0 1--36, 2016.
\newblock URL \url{http://jmlr.org/papers/v17/14-335.html}.

\bibitem[Urp{\'\i} et~al.(2021)Urp{\'\i}, Curi, and
  Krause]{nuria2021riskaverse}
N{\'u}ria~Armengol Urp{\'\i}, Sebastian Curi, and Andreas Krause.
\newblock Risk-averse offline reinforcement learning.
\newblock In \emph{International Conference on Learning Representations}, 2021.
\newblock URL \url{https://openreview.net/forum?id=TBIzh9b5eaz}.

\bibitem[Villani(2009)]{villani2009the}
C{\'e}dric Villani.
\newblock \emph{The Wasserstein distances}, pages 93--111.
\newblock Springer Berlin Heidelberg, Berlin, Heidelberg, 2009.
\newblock ISBN 978-3-540-71050-9.
\newblock \doi{10.1007/978-3-540-71050-9_6}.
\newblock URL \url{https://doi.org/10.1007/978-3-540-71050-9_6}.

\bibitem[Wang et~al.(2023)Wang, Kallus, and Sun]{wang2023near}
Kaiwen Wang, Nathan Kallus, and Wen Sun.
\newblock Near-minimax-optimal risk-sensitive reinforcement learning with
  {CV}a{R}.
\newblock In Andreas Krause, Emma Brunskill, Kyunghyun Cho, Barbara Engelhardt,
  Sivan Sabato, and Jonathan Scarlett, editors, \emph{Proceedings of the 40th
  International Conference on Machine Learning}, volume 202 of
  \emph{Proceedings of Machine Learning Research}, pages 35864--35907. PMLR,
  23--29 Jul 2023.
\newblock URL \url{https://proceedings.mlr.press/v202/wang23m.html}.

\bibitem[Zhan et~al.(2024)Zhan, Ren, Athey, and Zhou]{zhan2024policy}
Ruohan Zhan, Zhimei Ren, Susan Athey, and Zhengyuan Zhou.
\newblock Policy learning with adaptively collected data.
\newblock \emph{Management Science}, 70\penalty0 (8):\penalty0 5270--5297,
  2024.
\newblock URL \url{https://doi.org/10.1287/mnsc.2023.4921}.

\bibitem[Zhang et~al.(2024)Zhang, Lyu, Qiu, Kolar, and
  Zhang]{zhang2024pessimism}
Dake Zhang, Boxiang Lyu, Shuang Qiu, Mladen Kolar, and Tong Zhang.
\newblock Pessimism meets risk: Risk-sensitive offline reinforcement learning.
\newblock In Ruslan Salakhutdinov, Zico Kolter, Katherine Heller, Adrian
  Weller, Nuria Oliver, Jonathan Scarlett, and Felix Berkenkamp, editors,
  \emph{Proceedings of the 41st International Conference on Machine Learning},
  volume 235 of \emph{Proceedings of Machine Learning Research}, pages
  59459--59489. PMLR, 21--27 Jul 2024.
\newblock URL \url{https://proceedings.mlr.press/v235/zhang24aq.html}.

\bibitem[Zhao et~al.(2024)Zhao, Chambaz, Josse, and Yang]{zhao2024positivity}
Pan Zhao, Antoine Chambaz, Julie Josse, and Shu Yang.
\newblock Positivity-free policy learning with observational data.
\newblock In Sanjoy Dasgupta, Stephan Mandt, and Yingzhen Li, editors,
  \emph{Proceedings of The 27th International Conference on Artificial
  Intelligence and Statistics}, volume 238 of \emph{Proceedings of Machine
  Learning Research}, pages 1918--1926. PMLR, 02--04 May 2024.
\newblock URL \url{https://proceedings.mlr.press/v238/zhao24a.html}.

\bibitem[Zhou et~al.(2023)Zhou, Athey, and Wager]{zhou2023offline}
Zhengyuan Zhou, Susan Athey, and Stefan Wager.
\newblock Offline multi-action policy learning: Generalization and
  optimization.
\newblock \emph{Operations Research}, 71\penalty0 (1):\penalty0 148--183, 2023.
\newblock \doi{10.1287/opre.2022.2271}.
\newblock URL \url{https://doi.org/10.1287/opre.2022.2271}.

\end{thebibliography}

%%%%%%%%%%%%%%%%%%%%%%%%%%%%%%%%%%%%%%%%%%%%%%%%%%%%%%%%%%%%%%%%%%%%%%%%%%%%%%%
%%%%%%%%%%%%%%%%%%%%%%%%%%%%%%%%%%%%%%%%%%%%%%%%%%%%%%%%%%%%%%%%%%%%%%%%%%%%%%%

\end{document}